\title{\LARGE\bfseries Concentration of Contractive Stochastic Approximation: Additive and Multiplicative Noise}
\author{Zaiwei Chen\textsuperscript{1} Siva Theja Maguluri\textsuperscript{2}, and Martin Zubeldia\textsuperscript{3}\\
{\small
\textsuperscript{1}\textit{School of Industrial Engineering, Purdue University,} \href{mailto:chen5252@purdue.edu}{\textit{chen5252@purdue.edu}}}\\
{\small\textsuperscript{2}\textit{School of Industrial \& Systems Engineering, Georgia Institute of Technology,} \href{mailto:siva.theja@gatech.edu}{\textit{siva.theja@gatech.edu}}}\\
{\small\textsuperscript{3}\textit{Department of Industrial and Systems Engineering, University of Minnesota,} \href{mailto:zubeldia@umn.edu}{\textit{zubeldia@umn.edu}}}
}
\date{\vspace{-0.4 in}}
\begin{document}
\maketitle

\begin{abstract}
In this paper, we establish maximal concentration bounds for the iterates generated by a stochastic approximation (SA) algorithm under a contractive operator with respect to some arbitrary norm (for example, the $\ell_\infty$-norm). We consider two settings where the iterates are potentially unbounded: SA with bounded multiplicative noise and SA with sub-Gaussian additive noise. Our maximal concentration inequalities state that the convergence error has a sub-Gaussian tail in the additive noise setting and a Weibull tail (which is faster than polynomial decay but could be slower than exponential decay) in the multiplicative noise setting. In addition, we provide an impossibility result showing that it is generally impossible to have sub-exponential tails under multiplicative noise. To establish the maximal concentration bounds, we develop a novel bootstrapping argument that involves bounding the moment-generating function of a modified version of the generalized Moreau envelope of the convergence error and constructing an exponential supermartingale to enable using Ville's maximal inequality. We demonstrate the applicability of our theoretical results in the context of linear SA and reinforcement learning.
\end{abstract}

\tableofcontents

\section{Introduction}\label{sec:intro}
The stochastic approximation (SA) method, first proposed in \cite{robbins1951stochastic}, has become a foundational tool for modern large-scale optimization and machine learning, which have achieved great success in solving many practical problems across different domains \cite{kober2013reinforcement,silver2017mastering,jumper2021highly,ouyang2022training}. More formally, SA is an iterative method for solving systems of equations, in particular, fixed-point equations of the form $\bar{F}(x)=x$, where $\bar{F}:\mathbb{R}^d\mapsto\mathbb{R}^d$ is a (possibly nonlinear) operator. When the explicit expression of the operator $\bar{F}(\cdot)$ is accessible, such an equation can be efficiently solved with the fixed-point iteration $x_{k+1}=\bar{F}(x_k)$, provided that $\bar{F}(\cdot)$ is a contraction mapping \cite{banach1922operations}, i.e., there exist $\gamma_c\in (0,1)$ and a norm $\|\cdot\|_c$ such that $\|\bar{F}(x_1)-\bar{F}(x_2)\|_c\leq \gamma_c\|x_1-x_2\|_c$ for all $x_1,x_2\in\mathbb{R}^d$. However, in many practical applications, such as in reinforcement learning (RL) or large-scale optimization, we do not have enough knowledge or enough computational power to accurately compute $\bar{F}(x)$ for a given $x$, which is needed to carry out the fixed-point iteration. To overcome this challenge, SA was proposed as a small-stepsize and data-driven stochastic variant of the fixed-point iteration, which takes the following form: 
\begin{align}\label{eq:stochastic-approximation}
x_{k+1} = x_k + \alpha_k(F(x_k,Y_k) - x_k),
\end{align}
where $Y_k$ is a random variable taking values in a probability space $\mathcal{Y}$, $F:\mathbb{R}^d\times\mathcal{Y}\mapsto\mathbb{R}^d$ is another operator (which can be viewed as a noisy version of $\bar{F}(\cdot)$), and $\alpha_k>0$ is the stepsize. The stepsize is usually chosen as $\alpha_k=\alpha/(k+h)^z$, where $\alpha,h>0$ and $z\in (0,1]$. We assume that the noisy estimate $F(x_k,Y_k)$ of $\bar{F}(x_k)$ is conditionally unbiased, i.e., $\mathbb{E}[F(x_k,Y_k)\mid \mathcal{F}_k]=\bar{F}(x_k)$ for all $k\geq 0$, where $\mathcal{F}_k$ is the $\sigma$-algebra generated by $\{x_0,Y_0,Y_1,\cdots,Y_{k-1}\}$.

The SA algorithm presented in Eq. (\ref{eq:stochastic-approximation}) covers many existing popular algorithms as its special cases. For example, when $\bar{F}(x)=-c\nabla J(x)+x$ for some strongly convex and smooth objective function $J:\mathbb{R}^d\mapsto \mathbb{R}$, where $c>0$ can be arbitrary, the algorithm reduces to the stochastic gradient descent (SGD) used to minimize $J(\cdot)$ \cite{lan2020first,bottou2018optimization}, which can also be modeled as a contractive SA \cite{ryu2016primer}. In the context of RL, popular algorithms such as $Q$-learning \cite{watkins1992q} and temporal-difference (TD)-learning \cite{sutton1988learning} can all be modeled in the form of Eq.(\ref{eq:stochastic-approximation}) \cite{bertsekas1996neuro}, where the operator $\bar{F}(\cdot)$ is closely related to the Bellman operator \cite{bellman1957dynamic}. Due to the wide applications of SA, theoretically understanding the convergence behavior of the sequence $\{x_k\}$ generated by Eq. (\ref{eq:stochastic-approximation}) is of fundamental interest.

Early literature on SA focused on the asymptotic convergence \cite{robbins1951stochastic,borkar2009stochastic,tsitsiklis1994asynchronous,kushner2012stochastic}. In recent years, finite-sample analysis has received considerable attention \cite{bhandari2018finite,srikant2019finite,chen2020finite}. In finite-sample analysis, the goal is to bound the error between the stochastic iterate $x_k$ and its limit $x^*$ (which satisfies $\bar{F}(x^*)=x^*$) as a function of the number of iterations $k$ and to study its decay rate. Compared with asymptotic analysis, finite-sample analysis not only provides a more refined characterization of the behavior of the SA iterates but also can be used as a guide in implementation. 

Due to the stochastic nature of the iterates, there are multiple ways of measuring the distance between the iterates and the limit point. One natural way is to use the mean-square distance $\mathbb{E}[\|x_k-x^*\|_c^2]$, which has been extensively studied in the literature \cite{srikant2019finite,bhandari2018finite,chen2019finitesample,chen2020finite,wainwrightHDS}. Another popular way is to use the probability that $\|x_k-x^*\|_c\leq \epsilon$ for some $\epsilon>0$. A bound on this probability is called a ``high-probability bound'' and is sometimes preferable over a mean-square bound as it not only provides the convergence rate but also the confidence level. However, high-probability bounds are, in general, more challenging to establish. For example, consider the convergence rate of the law of large numbers\footnote{The average of a sequence of random variables $\frac{1}{k}\sum_{i=0}^{k-1}Y_k$ can be computed iteratively as $x_{k+1}=x_k+\frac{1}{k+1}(-x_k+Y_k)$ with $x_0=0$, which is a special case of the SA algorithm presented in Eq. (\ref{eq:stochastic-approximation}).}. The establishment of the $\mathcal{O}(1/k)$ mean-square bound is significantly easier than establishing exponential tail bounds such as Hoeffding's inequality \cite{hoeffding1994probability}, Chernoff bound \cite{chernoff1952measure}, and Bernstein's inequality \cite{bernstein1924modification}, etc.

To establish high-probability bounds of $\{x_k\}$ generated by Eq. (\ref{eq:stochastic-approximation}), the properties of the random process $\{F(x_k,Y_k)\}$ (in addition to being conditionally unbiased) play important roles in the analysis. In this work, we consider two types of noise sequences where the iterates $\{x_k\}$ could be unbounded: bounded multiplicative noise (i.e., there exists $\sigma>0$ such that $\| F(x_k,Y_k) - \mathbb{E}[F(x_k,Y_k)\mid \mathcal{F}_k] \|_c \leq \sigma(1+\|x_k\|_c)$ for all $k\geq 0$) or sub-Gaussian additive noise (i.e., the random vector $F(x_k,Y_k)-\mathbb{E}[F(x_k,Y_k)\mid \mathcal{F}_k]$ is norm sub-Gaussian). In the existing literature, most results focused on the setting where the noise in the SA algorithm is additive and is almost surely (a.s.) bounded. To illustrate, consider a linear SA of the form $x_{k+1}=x_k+\alpha_k(A(Y_k)x_k-b(Y_k))$, where $A:\mathcal{Y}\mapsto\mathbb{R}^{d\times d }$ and $b:\mathcal{Y}\mapsto\mathbb{R}^d$ are deterministic functions. This corresponds to Eq. (\ref{eq:stochastic-approximation}) with $F(x,y)=c(A(y)x-b(y))+x$ for any $c>0$.
When $A(Y_k)$ is not random, i.e., $A(Y_k)=\mathbb{E}[A(Y_k)]$, the noise is purely additive. In the multiplicative noise setting, which corresponds to $A(Y_k)$ being random, the analysis is much more challenging. Existing high-probability bounds either have tails that do not decay faster than polynomials or require strong assumptions, such as $A(Y_k)$ being Hurwitz a.s. See Section \ref{subsec:literature} for a more detailed literature review. 

In this paper, we develop maximal concentration bounds with Weibull tails for contractive SA algorithms in both the additive and the multiplicative noise setting. The main contributions of this work are summarized below. See also Table \ref{tab:summary} for a concrete summary of our results.

\begin{table}[ht!]
\setlength{\arrayrulewidth}{0.3mm}
\renewcommand{\arraystretch}{1.9}
\begin{center}
\begin{tabular}{ |m{1.8cm}||m{2cm}|m{5.5cm}|m{3.7cm}|} 
 \hline
\multirow{2}{*}{} &   \multicolumn{2}{c|}{SA with Multiplicative Noise} &  \multirow{2}{*}{SA with Additive Noise}\\ 
 \cline{2-3}
&   \centering $D\leq 0$ 
&  \centering $D>0$ &\\  
 \hline\hline
 \multirow{ 2}{*}{$\quad z=1$}   & \multirow{2}{*}{$\tilde{\mathcal{O}}\left(\frac{\log(1/\delta)}{k}\right)$}   & \centering Bound: $\tilde{\mathcal{O}}\left(\frac{\log^{\lceil2\alpha D\rceil+1}(1/\delta)}{k}\right)$& \multirow{2}{*}{$\quad \;\;\tilde{\mathcal{O}}\left(\frac{\log(1/\delta)}{k}\right)$}\\
 && \centering Impossibility Result: $\tilde{\Omega}\left(\frac{\log^{2\alpha D+1}(1/\delta)}{k}\right)$&\\[2.5 ex]
 \hline
 $z\in (0,1)$   & \multicolumn{2}{c|}{Impossibility Result: Weibull tail is not achievable.} &  $\quad \;\;\tilde{\mathcal{O}}\left(\frac{\log(1/\delta)}{k^z}\right)$\\[0.8ex]
 \hline
\end{tabular}
\end{center}
\caption{Summary of our results: Bounds on $\|x_k-x^*\|_c^2$ with probability at least $1-\delta$}
\label{tab:summary}
\begin{flushleft}
    \textit{Remark: The parameter $z$ is the exponent in the stepsize $\alpha_k=\alpha/(k+h)^z$. The parameter $D=\sigma+\gamma_c-1$ depends on the contraction factor $\gamma_c$ of the operator $\bar{F}(\cdot)$ and the parameter $\sigma$ of the multiplicative noise. We use $\lceil x\rceil$ to denote the smallest integer larger than $x$. In $\tilde{\mathcal{O}}(\cdot)$ and $\tilde{\Omega}(\cdot)$, logarithmic factors are ignored.}
\end{flushleft}
\end{table}

\begin{itemize}
    \item \textit{Multiplicative Noise Setting.} We establish high-probability bounds with Weibull tails when using diminishing stepsizes of the form $\alpha_k=\alpha/(k+h)$, where $\alpha,h>0$. Importantly, our result provides a bound on the entire tail of the iterates, as our stepsizes do not depend on either the desired accuracy level or the probability tolerance level. Moreover, our bound is ``maximal'' in the sense that it is a bound on the concentration behavior of the entire trajectory of the iterates $\{x_k\}$. As a complement of the concentration bounds, we provide impossibility results showing that concentration bounds with sub-exponential tails are, in general, not achievable when using $\alpha_k=\alpha/(k+h)$. In addition, when using $\alpha_k=\alpha/(k+h)^z$ with $z\in (0,1)$ as the stepsize, even Weibull tails are not achievable.  To our knowledge, this is the first maximal concentration bound (with a Weibull tail) for SA with multiplicative noise. Even in the simple setting of linear SA (with a random $A(Y_k)$ that is not a.s. Hurwitz), such a concentration result is unknown in the literature.
    \item \textit{Additive Noise Setting.} We also consider the case of purely additive noise. We allow the noise to be unbounded, albeit sub-Gaussian. In this case, we establish maximal concentration bounds with sub-Gaussian tails for the SA algorithm when using stepsizes of the form $\alpha_k=\alpha/(k+h)^z$ for any $z\in (0,1]$. To our knowledge, except for the special case of SGD, such concentration results in the case of additive but unbounded noise are unknown in the literature.
    \item \textit{Methodological Contributions.} To overcome the challenge of having multiplicative noise and potentially unbounded iterates in the SA algorithm, we develop a novel bootstrapping argument that involves (1) the establishment of a bound on the moment-generating function (MGF) of a properly modified variant of the generalized Moreau envelope of the convergence error, which serves as a Lyapunov function in our analysis, and (2) the construction of an exponential supermartingale and the use of Ville's maximal inequality. Next, we provide more details about the challenges and our technical contributions.
\end{itemize}

\subsection{Challenges \& Our Techniques}\label{subsec:challenge}

We use SA with multiplicative noise as an example to illustrate the challenges and our techniques. The analysis of SA with additive noise follows a similar approach.

The main challenge of obtaining high-probability bounds with tails decaying faster than polynomials is due to the combination of \textit{unbounded iterates} and \textit{multiplicative noise}. While these two components are not too problematic in isolation, the combination of both creates a vicious circle where the variance of the noise can be unbounded. In this case, while the expected operator $\bar{F}(x_k)$ is contracting, since the ``noisy'' operator $F(x_k,Y_k)$ can be expansive with a positive probability, the error can grow extremely fast with a significant probability. This creates a challenge that no approach in the literature can deal with in general. 

To overcome this challenge, we develop a bootstrapping argument, which is in spirit to the mathematical induction proof technique. Specifically, we first show that the iterates of the SA algorithm, while not uniformly bounded, admit a time-varying a.s. bound, which could be polynomially increasing. This is similar to the base case in an induction argument. To proceed with the induction step, suppose that a \textit{non-decreasing} bound holds with some probability (which is the induction hypothesis). Then, we show that a tighter bound must hold with a slightly larger probability. This serves as a blueprint for the iterative refinement of the bounds. Finally, we start with the worst-case a.s. bound and repeatedly use the induction blueprint (for finitely many times) to finish the proof. Next, we elaborate on the $3$-step proof idea in more detail.

\textit{Step $1$: Time-Varying Worst-Case Bounds.} Although the iterates of SA with multiplicative noise are not uniformly bounded by a constant, we show that they do admit a time-varying bound. The behavior of such a time-varying bound depends on the contraction effect in the expected operator and the expansive effect in the multiplicative noise. In general, the bound can be polynomially \textit{increasing} with time. This time-varying worst-case bound serves as the base case in our bootstrapping argument. 

\textit{Step $2$: An Iterative Framework to Improve the Bound.} The key to our bootstrapping argument is to establish the following proposition for induction.

\begin{proposition}\label{prop:blueprint_intro}
    Given a tolerance level $\delta\in (0,1)$, suppose that there exists a non-decreasing sequence $\{T_k(\delta)\}$ such that $\mathbb{P}(\|x_k-x^*\|_c^2\leq T_k(\delta),\forall\;k\geq 0)\geq 1-\delta$. Then, for any $\delta'\in (0,1-\delta)$, there must exist a sequence $\{T_k(\delta,\delta')\}$ with $T_k(\delta,\delta')=\tilde{\mathcal{O}}(T_k(\delta)/k)$) such that $\mathbb{P}(\|x_k-x^*\|_c^2\leq T_k(\delta,\delta'),\forall\;k\geq 0)\geq 1-\delta-\delta'$.
\end{proposition}

This result enables us to start with the time-varying worst-case bound for the error (which can be polynomially increasing) and iteratively improve it to obtain our concentration bound with a Weibull tail and the desired convergence rate. To establish Proposition \ref{prop:blueprint_intro}, we use a Lyapunov approach, which consists of the following two steps. 
\begin{itemize}
    \item \textit{Step 2.1: A Recursive Bound on the Log-MGFs.} The first step is to obtain a recursive upper bound on the log-MGF of a modified variant of the generalized Moreau envelope of the norm-square function $\|x_k-x^*\|_c^2$. Opening this recursion, we also obtain an outright bound on $\|x_k-x^*\|_c^2$ that only depends on $\|x_0-x^*\|_c^2$ and other model parameters. These bounds are valid for all $k\geq 0$ and give us a tight grasp on the effect of the noise on the error.
    \item \textit{Step 2.2: The Construction of an Exponential Supermartingale.} We construct a supermartingale $\{\overline{M}_k\}_{k\geq 0}$ of the form $\overline{M}_k=\exp(\|x_k-x^*\|_c^2 \alpha_k^{-1}T_k(\delta)^{-1}- C \sum_{i=0}^{k-1} \alpha_k ) $, where $C>0$ is a properly chosen constant, and then use Ville's maximal inequality \cite{ville1939etude} to obtain a maximal bound on the iterates. In particular, this maximal bound states that $\|x_k-x^*\|_c^2=\tilde{\mathcal{O}}(\alpha_k T_k(\delta))$ for all $k \geq 0$ with high probability. Since we use $\alpha_k=\mathcal{O}(1/k)$ in the multiplicative noise setting, the induction blueprint is established. 
\end{itemize}

\textit{Step $3$: Completing the Bootstrapping Argument.} The final step in proving our maximal concentration bounds is to use the worst-case bound of the convergence error as a starting point and repeatedly apply the induction step to iteratively improve the bound. Note that in contrast to the classical induction argument, which can be applied infinitely many times, since our induction blueprint requires the initial bound $\{T_k(\delta)\}$ to be non-decreasing, it can only be applied for finitely many times.

\subsection{Related Literature}\label{subsec:literature}

Before presenting our problem setting and our main results, we first summarize related work on establishing concentration bounds of SA algorithms in the form of SGD, linear SA, and RL algorithms.

\textit{Stochastic Gradient Descent.} There is a large body of work on exponential high-probability bounds for SGD and its variants. In \cite{rakhlin2012,Hazan14}, the authors obtain exponential high-probability bounds for SGD with non-smooth but strongly convex objective functions when the noise is conditionally unbiased and the iterates are in a compact set. This was later generalized to the case of sub-Gaussian noise and unbounded iterates in \cite{Harvey19}, making it one of the rare cases where exponential high-probability bounds are obtained with unbounded noise. Exponential high-probability bounds were also obtained for the ergodic mirror descent (under Markovian, conditionally biased noise with uniformly bounded variance) in \cite{duchi2012ergodic}, under the additional assumption that the iterates are in a compact set. 
More recently, polynomial high-probability bounds have been obtained in \cite{heavySGD} for SGD on linear models when the noise is heavy-tailed. Finally, in \cite{telgarsky22}, the authors analyze mirror descent with constant stepsize and independent and identically distributed (i.i.d.) noise with a uniformly bounded variance that is a.s. bounded or sub-Gaussian. By choosing the constant stepsize appropriately, they obtain exponential high-probability bounds in this setting.

\textit{Linear Stochastic Approximation.}
For linear SA, the first moment bounds for the $\ell_2$-norm of the error with constant stepsize were given in \cite{lakshminarayanan2018linear,srikant2019finite}. Based on these, one could obtain high-probability bounds, albeit with polynomial tails instead of exponential ones. To our knowledge, the strongest result on exponential high-probability bounds for linear SA is given in \cite{dalal2018general}. There, the authors analyze a two-timescale linear SA with decreasing stepsizes, and with multiplicative, a.s. bounded, martingale-difference noise. In this setting, they obtain maximal exponential high-probability bounds for all iterates large enough by choosing stepsizes that depend on both the runtime and the confidence level. On the other hand, there is a line of work that focuses primarily on the product of random matrices and then applies these results to linear SA. In \cite{durmus2021tight}, the authors consider a linear SA with constant stepsize, where the noise is Markovian and a.s. bounded. In this setting, they develop high-probability bounds on the product of random matrices to obtain sub-exponential high-probability bounds when the random matrices are a.s. Hurwitz, and polynomial high-probability bounds when the random matrices are only Hurwitz in expectation. This was later extended to the case of Polyak-Ruppert averaged iterates in \cite{durmusAveraged,Mou20}.

\textit{TD-Learning in RL.} TD-learning was proposed as an SA algorithm for solving the policy evaluation problem in RL \cite{sutton1988learning,sutton2018reinforcement}. The mean-square bounds of TD-learning were established in \cite{bhandari2018finite,srikant2019finite}, and high-probability bounds in \cite{dalal2018finite,LSTD17}. Specifically, the authors of \cite{LSTD17} consider the least-square temporal difference (LSTD) algorithm (which includes a projection step onto a compact set) and obtain exponential high-probability bounds for the $\ell_2$-norm of the error when the stepsizes are $\mathcal{O}(k^{-1})$.  The authors of \cite{dalal2018finite} analyze TD-learning with linear function approximation with i.i.d. sampling and obtain maximal exponential high-probability bounds for the $\ell_2$-norm of the error in the last iterate for iterates beyond some point that is of order $\log(1/\delta)$. In the off-policy setting, finite-sample mean-square bounds of TD-learning were established in \cite{chen2021GB,chen2022sample}. To our knowledge, there are no results on high-probability bounds (with tails decaying faster than polynomials) of off-policy TD-learning, with or without function approximation.

\textit{$Q$-Learning in RL.} 
In one of the earliest works on exponential high-probability bounds in RL \cite{even2003learning}, the authors analyze synchronous $Q$-learning algorithm when the stepsizes are $\mathcal{O}(k^{-z})$ for $z\in(1/2,1)$. In this setting, they obtain exponential high-probability bounds for all iterates large enough. Recently in \cite{li2020sample,li2021q,li2024q}, the authors analyze the popular $Q$-learning algorithm with constant stepsize and uniformly bounded, Markovian, possibly conditionally biased noise. In this setting, given a runtime and a performance guarantee, they obtain exponential high-probability bounds at the end of the runtime, provided that it is large enough.

\textit{General Stochastic Approximation.}
For general nonlinear SA under arbitrary norms and decreasing stepsizes, the authors of \cite{chen2020finite,chen2019finitesample} obtain bounds on the second moment of the error. These moment bounds can be used to obtain high-probability bounds, albeit without exponential tails. 

In \cite{thoppe2019concentration}, the authors consider an SA with decreasing stepsizes and martingale-difference sub-exponential noise. In this setting, they obtain maximal exponential high-probability bounds conditioned on the event that the iterates are close enough to the fixed point after some time. In follow-up work \cite{chandak2022concentration}, they assume that their noise is multiplicative, a.s. bounded, and Markovian. In this setting, they obtain maximal exponential high-probability bounds without conditioning on an unknown event. However, their high-probability bounds only hold after some time, and both the bound and the probabilities depend on the unknown norm of the iterate after some time (which is random, with unknown distribution). 

In a separate line of work \cite{qu2020finite}, the authors consider a general SA under the infinity norm, where the noise has an a.s. uniformly bounded martingale-difference part, and a Markovian part that only determines which coordinate of the iterate gets updated. Due to this structured noise, the random operator is a conditionally biased estimator of the original operator. In this setting, assuming that the iterates are always in a compact set, they obtain exponential high-probability bounds. Finally, in \cite{infiniteDim22}, the authors consider a variance-reduced version of the general SA in arbitrary Banach spaces, with constant stepsize and i.i.d., multiplicative, a.s. bounded noise. By appropriately choosing the stepsize and the averaging used to reduce the variance, they obtain exponential high-probability bounds for the error.

\textit{In summary}, all of the previous high-probability bounds for SA in the literature have one or more of these limitations: (1) they force the iterates to belong to a compact set via a projection, or they introduce stringent assumptions on their noise so that their iterates belong a.s. to some compact set; (2) they tune the parameters of the algorithm according to the probability guarantee and total runtime; (3) they do not allow for multiplicative noise; (4) they are only valid for a particular iterate, or for a limited range of iterates, which can depend on the probability guarantee itself.

The remainder of the paper is organized as follows. In Section \ref{sec:main-results}, we present our main results on maximal concentration bounds of SA under bounded multiplicative noise and sub-Gaussian additive noise, as well as the impossibility results. In Sections \ref{sec:proof-multi}, \ref{sec:proof-impossibility}, and \ref{sec:proof-additive}, we present the proof of our main theoretical results. In Section \ref{sec:applications}, we showcase the applicability of our main results in the context of linear SA and RL. Finally, we conclude this work in Section \ref{sec:conclusion}.

\section{Main Results}\label{sec:main-results}
Consider solving the fixed-point equation $\bar{F}(x)=x$ with the SA algorithm presented in Eq. (\ref{eq:stochastic-approximation}). The following assumption is imposed on the operator $\bar{F}(\cdot)$.

\begin{assumption}\label{as:contraction}
	There exist a constant $\gamma_c\in[0,1)$ and a norm $\|\cdot\|_c$ such that $\|\Bar{F}(x_1)-\Bar{F}(x_2)\|_c\leq \gamma_c\|x_1-x_2\|_c$ for all $x_1,x_2\in\mathbb{R}^d$.
\end{assumption}
Using the Banach fixed-point theorem \cite{banach1922operations}, Assumption \ref{as:contraction} implies that $\Bar{F}(x)=x$ has a unique solution $x^*$. Our results also hold when $\Bar{F}(\cdot)$ is a pseudo-contractive operator, i.e., $\|\Bar{F}(x)-x^*\|_c\leq \gamma_c\|x-x^*\|_c$ for all $x\in\mathbb{R}^d$ \cite{bertsekas1996neuro}. However, in this case, the existence of $x^*$ must be assumed. A contraction mapping is always a pseudo-contraction mapping.

Our next assumption states that the noisy estimate $F(x_k,Y_k)$ of $\bar{F}(x_k)$ has an unbiased perturbation.

\begin{assumption}\label{as:unbiased}
	It holds that $\mathbb{E}[F(x_k,Y_k) \mid \mathcal{F}_k ] = \Bar{F}(x_k)$ a.s. for all $k\geq 0$, where $\mathcal{F}_k$ is the $\sigma$-algebra generated by $\{x_0,Y_0,Y_1,\cdots,Y_{k-1}\}$.
\end{assumption}

A special case where Assumption \ref{as:unbiased} is satisfied is when $\{Y_k\}$ is a sequence of i.i.d. random variables. Assumption \ref{as:unbiased} can be relaxed to $\| \mathbb{E}[F(x_k,Y_k) \mid \mathcal{F}_k] - \Bar{F}(x_k) \|_c \leq L \left\|x_k - x^*\right\|_c$ a.s. for all $k\geq 0$, for some small enough constant $L>0$. For SA with generally biased perturbation (a typical example of which is when $\{Y_k\}$ is a Markov chain), establishing high-probability bounds is a future direction. That being said, existing results that allow biased perturbation all require $\{x_k\}$ being bounded a.s. by a deterministic constant, such as $Q$-learning and ergodic mirror descent. 

\subsection{Stochastic Approximation with Multiplicative Noise}\label{subsec:multiplicative}

To begin with, we explain in the following assumption what we mean by multiplicative noise.

\begin{assumption}\label{as:multi}
	There exists $\sigma>0$ such that $\| F(x_k,Y_k) - \mathbb{E}[F(x_k,Y_k)\mid \mathcal{F}_k] \|_c \leq \sigma(1+\|x_k\|_c)$ a.s. for all $k\geq 0$.
\end{assumption}

One special case where Assumption \ref{as:multi} is satisfied is when the operator $F(x,y)$ is Lipschitz continuous in $x$ uniformly for all $y$, which is formally stated in the following. 

\begin{assumptionp}{\ref*{as:multi}$'$}\label{as:Lip}
    There exists $L_c>0$ such that $\sup_{y\in\mathcal{Y}}\|F(x_1,y)-F(x_2,y)\|_c\leq L_c\|x_1-x_2\|_c$ for all $x_1,x_2\in\mathbb{R}^d$, and $\sup_{y\in\mathcal{Y}}\|F(\bm{0},y)\|_c<\infty$.
\end{assumptionp}

To see the implication, under Assumption \ref{as:Lip}, we have by triangle inequality that
\begin{align}
    \| F(x_k,Y_k) \|_c\leq\;& \| F(x_k,Y_k) - F(\bm{0},Y_k)\|_c+\|F(\bm{0},Y_k) \|_c\nonumber\\
    \leq \;&L_c\|x_k\|_c+{\sup}_{y\in\mathcal{Y}}\|F(\bm{0},y)\|_c\nonumber\\
    \leq\;& \sigma(1+\|x_k\|_c),\label{lip1}
\end{align}
where $\sigma:=\max(L_c,\sup_{y\in\mathcal{Y}}\|F(\bm{0},y)\|_c)<\infty$. Moreover, Jensen's inequality implies that
\begin{align}\label{lip2}
    \|\mathbb{E}[F(x_k,Y_k)\mid \mathcal{F}_k]\|_c\leq\mathbb{E}[\|F(x_k,Y_k)\|_c\mid \mathcal{F}_k] \leq \sigma(1+\|x_k\|_c).
\end{align}
Assumption \ref{as:multi} then follows from combining Eqs. (\ref{lip1}) and (\ref{lip2}) with triangle inequality. Note that Assumption \ref{as:Lip} is automatically satisfied in linear SA, which has the update equation $ x_{k+1} = x_k + \alpha_k(A(Y_k) x_k - b(Y_k))$,
where $A(\cdot)$ and $b(\cdot)$ are bounded functions. In fact, the terminology ``multiplicative noise'' is inspired by linear SA. 

We next state our maximal concentration bounds. Let $D=\sigma+\gamma_c-1$, where $\gamma_c$ is the contraction factor from Assumption \ref{as:contraction} and $\sigma$ is the parameter from Assumption \ref{as:multi}.
The other parameters $c_1,c_1',c_1''$, $\{c_i\}_{2\leq i\leq 4}$, and $D_0\in (0,1)$ we use to state the following theorem are (problem-dependent) constants, the expressions of which will be revealed in Section \ref{sec:proof-multi}, where we present the complete proof of Theorem \ref{thm:multi}.

\begin{theorem}\label{thm:multi}
Consider $\{x_k\}$ generated by Eq. (\ref{eq:stochastic-approximation}). Suppose that Assumptions \ref{as:contraction} -- \ref{as:multi} are satisfied and $\alpha_k=\alpha/(k+h)$. Then we have the following results.
\begin{enumerate}[(1)]
\item When $D>0$, by choosing $\alpha>2/D_0$, and $h$ large enough, for any $\delta>0$ and $K\geq 0$, with probability at least $1-\delta$, we have for all $k\geq K$ that\footnote{It is possible to remove the product of logarithmic terms (i.e., $\mathcal{O}(\log(k)^{m-1})$) at the cost of slightly compromising the tail. The result is presented in Appendix \ref{ap:removing_log}.}
    \begin{align*}
	\|x_k-x^*\|_c^2
	\leq \;&\frac{c_1\alpha \|x_0-x^*\|_c^2}{k+h} \left[\log\left(\frac{m}{\delta}\right)+c_2+c_3+c_4\log\left(\frac{k-1+h}{h-1}\right)\right]^{m-1}\nonumber\\
	&\times \left[\log\left(\frac{m}{\delta}\right)+c_2\left(\frac{h}{K+h}\right)^{\alpha D_0/2-1}+c_3+c_4\log\left(\frac{k-1+h}{K-1+h}\right)\right],
\end{align*}
where $m=\lceil 2\alpha D\rceil +1$.
\item When $D=0$, by choosing $\alpha>2/D_0$, and $h$ large enough, for any $\delta>0$ and $K\geq 0$, with probability at least $1-\delta$, we have for all $k\geq K$ that
		\begin{align*}
	\|x_k-x^*\|_c^2
	\leq \;&\frac{c_1'\alpha\|x_0-x^*\|_c^2}{k+h}\left[\log\left(\frac{k-1+h}{h-1}\right)\right]^2\left[\log\left(\frac{1}{\delta}\right)\right.\\
	&\left.+\;c_2\left(\frac{h}{K+h}\right)^{\alpha D_0/2-1}+c_3+c_4 \log\left(\frac{k-1+h}{K-1+h}\right)\right].
\end{align*}
\item When $D<0$, by choosing $\alpha>2/D_0$ and $h$ large enough, for any $\delta>0$ and $K\geq 0$, with probability at least $1-\delta$, we have for all $k\geq K$ that
    \begin{align*}
        \|x_k-x^*\|_c^2\leq \frac{c_1''\alpha\|x_0-x^*\|_c^2}{k+h}\left[\log\left(\frac{1}{\delta}\right)+c_2\left(\frac{h}{K+h}\right)^{\alpha D_0/2-1}+c_3+c_4 \log\left(\frac{k-1+h}{K-1+h}\right)\right].
    \end{align*}
\end{enumerate}
\end{theorem}
Several remarks are in order.
We begin by discussing the tail, which is determined by the parameter $D$. In Theorem \ref{thm:multi} (2) and (3), where $D\leq 0$, since $\delta$ appears as $\log(1/\delta)$ in the squared error $\|x_k-x^*\|_c^2$, the convergence error $\|x_k-x^*\|_c$ has a sub-Gaussian tail. The case where $D>0$ (cf. Theorem \ref{thm:multi} (1)) is more subtle. In this case, the tail depends on the parameter $m$. Since $m=\lceil 2\alpha D\rceil +1$ and $D>0$, in general, we only have a Weibull tail. The fact that $m$ is affine in $D$ (up to a ceiling function) makes intuitive sense since a larger $D$ implies a noisier update, which in turn implies a heavier tail\footnote{Note that if Assumptions \ref{as:contraction} and \ref{as:multi} are satisfied with some $\gamma_c$ and $\sigma$, then they must also be satisfied with any $\gamma'_c\in (\gamma_c,1)$ and $\sigma'>\sigma$. Therefore, we can always make $D$ positive. However, in view of Theorem \ref{thm:multi}, the tail gets heavier (from a sub-Gaussian tail to a Weibull tail) as $D$ increases. Therefore, to obtain a concentration bound with the best tail decay rate, the parameters $\gamma_c$ and $\sigma$ should be viewed as the smallest ones so that Assumptions \ref{as:contraction} and \ref{as:multi} are satisfied. }.

Next, we discuss the convergence rate in terms of $k$ and $K$. We only consider the case where $D>0$, which is the most interesting case. Theorem \ref{thm:multi} (1) states that,
with probability at least $1-\delta$, all iterates lie in a cone that starts with radius $\Theta((1+\log^{m/2}(1/\delta))K^{-1/2})$, which corresponds to $k=K$. This matches with the rate obtained for the mean-square error in \cite{chen2021finite}. Then, for any $k>K$, the radius of the cone is of order $\Theta((\log^{m/2}(1/\delta)+\log^{m/2}(k))k^{-1/2})$.

Theorem \ref{thm:multi} has several implications. Specifically, maximal concentration bounds immediately imply concentration bounds for a fixed iteration number, which in turn gives the full tail bound. Here, we present only the results when $D>0$. The case where $D\leq 0$ follows a similar approach.

\begin{corollary}\label{corollary:multi}
Suppose that the same assumptions for Theorem \ref{thm:multi} (1) are satisfied. 
\begin{enumerate}[(1)]
    \item For any $\delta>0$ and $k\geq 0$, we have with probability at least $1-\delta$ that
\begin{align*}
	\|x_k-x^*\|_c^2
	\leq \;&\frac{c_1\alpha \|x_0-x^*\|_c^2}{k+h} \left[\log\left(\frac{m}{\delta}\right)+c_2+c_3+c_4\log\left(\frac{k-1+h}{h-1}\right)\right]^{m}.
\end{align*}
As a result, the sample complexity to achieve $\|x_k-x^*\|_c\leq \epsilon$ is $\tilde{\mathcal{O}}(\epsilon^{-2}\log^m(1/\delta))$
\item There exists $C_1>0$ such that the following inequality holds for all $\epsilon>0$ and $k\geq 0$: 
\begin{align*}
    \mathbb{P}\left(\frac{\sqrt{k+h}\;\| x_k - x^* \|_c}{\log(k)^{m/2}}> \epsilon \right) < m\exp\left(-C_1\epsilon^{2/m} \right).
\end{align*}
\end{enumerate}
\end{corollary}

Corollary \ref{corollary:multi} (1) follows by setting $K=k$ in Theorem \ref{thm:multi} (1), and Corollary \ref{corollary:multi} (2) follows by representing the tolerance level $\delta$ from Corollary \ref{corollary:multi} (1) as a function of the accuracy level $\epsilon$. Observe that Corollary \ref{corollary:multi} (2) provides an upper bound for the whole complementary cumulative distribution function (CDF) of the error $\|x_k-x^*\|_c$ for any $k\geq 0$. Therefore, we can use the formula $\mathbb{E}[\|x_k-x^*\|_c^r]=\int_0^\infty \mathbb{P}(\|x_k-x^*\|_c^r>x)dx$ (where $r$ is any positive integer) to integrate this bound to obtain bounds for any moment of the error at any point in time.

\subsubsection{Tightening the Bounds for Small Values of $k$} Note that, for $k=0$ and other small values of $k$, the bound in Theorem \ref{thm:multi} can be far from the true initial error $\|x_0-x^*\|_c$. In order to obtain tighter bounds for small values of $k$, we use a time-varying worst-case bound (cf. Proposition \ref{prop:worst_case_bound}). This bound turns out to be polynomially increasing in $k$ when $D>0$, and it is, in fact, our first step in proving Theorem \ref{thm:multi}. By combining these two bounds, we obtain a non-monotone bound as in Figure \ref{fig:anytimeBounded}.

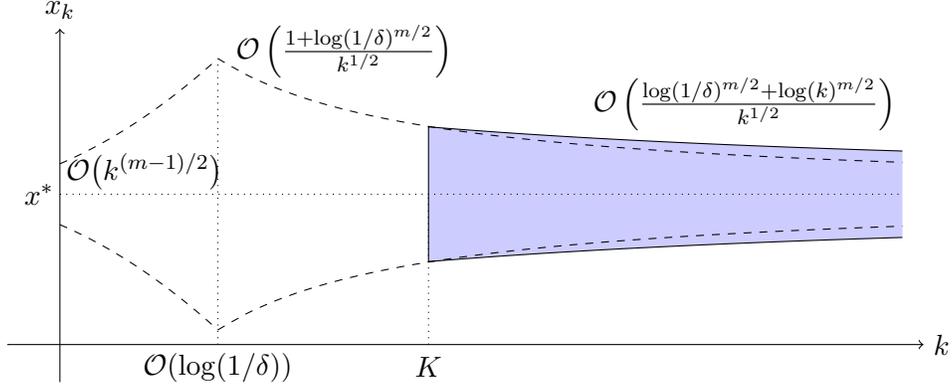
\begin{figure}[h]
    \centering
    \begin{tikzpicture}[xscale=1.4]
  \draw[->] (-0.5, 0) -- (8.2, 0) node[right] {$k$};
  \draw[->] (0, -0.5) -- (0, 4.2) node[above] {$x_k$};

  \draw[fill=blue!20!white] plot[scale=0.5, domain=16:7, smooth, variable=\x, blue] ({\x},  {4+4.4*(1.9)^2/(\x+1)*ln(\x+1)/ln(10)}) -- plot[scale=0.5, domain=7:16, smooth, variable=\x, blue] ({\x}, {4-4.4*(1.9)^2/(\x+1)*ln(\x+1)/ln(10)});

  \draw[dashed,scale=0.5, domain=0:3, smooth, variable=\x] plot ({\x}, {4+(\x/3+0.9)^2});
  \draw[dashed,scale=0.5, domain=3:16, smooth, variable=\x] plot ({\x}, {4+4*(1.9)^2/(\x+1)});
  \draw[dashed,scale=0.5, domain=0:3, smooth, variable=\x] plot ({\x}, {4-(\x/3+0.9)^2});
  \draw[dashed,scale=0.5, domain=3:16, smooth, variable=\x] plot ({\x}, {4-4*(1.9)^2/(\x+1)});
  
  \draw (6.5,3.2) node {$\mathcal{O}\left(\frac{\log(1/\delta)^{m/2}+\log(k)^{m/2}}{k^{1/2}}\right)$};
  
  \draw (2.7,3.9) node {$\mathcal{O}\left(\frac{1+\log(1/\delta)^{m/2}}{k^{1/2}}\right)$};
  \draw (0.8,2.3) node {$\mathcal{O}\big(k^{(m-1)/2}\big)$};

  \draw[dotted] (0,2) -- (8,2);
  \draw (-0.2,2) node {$x^*$};

  \draw[dotted] (3.5,0) -- (3.5,2);
  \draw (3.5,-0.3) node {$K$};
  
  \draw[dotted] (1.5,0) -- (1.5,3.8);
  \draw (1.5,-0.3) node {$\mathcal{O}(\log(1/\delta))$};
\end{tikzpicture}
    \caption{For $D>0$, all the iterates lie in the blue shaded area with probability at least $1-\delta$.}
    \label{fig:anytimeBounded}
\end{figure}

Observe from Figure \ref{fig:anytimeBounded} that the combination of the high-probability bound of Theorem \ref{thm:multi} and the worst-case bound of Proposition  \ref{prop:worst_case_bound} presents a surprising behavior: the combined bound is increasing up to a point, and then it is decreasing. In particular, the bound is increasing up to a time of order $\mathcal{O}(\log(1/\delta))$, at which point the bound on the squared error is of order $\mathcal{O}(\log(1/\delta)^{m-1})$. To demonstrate that this is not an artifact of our proof, we next present numerical simulations based on the following example.

\begin{example}\label{example:increasing_decreasing}
    Consider a $1$-dimensional SA given by \begin{align}\label{sa:1d}
        x_{k+1}=x_k+\alpha_k (Y_k x_k - x_k),
    \end{align}
    where $\{Y_k\}$ is a sequence of i.i.d. random variables such that $\mathbb{P}(Y_k=0.65)=\mathbb{P}(Y_k=1.15)=1/2$, and $\alpha_k=60/(k+120)$. 
\end{example}

Note that the linear SA in Eq. (\ref{sa:1d}) is a special case of the contractive SA  in Eq. (\ref{eq:stochastic-approximation}) with $F(x,y)=xy$. In addition, it is easy to verify that  Assumption \ref{as:contraction} holds with the minimum $\gamma_c$ being $0.9$, Assumption \ref{as:unbiased} holds (due to $\{Y_k\}$ being i.i.d.), and Assumption \ref{as:multi} holds with the minimum $\sigma$ being $0.25$. As a result, we have $D=\sigma+\gamma_c-1=0.15>0$, which corresponds to Theorem \ref{thm:multi} (1) and the figure depicted in Figure~\ref{fig:anytimeBounded}. In our numerical simulations, we start from the initial condition $x_0=10$ and run $20,000$ instances of Eq. (\ref{sa:1d}) and plot some percentiles in Figure~\ref{fig:LSApercentiles}. It is clear that the percentiles first increase before they decrease, which agrees with Figure \ref{fig:anytimeBounded}.

\begin{figure}[h]
    \centering
    \includegraphics[width=0.7\textwidth]{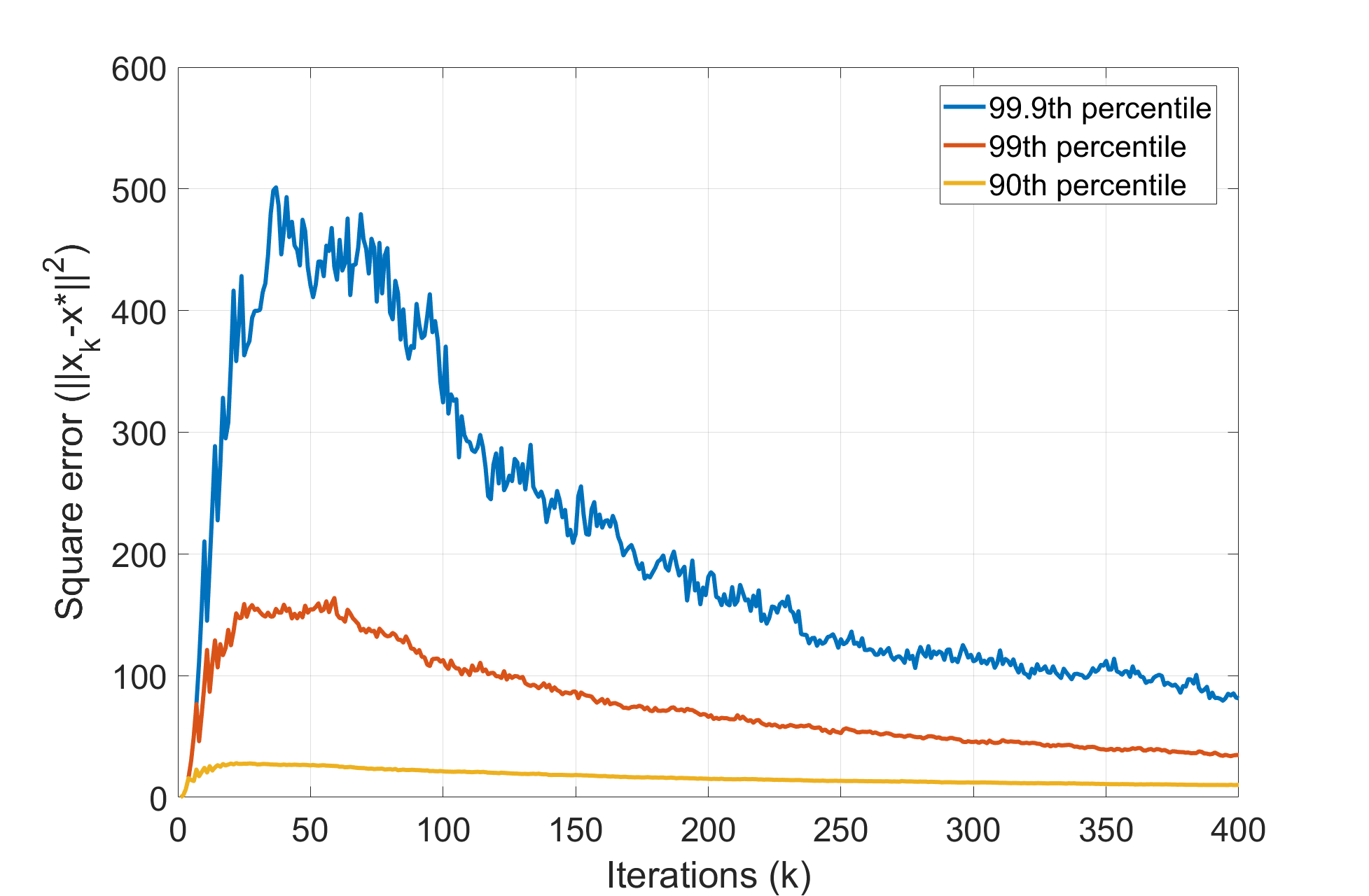}
    \caption{Percentiles of the squared error for a 1-dimensional SA with $D>0$.}
    \label{fig:LSApercentiles}
\end{figure}

\subsection{An Impossibility Result on the Tail Decay Rate}\label{sec:hard_example}
Theorem \ref{thm:multi} shows that SA with multiplicative noise in general is able to achieve an $\tilde{\mathcal{O}}(1/k)$ convergence rate with a Weibull tail. One may ask if a sub-Gaussian (or a sub-exponential) tail is achievable. In this section, we show in the following example that it is impossible to obtain a general sub-exponential tail bound whenever we only obtain a Weibull one.

\begin{example}\label{example:impossibility_result}
Consider the $1$-dimensional linear SA presented in Eq. (\ref{sa:1d}). In this case,
    let $\{Y_k\}$ be an i.i.d. sequence of real-valued random variables such that 
$\mathbb{P}\left(Y_k=a+N\right)=1/(N+1)$ and $\mathbb{P}\left(Y_k=a-1\right)=N/(N+1)$, where $a\in (0,1)$ and $N\geq 1$ are tunable parameters.
Note that the update equation can be equivalently written as
\begin{align}\label{algo:example}
    x_{k+1} =(1+(Y_k-1)\alpha_k)x_k.
\end{align}
\end{example}
In the above example, it can be easily verified that Assumption \ref{as:contraction} is satisfied with the minimum $\gamma_c$ being $a$, Assumption \ref{as:unbiased} holds due to $\{Y_k\}$ being an i.i.d. sequence, and Assumption \ref{as:multi} is satisfied with the minimum $\sigma$ being $N$. As a result, we have $D=a+N-1$. Next, we apply Theorem \ref{thm:multi} to obtain high-probability bounds for the SA algorithm presented in Eq. \eqref{algo:example}. Suppose that $x_0>0$, $\alpha_k=\alpha/(k+h)$, where $\alpha>1/(1-a)$ and $h$ is large enough so that $\alpha_0<1/2$. Then, there exist $K_1,K_2>0$ such that the following inequality holds for all $\epsilon>0$ and $k\geq 0$:
\begin{align}
    \mathbb{P}\left(\frac{\sqrt{k+h} \;x_k}{\log(k)^{m_e/2}} > \epsilon \right) <K_1 \exp\left(-K_2\epsilon^{\frac{2}{m_e}}\right),\label{eq:example_bound1}
\end{align}
where $m_e=\lceil 2\alpha D \rceil +1$.

We next investigate the lower bound of the SA algorithm in Eq. (\ref{algo:example}) through the following theorem, the proof of which is presented in Section \ref{sec:proof-impossibility}.

\begin{theorem}\label{thm:impossibility}
Consider $\{x_k\}$ generated by Eq. (\ref{algo:example}). Suppose that $\alpha_k=\alpha/(k+h)^z$, where $z\in (0,1]$, $\alpha>0$, and $h$ is chosen such that $\alpha_0<1/2$.
\begin{enumerate}[(1)]
    \item When $z=1$, for any ${\tilde{\beta}}>2/(1+2\alpha D)$, we have $\liminf\limits_{k\to\infty} \mathbb{E}[\exp(\lambda [(k+h)^{1/2} x_k]^{\tilde{\beta}})]=\infty$ for all $\lambda>0$. As a result, there do not exist $K_1',K_2'>0$ such that $\mathbb{P}\left((k+h)^{1/2}\;x_k\geq \epsilon\right)\leq K_1'\exp(-K_2'\epsilon^{\tilde{\beta}}) $ for any $\epsilon>0$ and $k\geq 0$.
\item When $z\in (0,1)$, for any ${\tilde{\beta}},{\tilde{\beta}}'>0$, we have $\liminf\limits_{k\to\infty} \mathbb{E}[\exp(\lambda (k+h)^{{\tilde{\beta}}'} x_k^{\tilde{\beta}})]=\infty$ for all $\lambda>0$. As a result, there do not exist $\bar{K}_1',\bar{K}_2'>0$ such that $\mathbb{P}\left((k+h)^{{\tilde{\beta}}'/{\tilde{\beta}}}\;x_k\geq \epsilon\right)\leq \bar{K}_1'\exp\left(-\bar{K}_2'\epsilon^{\tilde{\beta}}\right) $ for any $\epsilon>0$ and $k\geq 0$.
\end{enumerate}
\end{theorem}
In Theorem \ref{thm:impossibility} (1), since ${\tilde{\beta}} > 2/(1+2\alpha D)\geq 2/(1+\lceil 2\alpha D\rceil) = 2/m_e$, our concentration bound is almost tight in the sense that it has the best tail decay rate (at least when $2\alpha D$ is an integer) but with a slightly worse decay rate in $k$. In particular, this means that we obtain a sub-exponential tail upper bound whenever such bound is achievable. This is depicted in Figure \ref{fig:upper_and_lower_bounds}. Note that Theorem \ref{thm:impossibility} (2) implies that not even Weibull tail bounds are possible when $\alpha_k=\alpha/(k+h)^z$ (with $z\in (0,1)$) for any polynomial rate of convergence.

\begin{figure}[t]
    \centering
    \begin{tikzpicture}[xscale=1.5, yscale=1.5]
  \draw[->] (1.5, 0) -- (8.5, 0) node[right] {$D$};
  \draw[->] (2, -0.4) -- (2, 3.7) node[above] {$\beta$};
  
  \draw[thick, dashed, color=blue] plot[scale=1, domain=1.75:8.4, smooth, variable=\x] ({\x},  {4.5/(1.5+(\x-2))});
  \draw[thick] (1.5,3) -- (2,3);
  \draw[thick] (2,1.5) -- (3.5,1.5);
  \draw[thick] (3.5,1) -- (5,1);
  \draw[thick] (5,3/4) -- (6.5,3/4);
  \draw[thick] (6.5,3/5) -- (8,3/5);
  \draw[thick] (8,1/2) -- (8.4,1/2);

  \draw[dotted] (3.5,0) -- (3.5,1.5);
  \draw[dotted] (5,0) -- (5,1);
  \draw[dotted] (6.5,0) -- (6.5,3/4);
  \draw[dotted] (8,0) -- (8,3/5);
  
  \draw (3.5,-0.2) node {$\frac{1}{2\alpha}$};
  \draw (5,-0.2) node {$\frac{1}{\alpha}$};
  \draw (6.5,-0.2) node {$\frac{3}{2\alpha}$};
  \draw (8,-0.2) node {$\frac{2}{\alpha}$};
  
  \draw (1.85,3) node {$2$};
  \draw (1.85,1.5) node {$1$};

  \draw (3,2.3) node {{\color{blue}$\frac{2}{1+2\alpha D}$}};
  \draw (2.75,1.25) node {$\frac{2}{1+\lceil 2\alpha D \rceil}$};

  \draw (1.8,1) node {\tiny $2/3$};
  \draw (1.8,3/4) node {\tiny $2/4$};
  \draw (1.8,3/5) node {\tiny $2/5$};
  \draw (1.8,1/2) node {\tiny $2/6$};
  \draw[dotted] (2,1) -- (3.5,1);
  \draw[dotted] (2,3/4) -- (5,3/4);
  \draw[dotted] (2,3/5) -- (6.5,3/5);
  \draw[dotted] (2,1/2) -- (8,1/2);
\end{tikzpicture}
    \caption{Best tail exponent in Eq. (\ref{eq:example_bound1}) (black) vs. upper bound on best possible tail exponent given by Theorem \ref{thm:impossibility} (1) (dashed blue)}
    \label{fig:upper_and_lower_bounds}
\end{figure}
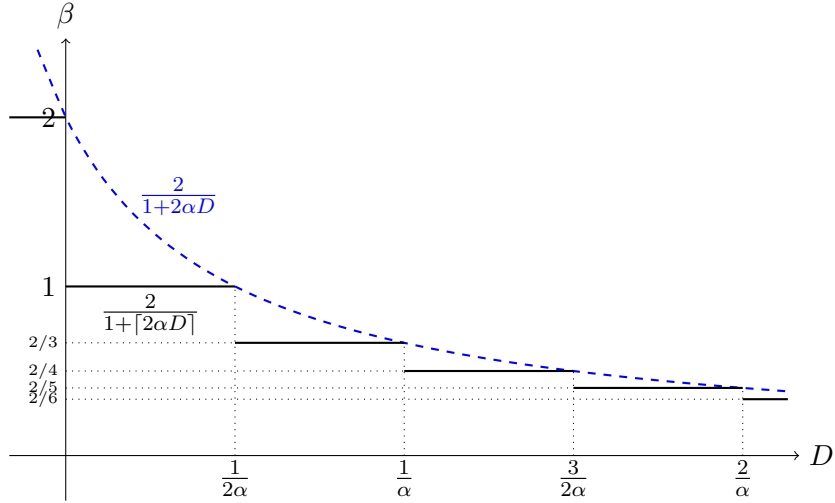

\subsection{Stochastic Approximation with Sub-Gaussian Additive Noise}\label{sec:additive}

In this section, we also consider $\{x_k\}$ generated by the SA algorithm presented in Eq. (\ref{eq:stochastic-approximation}), but with additive sub-Gaussian noise, which is explained in the following assumption. 

\begin{assumption}\label{ass:sub-Gaussian}
There exist $\Bar{\sigma}>0$ and a (possibly dimension-dependent) constant $c_d>0$ such that 
the following two inequalities hold for any $k\geq 0$ and $\mathcal{F}_k$-measurable random vector $v$:
\begin{align}
    \mathbb{E}\left[\exp\left(\lambda \langle F(x_k,Y_k) - \Bar{F}(x_k), v \rangle \right) \middle| \mathcal{F}_k \right] \leq& \exp\left(\lambda^2\Bar{\sigma}^2 \|v\|_{c,*}^2/2\right)\label{eq:sub-Gaussian_1}
\end{align}
for all $\lambda>0$, and
\begin{align}
    \mathbb{E}\left[ \exp\left( \lambda \left\| F(x_k,Y_k) - \Bar{F}(x_k) \right\|_c^2 \right) \middle| \mathcal{F}_k \right] \leq&  \left(1- 2 \lambda \Bar{\sigma}^2\right)^{-\frac{c_d}{2}}\label{eq:sub-Gaussian_2}
\end{align}
for all $\lambda\in \left(0,1/2\Bar{\sigma}^2\right)$, where $\|\cdot\|_c^*$ is the dual norm \cite{beck2017first} of the contraction norm $\|\cdot\|_c$.
\end{assumption}

Assumption \ref{ass:sub-Gaussian} can be viewed as a generalization of the standard definition of a random vector being norm sub-Gaussian \cite{jin2019short} to the case where we use an arbitrary norm $\|\cdot\|_c$ instead of $\|\cdot\|_2$. In fact, when $\|\cdot\|_c=\|\cdot\|_2$ and $c_d=d$, Eqs. (\ref{eq:sub-Gaussian_1}) and (\ref{eq:sub-Gaussian_2}) are exactly the equivalent definitions of sub-Gaussian random vectors \cite{jin2019short,wainwrightHDS}. Since we use an arbitrary norm, we allow for a (possibly) different dimension-dependent constant $c_d$. One special case where Assumptions \ref{as:contraction}, \ref{as:unbiased}, and \ref{ass:sub-Gaussian} are satisfied is when the noise $Y_k$ is purely additive and is either a martingale-difference sequence or an i.i.d. mean zero sequence with sub-Gaussian tails.

Next, we state the concentration bound. In the following theorem, the parameters $\{\Bar{c}_i\}_{1\leq i\leq 5}$ and $\bar{D}_1\in (0,1)$ are (problem-dependent) constants, the explicit expressions of which and the complete proof of the theorem are presented in Section \ref{sec:proof-additive}.

\begin{theorem}\label{thm:additive}
Consider $\{x_k\}$ generated by Eq. (\ref{eq:stochastic-approximation}). Suppose that Assumptions \ref{as:contraction}, \ref{as:unbiased}, and \ref{ass:sub-Gaussian} are satisfied, and $\alpha_k=\alpha/(k+h)^z$, where $z\in (0,1]$ and $\alpha,h>0$ are appropriately chosen. Then we have the following results.
\begin{enumerate}[(1)]
\item When $z=1$, by choosing $\alpha>2/\Bar{D}_1$, for any $\delta>0$ and $K\geq 0$, with probability at least $1-\delta$, we have for all $k\geq K$ that
    \begin{align*}
        \|x_k-x^*\|_c^2\leq\;&\frac{\bar{c}_1\log(1/\delta)}{k+h}+\bar{c}_2\|x_0\!-\!x^*\|_c^2\left(\frac{h}{k+h}\right)^{\Bar{D}_1\alpha/2}\!+\!\frac{\bar{c}_3+\bar{c}_4\log((k+1)/K^{1/2})}{k+h}.
    \end{align*}
\item When $z\in (0,1)$, by choosing $\alpha>0$ and $h\geq (4z/(\Bar{D}_1\alpha))^{1/(1-z)}$, 
 for any $\delta>0$ and $K\geq 0$, with probability at least $1-\delta$, we have for all $k\geq K$ that
    \begin{align*}
        \|x_k-x^*\|_c^2\leq\;&\frac{\Bar{c}_1\log(1/\delta)}{(k+h)^z}+\bar{c}_2\|x_0-x^*\|_c^2\exp\left(-\frac{\Bar{D}_1\alpha ((k+h)^{1-z}-h^{1-z})}{2(1-z)}\right)\\
        &+\frac{\bar{c}_5+\bar{c}_4\log((k+1)/K^{1/2})}{(k+h)^z}.
    \end{align*}
\end{enumerate}
\end{theorem}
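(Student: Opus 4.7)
My plan is to use the generalized Moreau envelope $M$ of $\tfrac12\|\cdot\|_c^2$, as in prior contractive-SA analyses, which is differentiable and sandwiches the squared contraction norm: $\underline c\,\|x\|_c^2 \le M(x) \le \bar c\,\|x\|_c^2$, with $\|\nabla M(x)\|_{c,*} \le L_M\|x\|_c$ and $M$ smooth. Writing $x_{k+1} - x^* = y_k + \alpha_k w_k$ where $y_k = (1-\alpha_k)(x_k-x^*) + \alpha_k(\bar F(x_k)-x^*)$ is $\mathcal{F}_k$-measurable and $w_k = F(x_k,Y_k) - \bar F(x_k)$ is mean-zero, and combining smoothness of $M$ with the contraction of $\bar F$ in Assumption \ref{as:contraction}, I would derive a one-step inequality of the form
\[
M(x_{k+1}-x^*) \le (1 - \alpha_k \bar D_1) M(x_k-x^*) + \alpha_k \langle \nabla M(y_k), w_k\rangle + C_0 \alpha_k^2 \|w_k\|_c^2,
\]
for an explicit $\bar D_1 \in (0,1)$ depending on $\gamma_c$ and the norm-equivalence constants.

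\textbf{Conditional MGF and exponential supermartingale.} I would then take conditional MGFs. Since $\nabla M(y_k)$ is $\mathcal{F}_k$-measurable, Eq. \eqref{eq:sub-Gaussian_1} of Assumption \ref{ass:sub-Gaussian} applies with $v = \nabla M(y_k)$ and contributes $\exp(C_1\lambda^2\alpha_k^2 M(x_k-x^*))$ once I use $\|\nabla M(y_k)\|_{c,*}^2 = O(M(x_k-x^*))$; Eq. \eqref{eq:sub-Gaussian_2} controls the $\|w_k\|_c^2$ term whenever $\lambda\alpha_k^2 < 1/(2\bar\sigma^2)$, producing a drift constant of order $c_d \bar\sigma^2 \lambda\alpha_k^2$. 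Combining these yields a recursion of the form
\[
\log\mathbb E\!\left[e^{\lambda M(x_{k+1}-x^*)}\mid\mathcal{F}_k\right] \le \lambda\!\left(1 - \alpha_k\bar D_1 + C_1\lambda\alpha_k^2\right)\!M(x_k-x^*) + C_2\lambda\alpha_k^2.
\]
Selecting a sequence $\{\lambda_k\}$ satisfying $\lambda_{k+1}(1-\alpha_k\bar D_1/2) = \lambda_k$ (so that $\lambda_0/\lambda_k = \prod_{i<k}(1-\alpha_i\bar D_1/2)$) and verifying that the $C_1\lambda^2\alpha_k^2$ term is absorbed into the other half of the contraction drift, I would show that $Z_k := \exp(\lambda_k M(x_k-x^*) - \sum_{i<k} C_2\lambda_{i+1}\alpha_i^2)$ is a supermartingale with respect to $\{\mathcal{F}_k\}$. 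Ville's maximal inequality then gives, with probability at least $1-\delta$, simultaneously for all $k\ge 0$,
\[
\lambda_k M(x_k-x^*) \le \lambda_0 M(x_0-x^*) + \sum_{i=0}^{k-1} C_2\lambda_{i+1}\alpha_i^2 + \log(1/\delta).
\]

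\textbf{Collecting bounds and identifying the main obstacle.} Dividing by $\lambda_k$ and plugging in $\alpha_k = \alpha/(k+h)^z$, the ratio $\lambda_0/\lambda_k$ evaluates to exactly the deterministic-decay factor in the theorem: $(h/(k+h))^{\bar D_1\alpha/2}$ for $z=1$, and $\exp(-\tfrac{\bar D_1\alpha}{2(1-z)}[(k+h)^{1-z}-h^{1-z}])$ for $z\in(0,1)$. Splitting the drift sum at an arbitrary $K \ge 0$ and applying integral comparison to each piece yields the $\bar c_3$ (or $\bar c_5$) bulk constant plus the $\bar c_4 \log((k+1)/K^{1/2})$ correction, and $\log(1/\delta)/\lambda_k$ gives the leading $\bar c_1\log(1/\delta)/(k+h)^z$ stochastic term. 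Finally, $\underline c\,\|x_k-x^*\|_c^2 \le M(x_k-x^*)$ completes the translation from the Lyapunov function back to $\|\cdot\|_c^2$. The main obstacle is the simultaneous calibration of $\lambda_k$, $\alpha$, and $h$: the MGF bound \eqref{eq:sub-Gaussian_2} demands $\lambda_{k+1}\alpha_k^2 < 1/(2\bar\sigma^2)$ for every $k \ge 0$, while $\lambda_k$ must grow at least as fast as $\prod(1-\alpha_i\bar D_1/2)^{-1}$ to drive $\lambda_0/\lambda_k$ below the $\tilde O((k+h)^{-z})$ stochastic rate. These two constraints collide to produce precisely the stated conditions $\alpha > 2/\bar D_1$ for $z=1$ (ensuring $\lambda_k\sim (k+h)^{\bar D_1\alpha/2}$ dominates the stochastic rate while $\lambda_{k+1}\alpha_k^2 \to 0$) and $h \ge (4z/(\bar D_1\alpha))^{1/(1-z)}$ for $z\in(0,1)$ (ensuring the sub-Gaussian admissibility condition holds from the initial iteration). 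Crucially, no bootstrapping is required because $\|w_k\|_c^2$ has a uniform MGF independent of $\|x_k\|_c$ under the additive sub-Gaussian assumption; this is the key simplification over the multiplicative-noise setting of Theorem \ref{thm:multi}.
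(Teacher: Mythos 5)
Your skeleton (Moreau envelope Lyapunov function, one-step drift from contraction plus smoothness, conditional MGF control via Assumption \ref{ass:sub-Gaussian}) coincides with the paper's, but the quantitative choice on which everything hinges — the schedule of $\lambda_k$ — does not work, and this is a genuine gap rather than a presentational one. The noise bounds cap $\lambda_k$ from above: absorbing the $C_1\lambda^2\alpha_k^2\,M(x_k-x^*)$ contribution from Eq.\ (\ref{eq:sub-Gaussian_1}) into half of the contraction drift requires $C_1\lambda_{k+1}\alpha_k\le \bar{D}_1/2$, and Eq.\ (\ref{eq:sub-Gaussian_2}) is only admissible while $\lambda_{k+1}\alpha_k^2$ stays below a constant of order $1/\bar{\sigma}^2$. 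Your calibration $\lambda_{k+1}(1-\alpha_k\bar{D}_1/2)=\lambda_k$ forces $\lambda_k\sim\prod_{i<k}(1-\alpha_i\bar{D}_1/2)^{-1}$, i.e.\ $\lambda_k\asymp(k+h)^{\bar{D}_1\alpha/2}$ for $z=1$ and $\lambda_k\asymp\exp\bigl(\tfrac{\bar{D}_1\alpha}{2(1-z)}(k+h)^{1-z}\bigr)$ for $z\in(0,1)$. For $z=1$ the theorem's hypothesis is precisely $\bar{D}_1\alpha/2>1$, so $\lambda_{k+1}\alpha_k\asymp(k+h)^{\bar{D}_1\alpha/2-1}\to\infty$ and the absorption step fails for all large $k$ (and when $\alpha>4/\bar{D}_1$ even $\lambda_{k+1}\alpha_k^2$ diverges, so Eq.\ (\ref{eq:sub-Gaussian_2}) cannot be invoked at all); for $z\in(0,1)$ the exponential growth of $\lambda_k$ violates both constraints after finitely many steps regardless of $h$. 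So the process you declare a supermartingale is not one, and the "collision of constraints" you acknowledge is not resolved by $\alpha>2/\bar{D}_1$ or $h\ge(4z/(\bar{D}_1\alpha))^{1/(1-z)}$: with your $\lambda_k$ it is irreconcilable.

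The paper's resolution is to take $\lambda_k=\theta/\alpha_k$ with a small constant $\theta$, so that $\lambda_{k+1}\alpha_k\approx 2\theta$ stays bounded and both MGF conditions hold uniformly in $k$; the decay of the initial-condition term is then \emph{not} $\lambda_0/\lambda_k$ but comes from compounding the per-step factor $\tfrac{\alpha_k}{\alpha_{k+1}}\bigl(1-\bar{D}_1\alpha_k/2\bigr)\le 1$ in the recursion for $Z_k=\log\mathbb{E}[\exp(\lambda_k M(x_k-x^*))]$ (Proposition \ref{prop:additive_recursion} and Lemmas \ref{le:Z_k_recursion}--\ref{le:additive_solving_recursion}); this product is $(h/(k+h))^{\bar{D}_1\alpha/2-1}$ for $z=1$, and dividing by $\lambda_k\asymp(k+h)$ restores the stated exponent $\bar{D}_1\alpha/2$. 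The stepsize conditions you cite are exactly what make that per-step factor lie in $[0,1]$. A secondary mismatch: the bound in Theorem \ref{thm:additive}, with its $\log((k+1)/K^{1/2})$ term, is obtained in the paper from fixed-time Markov bounds combined with a weighted union bound $\delta_k=K\delta/(k(k+1))$ over $k\ge K$, not from Ville's inequality; the supermartingale/Ville route (which the paper also records) yields a compensator $\bar{D}_5\sum_{i=K}^{k-1}\alpha_i$ that for $z<1$ grows like $(k+h)^{1-z}$ and gives a strictly weaker bound. So even after repairing the choice of $\lambda_k$, your Ville-only plan would not land on the stated inequality for $z\in(0,1)$.
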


We will discuss the implications of Theorem  \ref{thm:additive} in terms of its dependence on $\delta$, $K$, and $k$. Since the tolerance level $\delta$ appears as $\log(1/\delta)$ in the norm-square bound, the norm error $\|x_k-x^*\|_c$ has a sub-Gaussian tail. As for the dependence on $K$ and $k$, Theorem \ref{thm:additive} implies that, with probability at least $1-\delta$, all the iterates lie in a cone with a radius $\tilde{\Theta}((\log(1/\delta)^{1/2}+\log(k/K^{1/2}))k^{-z/2})$
for all $k\geq K$.

As a side note, observe that when $z=1$, to achieve the $\tilde{\mathcal{O}}(1/k)$ rate of convergence, the parameter $\alpha$ in the stepsize must be bounded away from zero ($\alpha>2/\Bar{D}_1$ to be precise). However, when $z<1$, the parameter $\alpha$ only needs to be positive, suggesting that the convergence rate for using the $\alpha_k=\alpha/(k+h)^z$ stepsize, while being sub-optimal, is more robust.  This coincides with what was observed in the literature studying the mean-square error \cite{chen2020finite,bhandari2018finite}. In addition, recall that in the multiplicative noise setting, if we use $\alpha_k=\alpha/(k+h)^z$ as the stepsize, it is not possible to achieve a convergence bound with a polynomial rate of convergence and a Weibull tail (cf. Theorem \ref{thm:impossibility} (2)), suggesting a fundamental difference in the behavior of the stochastic iterates generated by SA with multiplicative noise versus additive noise.

Similarly to Corollary \ref{corollary:multi} in Section \ref{subsec:multiplicative}, in the additive noise setting, one can use the maximal concentration bound to obtain the fixed-time concentration bound of order $k^{-z/2}$, which in turn gives us the full tail bound. The results are omitted here.

\section{Proof of Theorem \ref{thm:multi}}\label{sec:proof-multi}
We will follow the high-level idea presented in Section \ref{subsec:challenge} to prove the result. Specifically, in Section \ref{subsec:proof-multi-initialization}, we show that $\{x_k\}$ generated by Eq. (\ref{eq:stochastic-approximation}), while is not uniformly bounded, admits a time-varying worst-case bound. In Section \ref{subsec:proof-multi-blueprint}, we prove Proposition \ref{prop:blueprint_intro}, which serves as a blueprint for our bootstrapping argument. Finally, in Section \ref{subsec:proof-multi-bootstrapping}, we implement the bootstrapping argument to finish the proof, where we use the worse-case bound as an initialization and then use Proposition \ref{prop:blueprint_intro} to iteratively improve the bound.

\subsection{Time-Varying Worst-Case Bounds}\label{subsec:proof-multi-initialization}

The following proposition establishes the time-varying worst-case bounds of $\{x_k\}$ generated by the SA algorithm presented in Eq. (\ref{eq:stochastic-approximation}). Recall that $D=\sigma+\gamma_c-1$.

\begin{proposition}\label{prop:worst_case_bound}
Suppose that Assumptions \ref{as:contraction} and \ref{as:multi} are satisfied and $\alpha_k=\alpha/(k+h)$, where $\alpha,h>0$. Then, we have $\|x_k-x^*\|_c\leq B_k(D)$ a.s. for all $k\geq 0$, where
\begin{align*}
	B_k(D)=\begin{dcases}
	\left(\frac{k-1+h}{h-1}\right)^{\alpha D}\left(\|x_0-x^*\|_c+\frac{\sigma(1+\|x^*\|_c)}{D}\right)
	,&\text{if }D> 0,\\
	\|x_0-x^*\|_c+\sigma(1+\|x^*\|_c)\alpha \log\left(\frac{k-1+h}{h-1}\right),&\text{if }D=0,\\
	\|x_0-x^*\|_c-\frac{\sigma(1+\|x^*\|_c)}{D},&\text{if }D< 0.
\end{dcases} 
\end{align*}
\end{proposition}
\begin{remark}
    Intuitively, the parameter $\gamma_c$ captures the contraction effect of the expected operator, and the parameter $\sigma$ captures the expansive effect of the noise. The combined effect is captured by the parameter $D=\sigma+\gamma_c-1$. Proposition \ref{prop:worst_case_bound} states that $\|x_k\|_c$ is uniformly bounded by a deterministic constant when $D<0$, grows at most logarithmically when $D=0$, and can grow at a polynomial rate of $\mathcal{O}(k^{\alpha D})$ when $D>0$. 
\end{remark}
\begin{proof}[Proof of Proposition \ref{prop:worst_case_bound}]
    Using the update equation (\ref{eq:stochastic-approximation}) and the fact that $\Bar{F}(x^*)=x^*$, we have for all $k\geq 0$ that
\begin{align*}
	x_{k+1}-x^*=x_k-x^*+\alpha_k(F(x_k,Y_k)-\Bar{F}(x_k)+\Bar{F}(x_k)-\Bar{F}(x^*)+x^*-x_k).
\end{align*}
It follows that
\begin{align}
	\|x_{k+1}-x^*\|_c
	\leq\;& (1-\alpha_k)\|x_k-x^*\|_c+\alpha_k(\|F(x_k,Y_k)-\Bar{F}(x_k)\|_c+\|\Bar{F}(x_k)-\Bar{F}(x^*)\|_c)\nonumber\\
	\leq\;& (1-\alpha_k)\|x_k-x^*\|_c+\alpha_k(\sigma(1+\|x_k\|_c)+\gamma_c\|x_k-x^*\|_c)\nonumber\\
	\leq\;& (1-\alpha_k)\|x_k-x^*\|_c+\alpha_k(\sigma\|x_k-x^*\|_c+\sigma(1+\|x^*\|_c)+\gamma_c\|x_k-x^*\|_c)\nonumber\\
	=\;& (1+(\sigma+\gamma_c-1)\alpha_k)\|x_k-x^*\|_c+\alpha_k\sigma(1+\|x^*\|_c)\nonumber\\
	=\;& (1+D\alpha_k)\|x_k-x^*\|_c+\alpha_k\sigma(1+\|x^*\|_c),\label{eq:proposition:worst-case bound:before_recursion}
\end{align}
where the second inequality follows from Assumption \ref{as:contraction} and Assumption \ref{as:multi}. To proceed, we need the following lemma, which is a general result of solving recursive inequalities.

\begin{lemma}\label{le:help1}
	Consider a scalar-valued sequence $\{w_k\}$ that satisfies
	\begin{align*}
		w_{k+1}\leq (1+\beta_1\alpha_k)w_k+\beta_2 \alpha_k,\quad \forall\;k\geq 0,
	\end{align*}
	where $w_0\geq 0$ and $\{\alpha_k\}$ is a sequence of positive real numbers. Suppose that $\beta_1>-1/\alpha_0$ and $\beta_2>0$. Then, we have for all $k\geq 0$ that
	\begin{align*}
		w_k\leq \begin{dcases}
			e^{\beta_1\sum_{i=0}^{k-1}\alpha_i}w_0+\frac{\beta_2}{\beta_1}(e^{\beta_1\sum_{i=0}^{k-1}\alpha_i}-1),& \textit{if }\beta_1>0,\\
			w_0+\beta_2\sum_{i=0}^{k-1}\alpha_i,&\textit{if }\beta_1=0,\\
			w_0-\frac{\beta_2}{\beta_1},&\textit{if }\beta_1<0.
		\end{dcases}
	\end{align*}
\end{lemma}

The proof of Lemma \ref{le:help1} is presented in Appendix \ref{pf:le:help1}. Applying Lemma \ref{le:help1} to Eq. (\ref{eq:proposition:worst-case bound:before_recursion}), we have 
\begin{align}\label{eq:proposition-worst-case-bound-last-inequality}
	\|x_k-x^*\|_c\leq \begin{dcases}
		e^{D\sum_{i=0}^{k-1}\alpha_i}\|x_0-x^*\|_c+\frac{\sigma(1+\|x^*\|_c)}{D}(e^{D\sum_{i=0}^{k-1}\alpha_i}-1),&\textit{if }D> 0,\\
		\|x_0-x^*\|_c+\sigma(1+\|x^*\|_c)\sum_{i=0}^{k-1}\alpha_i,&\textit{if }D=0,\\
		\|x_0-x^*\|_c-\frac{\sigma(1+\|x^*\|_c)}{D}&\textit{if }D< 0.
	\end{dcases} 
\end{align}
The rest of the proof is to evaluate $\sum_{i=0}^{k-1}\alpha_i$ and $e^{D\sum_{i=0}^{k-1}\alpha_i}$ when $\alpha_k=\alpha/(k+h)$.
Observe that
\begin{align*}
	\sum_{i=0}^{k-1}\alpha_i=\sum_{i=0}^{k-1}\frac{\alpha}{i+h}\leq \alpha\int_{-1}^{k-1}\frac{1}{x+h}dx=\alpha\log\left(\frac{k-1+h}{h-1}\right),
\end{align*}
which further implies
\begin{align*}
    \exp\left(D\sum_{i=0}^{k-1}\alpha_i\right)\leq \left(\frac{k-1+h}{h-1}\right)^{\alpha D}.
\end{align*}
The final result follows by combining the upper bounds we obtained for $\sum_{i=0}^{k-1}\alpha_i$ and $e^{D\sum_{i=0}^{k-1}\alpha_i}$ with Eq. (\ref{eq:proposition-worst-case-bound-last-inequality}).
\end{proof}

\subsection{An Iterative Framework to Improve the Bound}\label{subsec:proof-multi-blueprint}

Now that we have established a time-varying worst-case bound of the SA iterates, the next step is to establish Proposition \ref{prop:blueprint_intro}, a blueprint for the iterative refinement of bounds, which is restated in the following for ease of presentation.

\begin{proposition}\label{prop:blueprint}
    Given a probability tolerance level $\delta\in (0,1)$, suppose that there exists a \textit{non-decreasing} sequence $\{T_k(\delta)\}$ such that 
\begin{align}\label{eq:bootstrapping_recipe_before}
    \mathbb{P}(\|x_k-x^*\|_c^2\leq  T_k(\delta),\forall\;k\geq 0)\geq 1-\delta.
\end{align}
Then, for any $\delta'\in (0,1-\delta)$, there exists $T_k(\delta,\delta')=\mathcal{O}(T_k(\delta)\alpha_k)$ such that 
\begin{align}\label{eq:bootstrapping_recipe}
    \mathbb{P}(\|x_k-x^*\|_c^2\leq  T_k(\delta,\delta'),\forall\;k\geq 0)\geq 1-\delta-\delta'.
\end{align}
\end{proposition}

Since $x_0$ is initialized deterministically, we must have $T_0(\delta)\geq \|x_0-x^*\|_c^2$ a.s. 
Once Proposition \ref{prop:blueprint} is established, we can use Proposition \ref{prop:worst_case_bound} as initialization and iteratively improve the bound using Proposition \ref{prop:blueprint} to prove Theorem \ref{thm:multi}. To prove Proposition \ref{prop:blueprint}, we use a Lyapunov argument. The construction of our Lyapunov function needs the following definition.
\begin{definition}\label{def:smoothness}
  Let $f:\mathbb{R}^d\mapsto\mathbb{R}$ be a convex differentiable function. Then, $f(\cdot)$ is said to be $L$-smooth with respect to some norm $\|\cdot\|$ if and only if $f(y) \leq  f(x) + \nabla f(x)^\top (y-x) + \frac{L}{2}\|x-y\|^2$ for all $x,y\in\mathbb{R}^d$.
\end{definition}
Inspired by \cite{chen2020finite}, we will use the generalized Moreau envelope defined as
\begin{align*}
	M(x) = \min_{u\in\mathbb{R}^d} \left\{ \frac{1}{2}\|u\|_c^2 + \frac{1}{2\mu} \|x-u\|_s^2 \right\}
\end{align*}
as our Lyapunov function,
where $\|\cdot\|_s$ is a smoothing norm chosen such that $\frac{1}{2}\|\cdot\|_s^2$ is an $L$-smooth function with respect to $\| \cdot \|_s$, and $\mu>0$ is a tunable constant. Intuitively, the generalized Moreau envelope is constructed as a smooth approximation of the norm-square function $\|x\|_c^2$, which itself is in general not smooth (for example, $\|\cdot\|_\infty^2$). This was formally established in \cite[Lemma 2.1]{chen2020finite} and is presented in the following lemma for completeness. Let $\ell_{cs}$ and $u_{cs}$ be two positive constants such that $\ell_{cs}\|x\|_s\leq  \|x\|_c\leq u_{cs}\|x\|_s$ for all $x\in\mathbb{R}^d$, which is always possible due to the equivalence between norms in finite-dimensional spaces. We assume without loss of generality that $\ell_{cs}\in (0,1]$ and $u_{cs}\in [1,+\infty)$.

\begin{lemma}[Lemma 2.1 of \cite{chen2020finite}]\label{prop:Moreau}
	The generalized Moreau envelope $M(\cdot)$ has the following properties: (1) The function $M(\cdot)$ is convex and is $L/\mu$ -- smooth with respect to $\|\cdot\|_s$. (2) There exists a norm, denoted by $\|\cdot\|_M$, such that $M(x)=\frac{1}{2}\|x\|_M^2$ for all $x\in\mathbb{R}^d$. (3) It holds that $(1+\mu \ell_{cs}^2)^{1/2}\|x\|_M\leq \|x\|_c\leq (1+\mu u_{cs}^2)^{1/2}\|x\|_M$ for all $x\in\mathbb{R}^d$.
\end{lemma}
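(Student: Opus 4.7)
}

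The plan is to view $M$ as the infimal convolution of $f(u)=\tfrac{1}{2}\|u\|_c^2$ with $g(z)=\tfrac{1}{2\mu}\|z\|_s^2$, and to extract each property from the known structure of such convolutions. Throughout, I will denote by $u^*(x)$ the unique minimizer in the definition of $M(x)$; uniqueness follows because $g$ is strictly convex (indeed strongly convex up to the factor $1/\mu$) and $f$ is convex, so the objective is strictly convex in $u$.

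For property (1), convexity of $M$ is immediate: the infimal convolution of two convex functions is convex. For the $L/\mu$-smoothness, I would invoke the standard duality identity $M = (f^* + g^*)^*$ together with the fact that strong convexity of $g^*$ is equivalent to smoothness of $g$. Since $\tfrac{1}{2}\|\cdot\|_s^2$ is $L$-smooth with respect to $\|\cdot\|_s$ by assumption, $g$ is $L/\mu$-smooth, and an envelope-theorem argument yields $\nabla M(x)=\nabla g(x-u^*(x))$. Lipschitz continuity of $\nabla M$ with constant $L/\mu$ then follows from the smoothness of $g$ combined with the fact that the map $x\mapsto x-u^*(x)$ is $1$-Lipschitz with respect to $\|\cdot\|_s$ (a consequence of the optimality condition $\nabla g(x-u^*(x))\in\partial f(u^*(x))$ and monotonicity of subdifferentials).

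For property (2), I will check that $\sqrt{2M(\cdot)}$ satisfies the norm axioms. Positivity and definiteness follow because $M(x)\ge 0$ with equality forcing both $u^*=0$ and $x-u^*=0$. Positive $2$-homogeneity, i.e.\ $M(tx)=t^2 M(x)$ for all $t\in\mathbb{R}$, is obtained by the substitution $u=tv$ in the minimization and by using symmetry of $\|\cdot\|_s$ and $\|\cdot\|_c$. The triangle inequality is the only nontrivial piece: combining convexity of $M$ with 2-homogeneity gives, for any $\alpha\in(0,1)$,
\begin{equation*}
M(x+y)=M\!\left(\alpha\cdot\tfrac{x}{\alpha}+(1-\alpha)\cdot\tfrac{y}{1-\alpha}\right)\le\frac{M(x)}{\alpha}+\frac{M(y)}{1-\alpha},
\end{equation*}
and optimizing over $\alpha$ at $\alpha=\sqrt{M(x)}/(\sqrt{M(x)}+\sqrt{M(y)})$ yields $M(x+y)\le(\sqrt{M(x)}+\sqrt{M(y)})^2$, which is precisely the triangle inequality for $\|\cdot\|_M:=\sqrt{2M(\cdot)}$.

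For property (3), the two inequalities are obtained by choosing clever test points in the minimization. For the upper bound on $\|x\|_M$, I would start from the feasibility bound $2M(x)\le \|\alpha x\|_c^2+\tfrac{1}{\mu}\|(1-\alpha)x\|_s^2$ obtained by taking $u=\alpha x$; using $\|\cdot\|_s\le \|\cdot\|_c/\ell_{cs}$ and optimizing over $\alpha$ at $\alpha=1/(1+\mu\ell_{cs}^2)$ gives $2M(x)\le \|x\|_c^2/(1+\mu\ell_{cs}^2)$. For the lower bound, the triangle inequality gives $\|x\|_c\le \|u^*\|_c+\|x-u^*\|_c\le \|u^*\|_c+u_{cs}\|x-u^*\|_s$, and the weighted AM-GM inequality $(a+b)^2\le(1+t)a^2+(1+1/t)b^2$ at $t=\mu u_{cs}^2$ produces $\|x\|_c^2\le(1+\mu u_{cs}^2)\cdot 2M(x)$. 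The main obstacle is bookkeeping: choosing the right $\alpha$ and $t$ so that the resulting upper bound exactly matches the $2M(x)$ expression, but these are determined uniquely by requiring the two terms in the optimization to balance, and the algebra is short once the right substitution is identified.
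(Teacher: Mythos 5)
Your proposal is correct, and it is essentially the standard argument: note that this paper does not prove Lemma \ref{prop:Moreau} at all, but imports it verbatim from Lemma 2.1 of \cite{chen2020finite}, whose proof follows the same three steps you use — the conjugate-duality identity $M=(f^*+g^*)^*$ with the smoothness/strong-convexity duality for part (1), positive $2$-homogeneity plus convexity to verify that $\sqrt{2M(\cdot)}$ is a norm for part (2), and the test points $u=\alpha x$ (respectively a minimizer $u^*$) with the weighted AM--GM inequality for part (3); your choices $\alpha=1/(1+\mu\ell_{cs}^2)$ and $t=\mu u_{cs}^2$ check out exactly.

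Two small cautions, neither of which breaks the proof. First, your claim that $g(z)=\frac{1}{2\mu}\|z\|_s^2$ is strictly (even strongly) convex, hence that $u^*(x)$ is unique, is not justified for a general norm $\|\cdot\|_s$: a norm whose square is $L$-smooth need not be strictly convex. Everywhere you use $u^*(x)$ (definiteness in (2), the lower bound in (3)) only attainment of the minimum is needed, which follows from coercivity and continuity, so you should phrase it that way. Second, the claim that $x\mapsto x-u^*(x)$ is $1$-Lipschitz with respect to $\|\cdot\|_s$ does not follow from monotonicity of $\partial f$ alone outside the Euclidean setting (nonexpansiveness of prox-type maps is a Euclidean phenomenon); fortunately this side argument is redundant, since your primary duality route — $g$ is $L/\mu$-smooth, so $g^*$ and hence $f^*+g^*$ is $\mu/L$-strongly convex in the dual norm, so its conjugate $M$ is $L/\mu$-smooth — already delivers part (1). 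If you want a direct gradient-based derivation instead, replace the $1$-Lipschitz claim by co-coercivity of $\nabla g$, which combined with the monotonicity inequality yields $\|\nabla g(z_1)-\nabla g(z_2)\|_{s,*}\le \frac{L}{\mu}\|x_1-x_2\|_s$ without any nonexpansiveness assumption.
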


For simplicity of notation, denote $\ell_{cM}=(1+\mu \ell_{cs}^2)^{1/2}$, $u_{cM}=(1+\mu u_{cs}^2)^{1/2}$, and $\tilde{\gamma}_c=\gamma_c u_{cM}/\ell_{cM}$. The tunable parameter $\mu>0$ is chosen such that $\tilde{\gamma}_c<1$, which is always possible since $\gamma_c\in (0,1)$ and $\lim_{\mu\rightarrow 0}u_{cM}/\ell_{cM}=1$.

To proceed, recall that in the development of classical concentration inequalities such as Hoeffding's inequality and Chernoff bound, an important step is to bound the MGF of the random variable of interest. Once that is done, the concentration bound can be derived using the Markov inequality together with the bound on the MGF. Inspired by this, we will bound the MGF of a modified version of the generalized Moreau envelope. Such a modification is introduced to address the potential unboundedness issue of the iterates $\{x_k\}$.

Let $E_k(\delta)=\{\|x_t-x^*\|_c^2\leq  T_t(\delta),\forall\;t=0,1,\cdots,k\}$. Note that $\{E_k(\delta)\}_{k\geq 0}$ is by definition a sequence of decreasing events, i.e., $E_{k+1}(\delta)\subseteq E_k(\delta)$ for all $k\geq 0$. In addition, according to Eq. (\ref{eq:bootstrapping_recipe_before}), we have $\mathbb{P}(E_k(\delta))\geq 1-\delta$ for any $k\geq 0$. Let $\lambda_k=\theta\alpha_k^{-1} T_k(\delta)^{-1}$, where $\theta$ is a tunable constant yet to be chosen and $\alpha_k$ is the stepsize. For any $k\geq 0$, let
\begin{align}\label{def:Z_k}
	Z_k=\log\left(\mathbb{E}\left[\exp\left(\lambda_k\mathds{1}_{E_k(\delta)} M(x_k-x^*)\right)\right]\right),
\end{align}
which is the log-MGF of a modified version of the generalized Moreau envelope that will be frequently used in our analysis. To understand the intuition behind the definition of $Z_k$, suppose that $\|x_k-x^*\|_c$ is uniformly bounded by a deterministic constant, i.e., the case $D<0$ in Proposition \ref{prop:worst_case_bound}. Then we can choose $T_k(\delta)$ to be the constant worst case bound provided in Proposition \ref{prop:worst_case_bound} (3), which implies $\mathds{1}_{E_k(\delta)}=1$ a.s. In this case, $Z_k$ becomes the standard log-MGF. The fact that we do not have such a strong boundedness property motivates us to introduce the additional parameters $T_k(\delta)$ and $\mathds{1}_{E_k(\delta)}$, which are crucial for the development of our bootstrapping argument that is used to overcome the challenge of having unbounded iterates.

\subsubsection{Bounding the Log-MGFs}\label{subsec:step1}
To bound $Z_k$, we first derive a recursive bound connecting $Z_k$ and $Z_{k+1}$, which further implies an outright bound on $Z_k$ by solving the recursion. To state the result, we need the following notation. Let $D_0=2(1-\tilde{\gamma}_c)\in (0,1)$, $D_1=4\sigma^2/\ell_{cM}^2$, and $D_2=2L(2+\sigma)^2u_{cM}^2/(\mu\ell_{cs}^2)$. The parameter $\theta$ is chosen as $\theta=D_0\|x_0-x^*\|_c^2/[8D_1((1+\|x^*\|_c)^2+\|x_0-x^*\|_c^2)]$. The stepsizes are chosen according to the following condition.
\begin{condition}\label{condition:stepsize_multi1}
Let $\alpha_k=\alpha/(k+h)$ for all $k\geq 0$, where $\alpha>2/D_0$ and $h>1$ are chosen such that $\alpha_0\leq \min(1,D_0,D_0/(4D_2))$.
\end{condition}

Now, we are ready to state a recursive bound of $Z_k$ in the following proposition.
\begin{proposition}\label{prop:Bound-log-MGF}
	It holds that
	\begin{align}\label{prop:Bound-log-MGF-eq1}
		Z_{k+1}\leq \exp\left(-\frac{\alpha D_0/2-1}{\alpha}\alpha_k\right)Z_k+2\alpha_k^2 \lambda_k D_2(1+\|x^*\|_c)^2,\quad \forall\,k\geq 0.
	\end{align}
\end{proposition}
\begin{proof}[Proof of Proposition \ref{prop:Bound-log-MGF}]
Using the smoothness property of the generalized Moreau envelope $M(\cdot)$ (cf. Lemma \ref{prop:Moreau} (1)) and the update equation (\ref{eq:stochastic-approximation}), we have for all $k\geq 0$ that
\begin{align*}
	M(x_{k+1}\!-\!x^*) 
	\leq\;& M(x_k\!-\!x^*) \!+\! \alpha_k \nabla M(x_k\!-\!x^*)^\top ( F(x_k,Y_k) \!-\! x_k )\!+\!\frac{L\alpha_k^2}{2\mu} \| F(x_k,Y_k) \!-\! x_k \|_s^2\\
	=\;& M(x_k\!-\!x^*) \!+\! \alpha_k \nabla M(x_k\!-\!x^*)^\top ( \Bar{F}(x_k)\!-\!x_k) +\frac{L\alpha_k^2}{2\mu} \left\| F(x_k,Y_k) \!-\! x_k \right\|_s^2\\
	&+ \alpha_k \nabla M(x_k-x^*)^\top (F(x_k,Y_k) - \Bar{F}(x_k))\\
	\leq \;& (1- 2\alpha_k(1-\tilde{\gamma}_c)) M(x_k-x^*) +\alpha_k \nabla M(x_k-x^*)^\top ( F(x_k,Y_k) - \Bar{F}(x_k) )\\
	&+\frac{L\alpha_k^2}{2\mu} \left\| F(x_k,Y_k) \!-\! x_k \right\|_s^2,
\end{align*}
where the last inequality follows from \cite[Lemma A.1]{chen2021finite}. Therefore, by first multiplying $\lambda_{k+1}\mathds{1}_{E_{k+1}(\delta)}$ and then taking exponential on both sides of the previous inequality, we have
\begin{align*}
    &\exp\left(\lambda_{k+1}\mathds{1}_{E_{k+1}(\delta)}M(x_{k+1}-x^*)\right)\\
    \leq \;&\exp\left(\lambda_{k+1}\mathds{1}_{E_{k+1}(\delta)}(1- 2\alpha_k(1-\tilde{\gamma}_c)) M(x_k-x^*)\right)\\ 
    &\times \exp\left(\alpha_k\lambda_{k+1}\mathds{1}_{E_{k+1}(\delta)} \nabla M(x_k-x^*)^\top ( F(x_k,Y_k) - \Bar{F}(x_k) )\right)\\
	&\times \exp\left(\frac{L\alpha_k^2}{2\mu} \lambda_{k+1}\mathds{1}_{E_{k+1}(\delta)}\left\| F(x_k,Y_k) - x_k \right\|_s^2\right)\\
	\leq \;&\exp\left(\lambda_{k+1}\mathds{1}_{E_k(\delta)}(1- 2\alpha_k(1-\tilde{\gamma}_c)) M(x_k-x^*)\right)\\ &\times \exp\left(\alpha_k\lambda_{k+1}\mathds{1}_{E_k(\delta)} \nabla M(x_k-x^*)^\top ( F(x_k,Y_k) - \Bar{F}(x_k) )\right)\\
	&\times \exp\left(\frac{L\alpha_k^2}{2\mu} \lambda_{k+1}\mathds{1}_{E_k(\delta)}\left\| F(x_k,Y_k) - x_k \right\|_s^2\right),
\end{align*}
where in the last inequality we used the fact that $\{E_k(\delta)\}$ is a decreasing sequence of events (which implies $\mathds{1}_{E_{k+1}(\delta)}\leq \mathds{1}_{E_k(\delta)}$). Taking expectation conditioning on $\mathcal{F}_k$ on both sides of the previous inequality , we obtain
\begin{align}
	&\mathbb{E}\left[\exp\left(\lambda_{k+1}\mathds{1}_{E_{k+1}(\delta)}M(x_{k+1}-x^*)\right)\mid\mathcal{F}_k\right]\nonumber\\
	\leq \;&\exp\left(\lambda_{k+1}\mathds{1}_{E_k(\delta)}(1- 2\alpha_k(1-\tilde{\gamma}_c))M(x_k-x^*)\right)\nonumber\\
	&\times \mathbb{E}\left[\exp\left(\alpha_k\lambda_{k+1}\mathds{1}_{E_k(\delta)} \nabla M(x_k-x^*)^\top ( F(x_k,Y_k) - \Bar{F}(x_k))\right)\right.\nonumber\\
	&\left.\times \exp\left(\frac{L\alpha_k^2}{2\mu} \lambda_{k+1}\mathds{1}_{E_k(\delta)}\left\| F(x_k,Y_k) - x_k \right\|_s^2\right)\;\bigg|\;\mathcal{F}_k\right]\nonumber\\
	\leq \;& \exp\left(\lambda_{k+1}\mathds{1}_{E_k(\delta)}(1- 2\alpha_k(1-\tilde{\gamma}_c))M(x_k-x^*)\right)\nonumber\\
	&\times \underbrace{\mathbb{E}\left[\exp\left(2\alpha_k\lambda_{k+1}\mathds{1}_{E_k(\delta)} \nabla M(x_k-x^*)^\top ( F(x_k,Y_k) - \Bar{F}(x_k) )\right)\mid\mathcal{F}_k\right]^{1/2}}_{T_1}\nonumber\\
	&\times \underbrace{\mathbb{E}\left[\exp\left(\frac{L\alpha_k^2}{\mu} \lambda_{k+1}\mathds{1}_{E_k(\delta)}\left\| F(x_k,Y_k) - x_k \right\|_s^2\right)\;\middle|\;\mathcal{F}_k\right]^{1/2}}_{T_2},\label{eq:multi_T1T2}
\end{align}
where the last line follows from the conditional Cauchy–Schwarz inequality. Next, we bound the terms $T_1$ and $T_2$. To bound the term $T_1$, we will use the conditional Hoeffding's lemma. Observe that Assumption \ref{as:unbiased} implies
\begin{align}\label{eq:Hoeffding_unbiased}	&\mathbb{E}\left[2\alpha_k\lambda_{k+1}\mathds{1}_{E_k(\delta)} \nabla M(x_k-x^*)^\top ( F(x_k,Y_k) - \Bar{F}(x_k) )\mid\mathcal{F}_k\right]\nonumber\\
	=\;&2\alpha_k\lambda_{k+1}\mathds{1}_{E_k(\delta)}\nabla M(x_k-x^*)^\top( \mathbb{E}\left[F(x_k,Y_k)\mid\mathcal{F}_k\right] - \Bar{F}(x_k) )\nonumber\\
	=\;&0.
\end{align}
In addition, we have
\begin{align}
	&\left|2\alpha_k\lambda_{k+1}\mathds{1}_{E_k(\delta)} M(x_k-x^*)^\top (F(x_k,Y_k) - \Bar{F}(x_k))\right|\nonumber\\
	\leq \;&2\alpha_k\lambda_{k+1}\mathds{1}_{E_k(\delta)}\| \nabla M(x_k-x^*)\|_{M}^*\|F(x_k,Y_k) - \Bar{F}(x_k)\|_M\label{eq:Hoeffding_bound1}\\
	\leq \;&2\alpha_k\lambda_{k+1}\mathds{1}_{E_k(\delta)}\|x_k-x^*\|_M\| \nabla \|x_k-x^*\|_M\|_{M}^*\|F(x_k,Y_k) - \Bar{F}(x_k)\|_M\label{eq:Hoeffding_bound2}\\
	\leq \;&2\alpha_k\lambda_{k+1}\mathds{1}_{E_k(\delta)}\|x_k-x^*\|_M\|F(x_k,Y_k) - \Bar{F}(x_k)\|_M\label{eq:Hoeffding_bound3}\\
	\leq \;&\frac{2\alpha_k\lambda_{k+1}\mathds{1}_{E_k(\delta)}}{\ell_{cM}}\|x_k-x^*\|_M\|F(x_k,Y_k) - \Bar{F}(x_k)\|_c\label{eq:Hoeffding_bound4}\\
	\leq \;&\frac{2\sigma\alpha_k\lambda_{k+1}\mathds{1}_{E_k(\delta)}}{\ell_{cM}}\|x_k-x^*\|_M(1+\|x_k\|_c)\label{eq:Hoeffding_bound5}\\
	\leq \;&\frac{2\sigma\alpha_k\lambda_{k+1}\mathds{1}_{E_k(\delta)}}{\ell_{cM}}\|x_k-x^*\|_M(1+\|x_k-x^*\|_c+\|x^*\|_c)\nonumber\\
	\leq \;&\frac{2\sigma\alpha_k\lambda_{k+1}\mathds{1}_{E_k(\delta)}}{\ell_{cM}}\|x_k-x^*\|_M(1+\|x^*\|_c+T_k^{1/2}(\delta))\label{eq:Hoeffding_bound_last},
\end{align}
where Eq. (\ref{eq:Hoeffding_bound1}) follows from H\"{o}lder's inequality with $\|\cdot\|_M^*$ being the dual norm of $\|\cdot\|_M$, Eq. (\ref{eq:Hoeffding_bound2}) follows from $M(x)=\frac{1}{2}\|x\|_M^2$ (cf. Lemma \ref{prop:Moreau}) and the chain rule of calculus, Eq. (\ref{eq:Hoeffding_bound3}) follows from $\| \nabla \|x\|_M\|_M^*\leq 1$ for all $x\in\mathbb{R}^d$ \cite[Lemma 2.6]{shalev2012online}, Eq. (\ref{eq:Hoeffding_bound4}) follows from Lemma \ref{prop:Moreau}, Eq. (\ref{eq:Hoeffding_bound5}) follows from Assumption \ref{as:multi}, and Eq. (\ref{eq:Hoeffding_bound_last}) follows from $\|x_k-x^*\|_c^2\leq T_k(\delta)$ on $E_k(\delta)$. Eqs. (\ref{eq:Hoeffding_unbiased}) and (\ref{eq:Hoeffding_bound_last}) together enable us to use the conditional Hoeffding's lemma on the random variable $2\alpha_k\lambda_{k+1}\mathds{1}_{E_k(\delta)} M(x_k-x^*)^\top (F(x_k,Y_k) - \Bar{F}(x_k))$, which gives us
\begin{align}
	T_1=\;&\mathbb{E}\left[\exp\left(2\alpha_k\lambda_{k+1}\mathds{1}_{E_k(\delta)} M(x_k-x^*)^\top (F(x_k,Y_k) - \Bar{F}(x_k))\right)\,\middle|\,\mathcal{F}_k\right]^{1/2}\nonumber\\
	\leq \;&\exp\left( \frac{\sigma^2\alpha_k^2\lambda_{k+1}^2\mathds{1}_{E_k(\delta)}}{\ell_{cM}^2}\|x_k-x^*\|_M^2(1+\|x^*\|_c+T_k^{1/2}(\delta))^2 \right)\nonumber\\
	\leq  \;&\exp\left(\frac{4\sigma^2\alpha_k^2\lambda_{k+1}^2\mathds{1}_{E_k(\delta)}}{\ell_{cM}^2}M(x_k-x^*)[(1+\|x^*\|_c)^2+T_k(\delta))]\right),\label{eq:bound_on_T1}
\end{align}
where the last inequality follows from $M(x)=\frac{1}{2}\|x\|_M^2$ (cf. Lemma \ref{prop:Moreau}) and $(a+b)^2\leq 2(a^2+b^2)$ for all $a,b\in\mathbb{R}$.

Next, we bound the term $T_2$ in Eq. (\ref{eq:multi_T1T2}). Observe that
\begin{align}
	\left\| F(x_k,Y_k) - x_k \right\|_s
	\leq \;&\frac{1}{\ell_{cs}}\left\| F(x_k,Y_k) - x_k \right\|_c\nonumber\\
	=\;&\frac{1}{\ell_{cs}}\left\| F(x_k,Y_k) - \Bar{F}(x_k)+\Bar{F}(x_k)-\Bar{F}(x^*)+x^*-x_k \right\|_c\nonumber\\
	\leq \;&\frac{1}{\ell_{cs}}\left(\| F(x_k,Y_k) - \Bar{F}(x_k)\|_c+\|\Bar{F}(x_k)-\Bar{F}(x^*)\|_c+\|x^*-x_k \|_c\right)\nonumber\\
	\leq \;&\frac{1}{\ell_{cs}}\left(\sigma(1+\|x_k\|_c)+(\gamma_c+1)\|x_k-x^*\|_c\right)\label{eq:the_only_one}\\
	\leq \;&\frac{1}{\ell_{cs}}\left(\sigma(1+\|x_k-x^*\|_c+\|x^*\|_c)+2\|x_k-x^*\|_c\right)\nonumber\\
	\leq \;&\frac{1}{\ell_{cs}}\left((2+\sigma)\|x_k-x^*\|_c+\sigma(1+\|x^*\|_c)\right),\nonumber
\end{align}
where Eq. (\ref{eq:the_only_one}) follows from Assumptions \ref{as:contraction} and \ref{as:multi}.
Therefore, we have
\begin{align*}
	T_2
	=\;&\mathbb{E}\left[\exp\left(\frac{L\alpha_k^2}{\mu} \lambda_{k+1}\mathds{1}_{E_k(\delta)}\left\| F(x_k,Y_k) - x_k \right\|_s^2\right)\;\middle|\;\mathcal{F}_k\right]^{1/2}\\
	\leq \;&\exp\left(\frac{L\alpha_k^2\lambda_{k+1}\mathds{1}_{E_k(\delta)}}{2\mu\ell_{cs}^2} \left((2+\sigma)\|x_k-x^*\|_c+\sigma(1+\|x^*\|_c)\right)^2\right)\\
	\leq \;&\exp\left(\frac{L\alpha_k^2\lambda_{k+1}\mathds{1}_{E_k(\delta)}}{\mu\ell_{cs}^2} \left((2+\sigma)^2\|x_k-x^*\|_c^2+\sigma^2(1+\|x^*\|_c)^2\right)\right)\\
	\leq \;&\exp\left(\frac{L\alpha_k^2\lambda_{k+1}\mathds{1}_{E_k(\delta)}}{\mu\ell_{cs}^2} \left((2+\sigma)^2u_{cM}^2\|x_k-x^*\|_M^2+\sigma^2(1+\|x^*\|_c)^2\right)\right)\\
	\leq  \;&\exp\left(\frac{2L(2+\sigma)^2u_{cM}^2\alpha_k^2\lambda_{k+1}\mathds{1}_{E_k(\delta)}}{\mu\ell_{cs}^2}M(x_k-x^*)+ \frac{L\sigma^2\alpha_k^2\lambda_{k+1}}{\mu\ell_{cs}^2}(1+\|x^*\|_c)^2\right).
\end{align*}
Using the upper bounds we obtained for $T_1$ (cf. Eq. (\ref{eq:bound_on_T1})) and $T_2$ (cf. the previous inequality) in Eq. (\ref{eq:multi_T1T2}) , we have
\begin{align}
	&\mathbb{E}\left[\exp\left(\lambda_{k+1}\mathds{1}_{E_{k+1}(\delta)}M(x_{k+1}-x^*)\right)\mid\mathcal{F}_k\right]\nonumber\\
	\leq \;& \exp\left(\lambda_{k+1}\mathds{1}_{E_k(\delta)}(1- 2\alpha_k(1-\tilde{\gamma}_c))M(x_k-x^*)\right)\nonumber\\
	&\times \exp\left(\frac{4\sigma^2\alpha_k^2\lambda_{k+1}^2\mathds{1}_{E_k(\delta)}}{\ell_{cM}^2}M(x_k-x^*)[(1+\|x^*\|_c)^2+T_k(\delta))] \right)\nonumber\\
	&\times \exp\left(\frac{2L(2\!+\!\sigma)^2u_{cM}^2\alpha_k^2\lambda_{k+1}\mathds{1}_{E_k(\delta)}}{\mu\ell_{cs}^2}M(x_k\!-\!x^*)\!+\! \frac{L\sigma^2\alpha_k^2\lambda_{k+1}}{\mu\ell_{cs}^2}(1\!+\!\|x^*\|_c)^2\right)\nonumber\\
	=\;&\exp\bigg(\lambda_{k+1}\mathds{1}_{E_k(\delta)}M(x_k-x^*)\bigg(1- 2\alpha_k(1-\tilde{\gamma}_c)\nonumber\\
	&\left.+\frac{4\sigma^2\alpha_k^2\lambda_{k+1}[(1+\|x^*\|_c)^2+T_k(\delta))]}{\ell_{cM}^2}+\frac{2L(2+\sigma)^2u_{cM}^2\alpha_k^2}{\mu\ell_{cs}^2}\right)\bigg)\nonumber\\
	&\times \exp\left( \frac{L\sigma^2\alpha_k^2\lambda_{k+1}}{\mu\ell_{cs}^2}(1+\|x^*\|_c)^2\right)\nonumber.
\end{align}
Recall that we have denoted $D_0=2(1-\tilde{\gamma}_c)$, $D_1=4\sigma^2/\ell_{cM}^2$, and $D_2=2L(2+\sigma)^2u_{cM}^2/(\mu\ell_{cs}^2)$. In addition, to simplify the notation, let
\begin{align}
    T_3=\;&\frac{\lambda_{k+1}}{\lambda_k}\left(1- \alpha_kD_0+\alpha_k^2\lambda_{k+1}D_1((1+\|x^*\|_c)^2+T_k(\delta))+\alpha_k^2D_2\right),\label{eq:def:T3}\\
    T_4=\;&\alpha_k^2\frac{\lambda_{k+1}}{\lambda_k}.\label{eq:def:T4}
\end{align}
Then, the previous inequality reads 
\begin{align}  &\mathbb{E}\left[\exp\left(\lambda_{k+1}\mathds{1}_{E_{k+1}(\delta)}M(x_{k+1}-x^*)\right)\mid\mathcal{F}_k\right]\nonumber\\
\leq \;&\exp\left(T_3\lambda_k\mathds{1}_{E_k(\delta)}M(x_k-x^*)\right)\exp\left( T_4 \lambda_k D_2(1+\|x^*\|_c)^2\right).\label{eq:multi_T3T4}
\end{align}
Next, we bound the terms $T_3$ and $T_4$ defined in Eqs. (\ref{eq:def:T3}) and (\ref{eq:def:T4}), respectively. On the one hand, it is clear that $T_3\geq 0$ because $\alpha_0\leq D_0$. On the other hand, since $\lambda_k=\theta\alpha_k^{-1} T_k(\delta)^{-1}$, we have
\begin{align}
 T_3=\;&\frac{\lambda_{k+1}}{\lambda_k}\left(1- \alpha_kD_0+\alpha_k^2\lambda_{k+1}D_1((1+\|x^*\|_c)^2+T_k(\delta))+\alpha_k^2D_2\right)\nonumber\\
=\;&\frac{\alpha_k T_k(\delta)}{\alpha_{k+1}T_{k+1}(\delta)}\left(1- \alpha_kD_0+\frac{\theta \alpha_k^2D_1((1+\|x^*\|_c)^2+T_k(\delta))}{\alpha_{k+1}T_{k+1}(\delta)}+\alpha_k^2D_2\right)\nonumber\\
\leq \;&\frac{\alpha_k }{\alpha_{k+1}}\left(1- \alpha_kD_0+\frac{\theta \alpha_k^2D_1((1+\|x^*\|_c)^2+T_k(\delta))}{\alpha_{k+1}T_{k+1}(\delta)}+\alpha_k^2D_2\right),\label{eq:bound_T3_first}
\end{align}
where the last line follows from $\{T_k(\delta)\}$ being a non-decreasing sequence. To proceed, observe that
\begin{align}
    \frac{\theta \alpha_k^2D_1((1+\|x^*\|_c)^2+T_k(\delta))}{\alpha_{k+1}T_{k+1}(\delta)}\leq \;&\frac{\theta \alpha_k^2D_1}{\alpha_{k+1}}\frac{(1+\|x^*\|_c)^2+T_{k+1}(\delta)}{T_{k+1}(\delta)}\label{eq:bound_T3_1}\\
    \leq \;&\frac{\theta \alpha_k^2D_1}{\alpha_{k+1}}\frac{(1+\|x^*\|_c)^2+\|x_0-x^*\|_c^2}{\|x_0-x^*\|_c^2}\label{eq:bound_T3_3}\\
    =\;&\frac{ \alpha_k^2D_0}{8\alpha_{k+1}}\label{eq:bound_T3_4}\\
    \leq \;&\frac{ \alpha_kD_0}{4},\label{eq:bound_T3_last}
\end{align}
where Eq. (\ref{eq:bound_T3_1}) follows from $\{T_k(\delta)\}$ being a non-decreasing sequence, Eq. (\ref{eq:bound_T3_3}) follows from $\|x_0-x^*\|_c^2\leq T_0(\delta)\leq T_{k+1}(\delta)$ and the numerical inequality $(a+c)/(b+c)\leq  a/b$ for any $a,b>0$ with $a\geq b$ and $c>0$, Eq. (\ref{eq:bound_T3_4}) follows from choosing $\theta=D_0\|x_0-x^*\|_c^2/[8D_1((1+\|x^*\|_c)^2+\|x_0-x^*\|_c^2)]$, and Eq. (\ref{eq:bound_T3_last}) follows from $\alpha_k/\alpha_{k+1}\leq (h+1)/h\leq 2$ (cf. Condition \ref{condition:stepsize_multi1}). Therefore, using the previous inequality in Eq. (\ref{eq:bound_T3_first}), we have
\begin{align}
    T_3\leq \;&\frac{\alpha_k }{\alpha_{k+1}}\left(1- \alpha_kD_0+\frac{ \alpha_kD_0}{4}+\alpha_k^2D_2\right)\nonumber\\
    \leq \;&\frac{\alpha_k }{\alpha_{k+1}}\left(1- \frac{\alpha_kD_0}{2}\right)\label{eq:final_bound_T3_1}\\
    =\;&\frac{k+1+h}{k+h}\left(1-\frac{\alpha D_0}{2(k+h)}\right)\nonumber\\
    \leq \;&\left(\frac{k+h+1}{k+h}\right)\exp\left(-\frac{D_0\alpha}{2(k+h)}\right)\label{eq:final_bound_T3_2}\\
    = \;&\left[\left(1+\frac{1}{k+h}\right)^{k+h}\right]^{1/(k+h)}\exp\left(-\frac{D_0\alpha}{2(k+h)}\right)\nonumber\\
    \leq \;&\exp\left(\frac{1}{k+h}-\frac{D_0\alpha}{2(k+h)}\right)\label{eq:final_bound_T3_3}\\
    =\;&\exp\left(-\frac{\alpha D_0/2-1}{k+h}\right)\nonumber\\
    =\;&\exp\left(-\frac{\alpha D_0/2-1}{\alpha}\alpha_k\right),\label{final_bound_on_T3}
\end{align}
where Eq. (\ref{eq:final_bound_T3_1}) follows from $\alpha_k\leq D_0/(4D_2)$ (cf. Condition \ref{condition:stepsize_multi1}), Eq. (\ref{eq:final_bound_T3_2}) follows from $1+x\leq e^x$ for all $x\in\mathbb{R}$, and Eq. (\ref{eq:final_bound_T3_3}) follows from $(1+1/x)^x\leq e$ for all $x>0$.
Note that $T_3<1$ because $\alpha>2/D_0$ (cf. Condition \ref{condition:stepsize_multi1}). 

Now, consider the term $T_4$ from Eq. (\ref{eq:def:T4}). We have by definition of $\lambda_k$ that
\begin{align*}
	T_4=\alpha_k^2\frac{\lambda_{k+1}}{\lambda_k}
	=\alpha_k^2\frac{\alpha_kT_k(\delta)}{\alpha_{k+1}T_{k+1}(\delta)}
	\leq \frac{\alpha_k^3}{\alpha_{k+1}}
	\leq 2\alpha_k^2,
\end{align*}
where we used $\{T_k(\delta)\}$ being a non-decreasing sequence and $\alpha_k/\alpha_{k+1}\leq (h+1)/h\leq 2$ (cf. Condition \ref{condition:stepsize_multi1}).
Using the upper bounds we obtained for the terms $T_3$ (cf. Eq. (\ref{final_bound_on_T3})) and $T_4$ (cf. the previous inequality) in Eq. (\ref{eq:multi_T3T4}) , we have
\begin{align}\label{eq10:MGF_recursive}
	&\mathbb{E}\left[\exp\left(\lambda_{k+1}\mathds{1}_{E_{k+1}(\delta)}M(x_{k+1}-x^*)\right)\mid\mathcal{F}_k\right]\nonumber\\
	\leq  \;&\exp\!\left(\!\exp\!\left(\!-\!\frac{\alpha D_0/2\!-\!1}{\alpha}\alpha_k\right)\lambda_k\mathds{1}_{E_k(\delta)}M(x_k\!-\!x^*)\!\right)\!\exp\left( 2\alpha_k^2 \lambda_k D_2(1\!+\!\|x^*\|_c)^2\right).
\end{align}
In view of the definition of $Z_k$ in Eq. (\ref{def:Z_k}), a recursive bound of $Z_k$ can be obtained by first taking total expectation, then taking the logarithm and finally using Jensen's inequality on both sides of Eq. (\ref{eq10:MGF_recursive}). This proves Proposition \ref{prop:Bound-log-MGF}. 
\end{proof}

Repeatedly using Eq. (\ref{prop:Bound-log-MGF-eq1}) of Proposition \ref{prop:Bound-log-MGF} yields an overall bound on $Z_k$. The result is stated in the following lemma, the proof of which involves only standard algebra manipulation and is deferred to Appendix \ref{pf:le:multi_solving_recursion}.

\begin{lemma}\label{le:MGFrecursive}
	It holds for all $k\geq 0$ that
	\begin{align}\label{prop:Bound-log-MGF-eq2}
		Z_k\leq  Z_0\left(\frac{h}{k+h}\right)^{\alpha D_0/2-1}+\frac{4\alpha e D_2 \theta}{\alpha D_0/2-1}\frac{(1+\|x^*\|_c)^2}{\|x_0-x^*\|_c^2},\quad \forall\,k\geq 0.
	\end{align}
\end{lemma}

Now that we have successfully established a recursive bound and an overall bound of our modified log-MGF $Z_k$, the next step is to construct a supermartingale using $Z_k$ and apply Ville's maximal inequality to finish proving Proposition \ref{prop:blueprint}.

\subsubsection{An Exponential Supermartingale}

For any $k\geq 0$, let $\overline{M}_k$ be defined as $\overline{M}_k=\exp(\lambda_k\mathds{1}_{E_k(\delta)} M(x_k-x^*)-D_3\sum_{i=0}^{k-1}\alpha_i)$,
where $D_3= D_0D_2/(4D_1) $. We next show that $\{\overline{M}_k\}$ is a supermartingale with respect to the filtration $\{\mathcal{F}_k\}$.  It is clear that $\{\overline{M}_k\}$ is adapted to $\{\mathcal{F}_k\}$, and is finite in expectation (cf. Lemma \ref{le:MGFrecursive}). In addition, for any $k\geq 0$, we have by Eq. (\ref{eq10:MGF_recursive}) that
\begin{align}
	&\mathbb{E}\left[\exp\left(\lambda_{k+1}\mathds{1}_{E_{k+1}(\delta)}M(x_{k+1}-x^*)\right)\mid\mathcal{F}_k\right]\nonumber\\
	\leq  \;&\exp\left(\lambda_k\mathds{1}_{E_k(\delta)}M(x_k-x^*)\right)\exp\left( 2\alpha_k^2 \lambda_k D_2(1+\|x^*\|_c)^2\right)\nonumber\\
	= \;&\exp\left(\lambda_k\mathds{1}_{E_k(\delta)}M(x_k-x^*)\right)\exp\left(  \frac{2\alpha_k\theta D_2(1+\|x^*\|_c)^2}{T_k(\delta)} \right)\label{eq:SM-1}\\
	\leq  \;&\exp\left(\lambda_k\mathds{1}_{E_k(\delta)}M(x_k-x^*)\right)\exp\left(  \frac{2\alpha_k\theta D_2(1+\|x^*\|_c)^2}{\|x_0-x^*\|_c^2} \right)\label{eq:SM-2}\\
	\leq   \;&\exp\left(\lambda_k\mathds{1}_{E_k(\delta)}M(x_k-x^*)\right)\exp\left(  \frac{\alpha_k D_0D_2}{4D_1}\right)\label{eq:SM-3}\\
	= \;&\exp\left(\lambda_k\mathds{1}_{E_k(\delta)}M(x_k-x^*)\right)\exp\left( D_3\alpha_k\right),\label{eq:SM-4}
\end{align}
where Eq. (\ref{eq:SM-1}) follows from $\lambda_k=\theta\alpha_k^{-1}T_k(\delta)^{-1}$, Eq. (\ref{eq:SM-2}) follows from $\|x_0-x_k\|_c^2\leq T_0(\delta)\leq T_k(\delta)$, Eq. (\ref{eq:SM-3}) follows from choosing $\theta=D_0\|x_0-x^*\|_c^2/[8D_1((1+\|x^*\|_c)^2+\|x_0-x^*\|_c^2)]$, and Eq. (\ref{eq:SM-4}) follows from our notation $D_3= D_0D_2/(4D_1) $. The previous inequality implies $\mathbb{E}\left[\overline{M}_{k+1}\mid\mathcal{F}_k\right]\leq \overline{M}_k$. Therefore,
the random process $\{\overline{M}_k\}$ is a supermartingale adapted to $\{\mathcal{F}_k\}$.

To this end, we have shown that $\{\overline{M}_k\}$ is a supermartingale, and provided a bound on the expectation of $\overline{M}_k$ (cf. Lemma \ref{le:MGFrecursive}). Our next step is to use Ville's maximal inequality to establish the first maximal concentration bound. The result is stated in the following proposition, the proof of which is a standard application of Ville's maximal inequality, and is deferred to Appendix \ref{pf:prop:ville}.

\begin{proposition}\label{prop:ville}
	For any $\delta'\in (0,1)$ and $K\geq 0$, the following inequality holds with probability at least $1-\delta'$:
	\begin{align}
		\sup_{k\geq K}\{\lambda_k\mathds{1}_{E_k(\delta)} \|x_k-x^*\|_c^2\}
	\leq\,& 2u_{cM}^2\log(1/\delta')+\frac{2u_{cM}^2D_0}{16\alpha_0D_1\ell_{cM}^2}\left(\frac{h}{K+h}\right)^{\alpha D_0/2-1}\nonumber\\
 &+\frac{16u_{cM}^2\alpha e D_2 \theta}{\alpha D_0-2}\frac{(1+\|x^*\|_c)^2}{\|x_0-x^*\|_c^2}+2\alpha D_3 u_{cM}^2\log\left(\frac{k-1+h}{K-1+h}\right).\label{eq:first_concentration}
	\end{align}
\end{proposition}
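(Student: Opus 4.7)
The plan is to apply Ville's maximal inequality to the nonnegative supermartingale $\{\overline{M}_k\}_{k \geq K}$ (Lemma \ref{le:super-martingale} established this for $k\geq 0$, and restricting to $k\geq K$ preserves the supermartingale property), with threshold $\mathbb{E}[\overline{M}_K]/\delta'$. This yields that, with probability at least $1-\delta'$,
\begin{align*}
\sup_{k\geq K} \overline{M}_k \;\leq\; \mathbb{E}[\overline{M}_K]/\delta'.
\end{align*}
Taking logarithms and expanding $\log \overline{M}_k = \lambda_k\mathds{1}_{E_k(\delta)} M(x_k-x^*) - D_3\sum_{i=0}^{k-1}\alpha_i$ turns this into the uniform bound
\begin{align*}
\lambda_k\mathds{1}_{E_k(\delta)} M(x_k-x^*) \;\leq\; \log \mathbb{E}[\overline{M}_K] + \log(1/\delta') + D_3\sum_{i=0}^{k-1}\alpha_i, \quad \forall\,k\geq K,
\end{align*}
holding on the same high-probability event.

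Next, since the offset $D_3 \sum_{i=0}^{K-1}\alpha_i$ inside $\overline{M}_K$ is deterministic, one has $\log \mathbb{E}[\overline{M}_K] = Z_K - D_3\sum_{i=0}^{K-1}\alpha_i$, so the partial sum up to $K-1$ cancels and the right-hand side simplifies to $Z_K + \log(1/\delta') + D_3\sum_{i=K}^{k-1}\alpha_i$. Multiplying by $2u_{cM}^2$ and using $\|x\|_c^2 \leq 2u_{cM}^2 M(x)$ from Lemma \ref{prop:Moreau}(3) produces the desired quantity on the left and leaves me to bound each of three pieces on the right. For the sum I would use the elementary integral comparison $\sum_{i=K}^{k-1} 1/(i+h) \leq \log((k-1+h)/(K-1+h))$, which, multiplied by $2\alpha D_3 u_{cM}^2$, reproduces the last term in the stated bound. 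For $Z_K$, Lemma \ref{le:multi_solving_recursion} already gives $Z_K \leq Z_0(h/(K+h))^{\alpha D_0/2 -1} + \frac{4\alpha e D_2}{\alpha D_0/2-1}\frac{(1+\|x^*\|_c)^2}{\|x_0-x^*\|_c^2}$, so it suffices to control $Z_0$.

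To bound $Z_0$ I would first observe that $\mathds{1}_{E_0(\delta)}=1$ a.s.: because $x_0$ is deterministic, $\mathbb{P}(E_0(\delta))\geq 1-\delta$ forces $\|x_0-x^*\|_c^2 \leq T_0(\delta)$. Then $Z_0 = \lambda_0 M(x_0-x^*)$; combining the smoothness bound $M(x_0-x^*)\leq \|x_0-x^*\|_c^2/(2\ell_{cM}^2)$ from Lemma \ref{prop:Moreau}(3) with $\lambda_0 = \theta/(\alpha_0 T_0(\delta))$ and the inequality $\|x_0-x^*\|_c^2/T_0(\delta)\leq 1$ yields $Z_0 \leq \theta/(2\alpha_0\ell_{cM}^2)$, and the definition of $\theta$ then gives $Z_0 \leq D_0/(16\alpha_0 D_1 \ell_{cM}^2)$. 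Assembling the four summands and checking that $2u_{cM}^2\cdot \tfrac{4\alpha e D_2}{\alpha D_0/2 - 1}=\tfrac{16u_{cM}^2 \alpha e D_2}{\alpha D_0 - 2}$ reproduces the bound in the proposition verbatim.

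The main obstacle is a bookkeeping one rather than a conceptual one: I need the $1/T_0(\delta)$ factor in $\lambda_0$ to absorb the $\|x_0-x^*\|_c^2$ coming from $M(x_0-x^*)$ so that the resulting bound on $Z_0$ is independent of the (potentially loose, even polynomially growing) initialization $T_0(\delta)$. This is precisely what makes the bootstrapping step yield an improvement of order $\alpha_k$, and it is the only nontrivial place where the specific scaling $\lambda_k=\theta/(\alpha_k T_k(\delta))$ is essential.
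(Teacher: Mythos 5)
Your proposal is correct and follows essentially the same route as the paper's proof: Ville's maximal inequality applied to the supermartingale $\{\overline{M}_k\}_{k\geq K}$, the bound $Z_K\leq Z_0(h/(K+h))^{\alpha D_0/2-1}+\tfrac{4\alpha e D_2}{\alpha D_0/2-1}\tfrac{(1+\|x^*\|_c)^2}{\|x_0-x^*\|_c^2}$ from Lemma \ref{le:multi_solving_recursion}, the estimate $Z_0\leq \theta/(2\alpha_0\ell_{cM}^2)\leq D_0/(16\alpha_0 D_1\ell_{cM}^2)$, the integral comparison for $\sum_{i=K}^{k-1}\alpha_i$, and the conversion $\|x\|_c^2\leq 2u_{cM}^2 M(x)$. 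The constant bookkeeping (including $2u_{cM}^2\cdot\tfrac{4\alpha e D_2}{\alpha D_0/2-1}=\tfrac{16u_{cM}^2\alpha e D_2}{\alpha D_0-2}$) matches the stated bound, so nothing is missing.
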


\begin{remark}
Suppose that the iterates $\{x_k\}$ are uniformly bounded by a deterministic constant. Then, by choosing $T_k(\delta)$ as the uniform norm-square bound, we have $\lambda_k=c/\alpha_k$ for some constant $c$ and $\mathds{1}_{\{E_k(\delta)\}}=1$ a.s. In this case, multiplying $\alpha_k$ on both sides of Eq. (\ref{eq:first_concentration}), we obtain the desired concentration bound, which has an $\tilde{\mathcal{O}}(1/k)$ rate of convergence and an exponentially small tail $\mathcal{O}(\log(1/\delta'))$. However, in the case of multiplicative noise, the iterates $\{x_k\}$ in general do not admit a uniform bound. Since $\lambda_k=\theta \alpha_k^{-1} T_k(\delta)^{-1}$ and $T_k(\delta)$ can be an increasing function, Eq. (\ref{eq:first_concentration}) does not provide us with the desired rate. 
\end{remark}

Next, we carry out the final step in proving Proposition \ref{prop:blueprint}.
For simplicity of notation, denote the right-hand side of Eq. (\ref{eq:first_concentration}) by $\epsilon(k,K,\delta')$. For any $K\geq 0$, observe that
\begin{align*}
	&\mathbb{P}(\lambda_k \|x_k-x^*\|_c^2 \leq \epsilon(k,K,\delta'), \; \forall\, k\geq K ) \\
	= \;&\mathbb{P}\left( \bigcap_{k=K}^\infty \{\lambda_k \|x_k-x^*\|_c^2 \leq \epsilon(k,K,\delta') \} \right)\\
	\geq \;& \mathbb{P}\left( \bigcap_{k=K}^\infty \{\lambda_k \mathds{1}_{E_k(\delta)} \|x_k-x^*\|_c^2  \leq \epsilon(k,K,\delta')\}\cap E_k(\delta) \right).
\end{align*}
To proceed, note that for any two events, $A$ and $B$, we have
\begin{align*}
    \mathbb{P}(A\cap B)=1-\mathbb{P}(A^c\cup B^c)\geq 1-\mathbb{P}(A^c)-\mathbb{P}(B^c)=\mathbb{P}(A)+\mathbb{P}(B)-1.
\end{align*}
Therefore, we have
\begin{align*}
	&\mathbb{P}(\|x_k-x^*\|_c^2 \leq \epsilon(k,K,\delta')/\lambda_k , \; \forall\, k\geq K )\\
 \geq \; &\mathbb{P}\left( \bigcap_{k=K}^\infty \{\lambda_k \mathds{1}_{E_k(\delta)} \|x_k-x^*\|_c^2  \leq \epsilon(k,K,\delta')\}\cap E_k(\delta) \right)\\
 \geq \;&\mathbb{P}\left( \bigcap_{k=K}^\infty \{\lambda_k \mathds{1}_{E_k(\delta)} \|x_k-x^*\|_c^2  \leq \epsilon(k,K,\delta')\} \right)+\mathbb{P} \left(\bigcap_{k=0}^\infty E_k(\delta)\right)-1\\
 = \;&\mathbb{P} \left( \lambda_k \mathds{1}_{E_k(\delta)} \|x_k-x^*\|_c^2  \leq  \epsilon(k,K,\delta'),\;\forall\;k\geq K \right)+\lim_{k\rightarrow\infty}\mathbb{P} \left( E_k(\delta)\right)-1\\
	\geq \;& (1-\delta')+(1-\delta)-1\\
 =\;&1-\delta-\delta'
\end{align*}
Using the definitions of $\epsilon(k,K,\delta')$ and $\lambda_k$, we arrive at the following result.

\begin{proposition}\label{prop:bootstrapping}
	For any $ \delta'\in (0,1-\delta)$ and $K\geq 0$, with probability at least $1-\delta-\delta'$, we have
	\begin{align*}
	\|x_k-x^*\|_c^2\leq \;&\frac{\alpha_k T_k(\delta)}{\theta} \bigg[2u_{cM}^2\log(1/\delta')+\frac{2u_{cM}^2D_0}{16\alpha_0D_1\ell_{cM}^2}\left(\frac{h}{K+h}\right)^{\alpha D_0/2-1}\\
 &+\frac{16u_{cM}^2\alpha e D_2 \theta}{\alpha D_0-2}\frac{(1+\|x^*\|_c)^2}{\|x_0-x^*\|_c^2}+2\alpha D_3 u_{cM}^2\log\left(\frac{k-1+h}{h-1}\right)\bigg],\;\forall\,k\geq K.
\end{align*}
\end{proposition}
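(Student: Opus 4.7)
The plan is to combine the maximal bound from Proposition~\ref{prop:ville} with the initial high-probability bound (\ref{eq:22}) via a union bound, and then divide through by $\lambda_k = \theta/(\alpha_k T_k(\delta))$ to convert the bound on $\lambda_k \mathds{1}_{E_k(\delta)} \|x_k-x^*\|_c^2$ into a bound on $\|x_k-x^*\|_c^2$ itself.

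More concretely, first I would invoke Proposition~\ref{prop:ville} to obtain the event
\[
G_1 := \Bigl\{\sup_{k\geq K}\lambda_k \mathds{1}_{E_k(\delta)} \|x_k-x^*\|_c^2 \leq R(K,k,\delta')\Bigr\},
\]
where $R(K,k,\delta')$ denotes the right-hand side of (\ref{eq:first_concentration}); by Proposition~\ref{prop:ville} we have $\mathbb{P}(G_1)\geq 1-\delta'$. Next, the assumption on $\{T_k(\delta)\}$ guarantees that the event
\[
G_2 := \bigcap_{k\geq 0} E_k(\delta) = \bigl\{\|x_k-x^*\|_c^2 \leq T_k(\delta),\ \forall\, k\geq 0\bigr\}
\]
satisfies $\mathbb{P}(G_2)\geq 1-\delta$. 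On $G_2$, every indicator $\mathds{1}_{E_k(\delta)}$ equals $1$, so on the intersection $G_1\cap G_2$ the supremum bound becomes
\[
\sup_{k\geq K}\lambda_k \|x_k-x^*\|_c^2 \leq R(K,k,\delta').
\]

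The final step is routine algebra: dividing by $\lambda_k = \theta/(\alpha_k T_k(\delta))$ multiplies the right-hand side by $\alpha_k T_k(\delta)/\theta$, which recovers exactly the bound stated in the proposition. A union bound then gives $\mathbb{P}(G_1 \cap G_2)\geq 1-\delta-\delta'$, as required. Since the argument is essentially a bookkeeping assembly of two previously established facts, there is no serious obstacle; the only subtle point to check is that the two events $G_1$ and $G_2$ are compatible in the sense that the $\lambda_k$ appearing in Proposition~\ref{prop:ville} is defined using the same sequence $\{T_k(\delta)\}$ that defines $G_2$, which is built into the setup of Section~\ref{subsec:step1}.
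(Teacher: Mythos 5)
Your proposal is correct and follows essentially the same route as the paper: the paper likewise intersects the maximal event from Proposition \ref{prop:ville} with the event $\bigcap_k E_k(\delta)$ (on which the indicators equal one), applies a union bound on the complements to get probability at least $1-\delta-\delta'$, and divides by $\lambda_k=\theta/(\alpha_k T_k(\delta))$ to obtain the stated bound. The only cosmetic difference is that the paper carries out the intersection and union bound eventwise in $k$ rather than through two global events $G_1,G_2$, which changes nothing of substance.
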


Setting $K=0$ in this proposition yields Proposition \ref{prop:blueprint}, which is our bootstrapping blueprint. Note that Proposition \ref{prop:bootstrapping} is a stronger version of Proposition \ref{prop:blueprint} because the result holds for all $K\geq 0$. 

\subsection{Completing the Bootstrapping Argument}\label{subsec:proof-multi-bootstrapping}
We start with the worst-case bound derived in Proposition \ref{prop:worst_case_bound} and then iteratively improve the bound using Proposition \ref{prop:blueprint}, except in the last step, where we use Proposition \ref{prop:bootstrapping} to obtain the maximal bound for any $K\geq 0$. Since every time we apply Proposition \ref{prop:blueprint} the bound gets improved by a factor of roughly $1/k$, at some point we will arrive at a bound that is decreasing. After that, Proposition \ref{prop:blueprint} is no longer applicable because we require the bound to be non-decreasing to initialize the bootstrapping argument. To carry out the details, in the case where $D>0$, we assume that $2\alpha D$ is a positive integer, which is indeed without loss of generality because $D=\gamma_c+\sigma-1$ and if Assumption \ref{as:multi} holds with some $\sigma>0$, it also holds for all $\sigma'>\sigma$. In view of Proposition \ref{prop:worst_case_bound}, $\|x_k-x^*\|_c^2$ can be polynomially increasing at a rate of $\mathcal{O}(k^{2\alpha D})$. Therefore, we need to bootstrap $m:=2\alpha D+1$ times. When $D\leq 0$, since $\|x_k-x^*\|_c^2$ is either bounded a.s. by a deterministic constant or can grow at most logarithmically, we only need to bootstrap once. 
\begin{claim}\label{claim:finish-bootstrapping}
    The proof of Theorem \ref{thm:multi} is complete after the bootstrapping. 
\end{claim}
The proof of the above claim involves only standard algebra manipulation. See Appendix \ref{pf:le:first_bootstrap} for more details.

\section{Proof of Theorem \ref{thm:impossibility}}\label{sec:proof-impossibility}
    The following $2$ lemmas are needed to prove this theorem.
    \begin{lemma}\label{le:example4}
    Given $c_1,c_2>0$, suppose that there exist $C_1,C_2>0$ such that
    \begin{align*}
        \mathbb{P}\left((k+h)^{c_2}x_k^{c_1}\geq \epsilon^{c_1}\right)\leq C_1\exp\left(-C_2\epsilon^{c_1}\right),\quad \forall\;\epsilon>0,k\geq 0.
    \end{align*}
    Then, we have $\limsup_{k\to\infty} \mathbb{E}\left[\exp\left(\lambda (k+h)^{c_2} x_k^{c_1}\right)\right]<\infty$ for any $\lambda\in (0,C_2)$.
    \end{lemma}
    The proof of Lemma \ref{le:example4} (presented in Appendix \ref{pf:le:example4}) follows from using the formula $\mathbb{E}[X]=\int_0^\infty \mathbb{P}(X>x)dx$ for any positive random variable $X$.
    \begin{lemma}\label{le:numerical}
    Consider the function $\ell(x)=e^x-(1+c x)$ defined on $[0,\infty)$, where $c>1$. There exists $x_c>0$ such that $\ell(x)\leq 0$ for all $x\in [0,x_c]$ and $\ell(x)\geq 0$ for all $x\in [x_c,\infty)$.
\end{lemma}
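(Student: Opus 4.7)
The plan is to analyze $\ell(x) = e^x - (1+cx)$ by looking at its derivative and establishing a unique positive zero beyond which it becomes nonnegative, while it stays nonpositive before that zero.

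First I would compute $\ell(0) = 0$ and $\ell'(x) = e^x - c$. Since $c > 1$, we have $\ell'(0) = 1 - c < 0$, so $\ell$ is strictly decreasing at the origin. The equation $\ell'(x) = 0$ has the unique solution $x = \ln c > 0$, with $\ell'(x) < 0$ for $x \in [0, \ln c)$ and $\ell'(x) > 0$ for $x \in (\ln c, \infty)$. Consequently $\ell$ is strictly decreasing on $[0, \ln c]$ and strictly increasing on $[\ln c, \infty)$.

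Combined with $\ell(0) = 0$, this gives $\ell(x) < 0$ for all $x \in (0, \ln c]$ (in particular $\ell(\ln c) = c - 1 - c \ln c < 0$). On the other hand, $\ell(x) \to \infty$ as $x \to \infty$ (exponential dominates linear). Hence, by continuity and strict monotonicity on $[\ln c, \infty)$, there is a unique $x_c \in (\ln c, \infty)$ such that $\ell(x_c) = 0$, with $\ell(x) < 0$ for $x \in [\ln c, x_c)$ and $\ell(x) > 0$ for $x \in (x_c, \infty)$.

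Putting the two intervals together yields $\ell(x) \leq 0$ on $[0, x_c]$ and $\ell(x) \geq 0$ on $[x_c, \infty)$, as required. There is essentially no obstacle here: the argument is a standard single-variable calculus exercise, and the only thing to keep straight is that $\ell(0) = 0$ already exhibits the boundary behavior, so the relevant crossing $x_c$ lives past the unique critical point $\ln c$.
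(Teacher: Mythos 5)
Your proof is correct and follows essentially the same argument as the paper: analyze $\ell'(x)=e^x-c$ to get strict decrease on $[0,\ln c]$ and strict increase afterwards, then use $\ell(0)=0$, $\ell(\ln c)<0$, and $\ell(x)\to\infty$ to locate the unique positive zero $x_c$ and read off the sign pattern. No gaps.
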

The proof of Lemma \ref{le:numerical} follows from investigating the monotonicity of the function $\ell(\cdot)$ using its derivative. See Appendix \ref{pf:le:numerical} for more details. 

Next, we proceed to prove Theorem \ref{thm:impossibility}. 

\subsection{Proof of Theorem \ref{thm:impossibility} Part (1)} Since ${\tilde{\beta}}>2/(1+2\alpha(a+N-1))$, there exists $\epsilon>0$ such that
    \begin{align*}
        {\tilde{\beta}}>\frac{2}{1+2\alpha(a+N-1)/(1+\epsilon)}.
    \end{align*}
Applying Lemma \ref{le:numerical}, there exists $k=k(\epsilon)>0$ such that 
\begin{equation}\label{eq:tighter}
    \exp(\alpha_k(a+N-1)/(1+\epsilon)) \leq 1+\alpha_k(a+N-1),\quad \forall\; k\geq k(\epsilon).
\end{equation}
Now, for any $\lambda>0$, we have for any $k\geq  k(\epsilon)$ that
\begin{align}
    &\mathbb{E}\left[\exp\left(\lambda \left[(k+h)^{1/2} x_k\right]^{\tilde{\beta}}\right)\right]\nonumber\\ =\;&\mathbb{E}\left[\exp\left(\lambda x_0^{\tilde{\beta}} (k+h)^{{\tilde{\beta}}/2}\prod_{i=0}^{k-1}(1+\alpha_i( Y_i-1))^{\tilde{\beta}}\right)\right]\label{eq1:thm:impossibility}\\
    \geq \;&\frac{1}{(N+1)^k}\exp\left(\lambda x_0^{\tilde{\beta}} (k+h)^{{\tilde{\beta}}/2}\prod_{i=0}^{k-1}(1+\alpha_i (a+N-1))^{\tilde{\beta}}\right)\label{eq2:thm:impossibility}\\
    \geq \;&\frac{1}{(N+1)^k}\exp\left(\lambda x_0^{\tilde{\beta}} (k+h)^{{\tilde{\beta}}/2}\prod_{i=k(\epsilon)}^{k-1}(1+\alpha_i (a+N-1))^{\tilde{\beta}}\right)\nonumber\\
    \geq \;&\frac{1}{(N+1)^k}\exp\left(\lambda x_0^{\tilde{\beta}} (k+h)^{{\tilde{\beta}}/2}\exp\left({\tilde{\beta}} \sum_{i=k(\epsilon)}^{k-1}\frac{\alpha_i (a+N-1)}{(1+\epsilon)}\right)\right),\label{eq:example_before}
\end{align}
where Eq. (\ref{eq1:thm:impossibility}) follows from the update equation (\ref{algo:example}), Eq. (\ref{eq2:thm:impossibility}) follows from the distribution of $Y_k$ (cf. Example \ref{example:impossibility_result}), and Eq. (\ref{eq:example_before}) follows from Eq. (\ref{eq:tighter}).
Observe that
\begin{align*}
    \sum_{i=k(\epsilon)}^{k-1}\alpha_i= \sum_{i=k(\epsilon)}^{k-1}\frac{\alpha}{i+h}
    \geq \int_{k(\epsilon)}^k\frac{\alpha}{x+h}dx
    = 
    \alpha \ln\left(\frac{k+h}{k(\epsilon)+h}\right).
\end{align*}
Therefore, we have from the previous inequality and Eq. (\ref{eq:example_before}) that
\begin{align*}
    \mathbb{E}\left[\exp\left(\lambda \left[(k+h)^{1/2}\, x_k\right]^{\tilde{\beta}}\right)\right]
    \geq \exp\left(\lambda x_0^{\tilde{\beta}} \left(\frac{k+h}{k(\epsilon)+h}\right)^{\frac{{\tilde{\beta}}}{2}\left(1+\frac{2\alpha (a+N-1)}{1+\epsilon}\right)}-k\ln(N+1)\right).
\end{align*}
Since $\epsilon$ is chosen such that ${\tilde{\beta}}(1+2\alpha  (a+N-1)/(1+\epsilon))>2$, we have from the previous inequality that
\begin{align*}
        \liminf\limits_{k\to\infty} \mathbb{E}\left[\exp\left(\lambda \left[(k+h)^{1/2}\, x_k\right]^{\tilde{\beta}}\right)\right] = \infty,\quad \forall\;\lambda>0.
\end{align*}
As a result, according to Lemma \ref{le:example4},
there do not exist $K_1',K_2'>0$ such that 
\begin{align*}
    \mathbb{P}\left((k+h)^{1/2}\;x_k\geq \epsilon\right)\leq K_1'\exp\left(-K_2'\epsilon^{\tilde{\beta}}\right),\quad \forall\,\epsilon>0,k\geq 0. 
\end{align*}

\subsection{Proof of Theorem \ref{thm:impossibility} Part (2)} To begin with, according to Lemma \ref{le:numerical}, there exists $k_0>0$ such that
\begin{align*}
    \exp(\alpha_k(a+N-1)) \leq 1+\alpha_k(a+N-1),\quad \forall\; k\geq k_0.
\end{align*}
As a result, using the same analysis as in the proof of Part (1) of this theorem, we have for any $\lambda>0$ and $k\geq  k_0$ that
\begin{align*}
    &\mathbb{E}\left[\exp\left(\lambda (k+h)^{{\tilde{\beta}}'} x_k^{\tilde{\beta}}\right)\right]\\
    \geq\;& \frac{\exp\left(\lambda x_0^{\tilde{\beta}} (k+h)^{{\tilde{\beta}}'}\exp\left({\tilde{\beta}} \sum_{i=k_0}^{k-1}\alpha_i (a+N-1)\right)\right)}{(N+1)^k}\\
    \geq \;&\exp\left(\lambda x_0^{\tilde{\beta}} (k\!+\!h)^{{\tilde{\beta}}'}\!\exp\left(\frac{\alpha {\tilde{\beta}} (a\!+\!N\!-\!1)}{(1\!-\!z)}((k\!+\!h)^{1-z}\!-\!(k_0\!+\!h)^{1-z})\right)\!-\!k\ln(N\!+\!1)\right),
\end{align*}
where the last line follows from
\begin{align*}
    \sum_{i=k_0}^{k-1}\alpha_i
    \geq \int_{k_0}^k\frac{\alpha}{(x+h)^z}dx
    = 
    \frac{\alpha}{1-z}((k+h)^{1-z}-(k_0+h)^{1-z}).
\end{align*}
Therefore, we always have
\begin{align*}
        \liminf\limits_{k\to\infty} \mathbb{E}\left[\exp\left(\lambda (k+h)^{{\tilde{\beta}}'} x_k^{\tilde{\beta}}\right)\right] = \infty,\quad \forall\;\lambda>0.
\end{align*}
As a result, according to Lemma \ref{le:example4}, there do not exist $\bar{K}_1',\bar{K}_2'>0$ such that 
\begin{align*}
    \mathbb{P}\left((k+h)^{{\tilde{\beta}}'/{\tilde{\beta}}}\;x_k\geq \epsilon\right)\leq \bar{K}_1'\exp\left(-\bar{K}_2'\epsilon^{\tilde{\beta}}\right),\quad \forall\,\epsilon>0,k\geq 0.
\end{align*}

\section{Proof of Theorem \ref{thm:additive}}\label{sec:proof-additive}
The high-level idea for the proof is similar to that of Theorem \ref{thm:multi}. Specifically, we first establish a bound on the MGF of the generalized Moreau envelope and then use Ville's maximal inequality to derive a maximal concentration bound. The difference here is that we do not need to use the bootstrapping argument due to the additive nature of the noise.

\subsection{Bounding the log-MGF of the Generalized Moreau Envelope}
Recall that we defined the generalized Moreau envelope as $M(x) = \min_{u\in\mathbb{R}^d} \{ \frac{1}{2}\|u\|_c^2 + \frac{1}{2\mu} \|x-u\|_s^2\}$ for all $x\in\mathbb{R}^d$. The definition of the constants $\ell_{cs}$, $u_{cs}$, $\ell_{cM}$, and $u_{c,M}$, and the requirement on choosing $\mu$ are identical to that in Section \ref{sec:proof-multi}. The properties of $M(\cdot)$ were summarized in Lemma \ref{prop:Moreau}. Specifically, $M(\cdot)$ is an $L/\mu$ -- smooth function with respect to $\|\cdot\|_s$, and can be written as $M(x)=\frac{1}{2}\|x\|_M^2$ for some norm $\|\cdot\|_M$. 
The following constants will be frequently used in our derivation. The relation between them and $\ell_{cs}$ and $u_{cs}$ are also presented.
\begin{align*}
\ell_{M2}\|\cdot\|_2\leq &\|\cdot\|_M\leq u_{M2}\|\cdot\|_2,\;& \ell_{M2}=\;&\ell_{c2}(1+\mu u_{cs}^2)^{-1/2},& u_{M2}=u_{c2}(1+\mu \ell_{cs}^2)^{-1/2},\\
    \ell_{sM}\|\cdot\|_M\leq &\|\cdot\|_s\leq u_{sM}\|\cdot\|_M,\; & \ell_{sM}=\;&(1+\mu \ell_{cs}^2)^{1/2}u_{cs}^{-1}, &u_{sM}=(1+\mu u_{cs}^2)^{1/2}\ell_{cs}^{-1}.
\end{align*}
We also define $u_{cM,*}$ such that $\|x\|_M\leq  u_{cM,*}\|x\|_{c,*}$ for all $x\in\mathbb{R}^d$, where $\|\cdot\|_c^*$ is the dual norm of $\|\cdot\|_c$. Let $\Bar{D}_0=\mu \ell_{cs}^2/(8\Bar{\sigma}^2L)$, $\Bar{D}_1=2(1-\tilde{\gamma}_c)$, $\Bar{D}_2=8 Lu_{cM}^2/(\mu\ell_{cs}^2)$, $\Bar{D}_3=2\Bar{\sigma}^2u_{cM,*}^2$, and $\Bar{D}_4=2c_d\Bar{\sigma}^2 L/(\mu\ell_{cs}^2)$.
Let $\lambda_k=\theta/\alpha_k$ with $\theta=\Bar{D}_1/(8\Bar{D}_3)$. The stepsizes are chosen according to the following condition. 

\begin{condition}\label{con:stepsize}
We use $\alpha_k=\alpha/(k+h)^z$, where $z\in (0,1]$, and $\alpha>0,h\geq 1$ are chosen such that $\alpha_0\leq \min(4\Bar{D}_0\Bar{D}_3/\Bar{D}_1,1/\Bar{D}_1,D/(4\Bar{D}_2))$.
In addition, when $z=1$, we choose $\alpha>2/\Bar{D}_1$, and when $z\in (0,1)$, we choose $h\geq (2z/(\Bar{D}_1\alpha))^{1/(1-z)}$.
\end{condition}

The following proposition establishes a recursive inequality for the log-MGF of the generalized Moreau envelope. 

\begin{proposition}\label{prop:additive_recursion}
It holds for all $k\geq 0$ that
\begin{align}\label{prop:additive_recursion-1}
    Z_{k+1}\leq \frac{\alpha_k}{\alpha_{k+1}}(1-\Bar{D}_1\alpha_k/2) Z_k+\frac{\Bar{D}_1\Bar{D}_4}{4\Bar{D}_3}\alpha_k,
\end{align}
where $Z_k=\log(\mathbb{E}[\exp(\lambda_k M(x_k-x^*))])$.

\end{proposition}
\begin{proof}[Proof of Proposition \ref{prop:additive_recursion}]
Since $M(\cdot)$ is $L/\mu$ -- smooth with respect to $\|\cdot\|_s$, we have by the update equation (\ref{eq:stochastic-approximation}) that
\begin{align}
    M(x_{k+1}-x^*) 
    \leq\;& M(x_k-x^*) + \alpha_k \nabla M(x_k-x^*)^\top ( \Bar{F}(x_k) - x_k ) \nonumber\\
     &+  \alpha_k  \nabla M(x_k\!-\!x^*)^\top ( F(x_k,Y_k) \!-\! \Bar{F}(x_k) )\!+\!\frac{L\alpha_k^2}{2\mu} \left\| F(x_k,Y_k) \!-\! x_k \right\|_s^2. \label{eq:firstOGterm1}
\end{align}
Next, we bound all the terms on the right-hand side of the previous inequality. First, it follows from \cite[Lemma A.1]{chen2021finite} that
\begin{align}\label{eq:secondOGterm1}
    \nabla M(x_k-x^*)^\top( \Bar{F}(x_k) - x_k ) \leq -2\left( 1- \tilde{\gamma}_c \right)  M(x_k-x^*).
\end{align}
Next, for the quadratic term on the right-hand side of Eq. (\ref{eq:firstOGterm1}), we have
\begin{align*}
    \left\| F(x_k,Y_k) - x_k \right\|_s^2
    \leq \left( \left\| F(x_k,Y_k) - \Bar{F} (x_k) \right\|_s + \left\| \Bar{F}(x_k) - \Bar{F}(x^*) \right\|_s + \left\| x_k - x^* \right\|_s \right)^2.
\end{align*}
To proceed, observe that
\begin{align*}
    &\|F(x_k,Y_k) - \Bar{F} (x_k)\|_s\leq \frac{1}{\ell_{cs}}\|F(x_k,Y_k) - \Bar{F} (x_k)\|_c,\\
    &\| \Bar{F}(x_k) - \Bar{F}(x^*) \|_s\leq \frac{1}{\ell_{cs}}\|\Bar{F}(x_k) - \Bar{F}(x^*)\|_c\leq \frac{\gamma_c}{\ell_{cs}}\|x_k-x^*\|_c\leq \frac{\gamma_c u_{cM}}{\ell_{cs}}\|x_k-x^*\|_M,\\
    &\|x_k - x^*\|_s\leq u_{sM}\|x_k - x^*\|_M\leq \frac{u_{cM}}{\ell_{cs}}\|x_k - x^*\|_M.
\end{align*}
Therefore, we have
\begin{align}
    \left\| F(x_k,Y_k) - x_k \right\|_s^2
    \leq \;&\left( \frac{1}{\ell_{cs}}\|F(x_k,Y_k) - \Bar{F} (x_k)\|_c + \frac{2u_{cM}}{\ell_{cs}}\|x_k-x^*\|_M\right)^2\nonumber\\
    \leq \;&\frac{2}{\ell_{cs}^2}\|F(x_k,Y_k) - \Bar{F} (x_k)\|_c^2 + \frac{16u_{cM}^2}{\ell_{cs}^2}M(x_k-x^*).\label{eq:secondOGterm}
\end{align}
where the last line follows from $(a+b)^2\leq 2(a^2+b^2)$ for all $a,b\in\mathbb{R}$ and $M(x)=\frac{1}{2}\|x\|_M^2$ (cf. Lemma \ref{prop:Moreau}). Using the upper bounds we obtained in Eqs. \eqref{eq:secondOGterm1} and \eqref{eq:secondOGterm} in Eq. \eqref{eq:firstOGterm1}, we obtain
\begin{align}\label{eq:initialMoreauInequality}
    M(x_{k+1}-x^*) \leq\;& \left( 1- 2\alpha_k (1-\tilde{\gamma}_c) + \frac{8 Lu_{cM}^2\alpha_k^2}{\mu\ell_{cs}^2} \right)M(x_k-x^*) \nonumber \\
    &+ \frac{L\alpha_k^2 }{\mu\ell_{cs}^2} \left\| F(x_k,Y_k) - \Bar{F} (x_k) \right\|_c^2+\alpha_k M(x_k-x^*)^\top (F(x_k,Y_k) - \Bar{F}(x_k)).
\end{align}
For any $\lambda >0$, Eq. \eqref{eq:initialMoreauInequality} implies
\begin{align}\label{eq:twoExpectations}
    \mathbb{E}[ \exp(\lambda M(x_{k+1}\!-\!x^*)) \mid \mathcal{F}_k] 
    \leq \;& \exp\left[\lambda\left( 1- 2\alpha_k (1-\tilde{\gamma}_c) + \frac{8 Lu_{cM}^2\alpha_k^2}{\mu\ell_{cs}^2} \right)M(x_k-x^*)\right]\nonumber\\
    &\times \mathbb{E}\bigg[\exp\left(\frac{\lambda L\alpha_k^2 }{\mu \ell_{cs}^2} \left\| F(x_k,Y_k) - \Bar{F} (x_k) \right\|_c^2\right)\nonumber\\
    &\times \exp\left(\alpha_k\lambda M(x_k-x^*)^\top (F(x_k,Y_k) - \Bar{F}(x_k))\right)\;\bigg|\; \mathcal{F}_k\bigg]\nonumber\\
    \leq \;& \exp\left[\lambda\left( 1- 2\alpha_k (1-\tilde{\gamma}_c) + \frac{8 Lu_{cM}^2\alpha_k^2}{\mu\ell_{cs}^2} \right)M(x_k-x^*)\right]\nonumber\\
    &\times\! \underbrace{\mathbb{E}\left[\exp\left(2\alpha_k\lambda M(x_k\!-\!x^*)^\top\! (F(x_k,Y_k) \!-\! \Bar{F}(x_k))\right)\middle| \mathcal{F}_k\right]^{1/2}}_{N_1}\nonumber\\
    &\times\! \underbrace{\mathbb{E}\left[\exp\left(\frac{2\lambda L\alpha_k^2 }{\mu \ell_{cs}^2} \left\| F(x_k,Y_k) \!-\! \Bar{F} (x_k) \right\|_c^2\right)\middle|\mathcal{F}_k\right]^{1/2}}_{N_2},
\end{align}
where the last line follows from the conditional Cauchy–Schwarz inequality. 
We next bound the two conditional expectations in Eq. (\ref{eq:twoExpectations}).

For the term $N_1$, using Assumption \ref{ass:sub-Gaussian} (or Eq. (\ref{eq:sub-Gaussian_1})), we have
\begin{align}
    N_1
    &\leq  \exp\left(\lambda^2\Bar{\sigma}^2\alpha_k^2 \left\| \nabla M(x_{k}-x^*) \right\|_{c,*}^2\right) \nonumber\\   &=\exp\left(\lambda^2\Bar{\sigma}^2\alpha_k^2 \left\| \nabla \|x_k-x^*\|_M \right\|_{c,*}^2 \| x_k - x^* \|_M^2\right) \label{eq:middle1}\\
    &=\exp\left(2\lambda^2\Bar{\sigma}^2\alpha_k^2 \left\| \nabla \|x_k-x^*\|_M \right\|_{c,*}^2 M (x_k - x^* )\right),\nonumber\\
    &\leq  \exp\left(2\lambda^2\Bar{\sigma}^2\alpha_k^2 u_{cM,*}^2 M (x_k - x^* )\right)\label{eq:boundFirstConditional}
\end{align}
where Eq. (\ref{eq:middle1}) follows from $M(x)=\frac{1}{2}\|x\|_M^2$ (cf. Lemma \ref{prop:Moreau}) and Eq. (\ref{eq:boundFirstConditional}) follows from \cite[Lemma 2.6]{shalev2012online}.

For the term $N_2$, we have by Assumption \ref{ass:sub-Gaussian} that
\begin{align}
    N_2 \leq \left(1-\frac{4\lambda \Bar{\sigma}^2 L\alpha_k^2 }{\mu \ell_{cs}^2}\right)^{-c_d/4}\leq \exp\left(\frac{2c_d\lambda \Bar{\sigma}^2 L\alpha_k^2 }{\mu \ell_{cs}^2}\right). \label{eq:boundSecondConditional},
\end{align}
where the second inequality follows from choosing $\lambda\leq \bar{D}_0/\alpha_k^2$ so that $4\lambda \Bar{\sigma}^2 L\alpha_k^2 /(\mu \ell_{cs}^2)\leq 1/2$ and the numerical inequality  $1/(1-x)\leq e^{2x}$ for any $x\in [0,1/2]$.

Using the upper bounds we obtained for the terms $N_1$ (cf. Eq. (\ref{eq:boundFirstConditional})) and $N_2$ (cf. Eq. (\ref{eq:boundSecondConditional})) in Eq. (\ref{eq:twoExpectations}) , we have
\begin{align}
    \mathbb{E}[ \exp(\lambda M(x_{k+1}\!-\!x^*)) \mid \mathcal{F}_k ] 
    \leq \;&\exp\left[\lambda\left( 1\!-\! 2\alpha_k (1\!-\!\tilde{\gamma}_c) + \frac{8 Lu_{cM}^2\alpha_k^2}{\mu\ell_{cs}^2} \right)M(x_k\!-\!x^*)\right]\nonumber\\
    &\times \exp\left(2\lambda^2\Bar{\sigma}^2\alpha_k^2 u_{cM,*}^2 M (x_k - x^* )\right)\exp\left(\frac{2c_d\lambda \Bar{\sigma}^2 L\alpha_k^2 }{\mu \ell_{cs}^2}\right)\nonumber\\
    \leq  \;&\exp\left[\lambda M(x_k\!-\!x^*)\left( 1\!-\! \alpha_k \Bar{D}_1 \!+\! \Bar{D}_2\alpha_k^2\!+\! \Bar{D}_3\alpha_k^2\lambda \right)\!+\!\Bar{D}_4\alpha_k^2\lambda\right],\label{eq:fixlambda}
\end{align}
where we used $\{\Bar{D}_i\}_{1\leq i\leq 4}$ to simplify the notation in the last line.

Since the previous inequality holds for all $\lambda\in (0,\bar{D}_0/\alpha_k^2]$, we next replace $\lambda$ with the time-varying $\lambda_{k+1}=\theta/\alpha_{k+1}$. Before going forward, we first verify that this choice of $\lambda_{k+1}$ satisfies $\lambda_{k+1}\in (0,\bar{D}_0/\alpha_k^2]$ for all $k\geq 0$. 

Since $a/b\geq (a+c)/(b+c) $ for all $a\geq b>0$ and $c\geq 0$, we have for any $z\in (0,1]$ and $h\geq 1$ that
\begin{align*}
    \frac{\alpha_k}{\alpha_{k+1}}=\left(\frac{k+h+1}{k+h}\right)^z\leq \left(\frac{h+1}{h}\right)^z\leq 1+\frac{1}{h}\leq 2.
\end{align*}
It follows that
\begin{align*}
    \lambda_{k+1}=\frac{\theta}{\alpha_{k+1}}=\frac{\Bar{D}_0}{\alpha_k^2}\frac{\theta \alpha_k^2}{\Bar{D}_0\alpha_{k+1}}\leq \frac{\Bar{D}_0}{\alpha_k^2}\frac{2\theta }{\Bar{D}_0}\alpha_k=\frac{\Bar{D}_0 }{\alpha_k^2}\frac{\Bar{D}_1 \alpha_k }{4\Bar{D}_0\Bar{D}_3}.
\end{align*}
Therefore, when $\alpha_0\leq 4\Bar{D}_0\Bar{D}_3/\Bar{D}_1$ (cf. Condition \ref{con:stepsize}), we have $\lambda_{k+1}\leq \Bar{D}_0/\alpha_k^2$. Applying $\lambda=\lambda_{k+1}$ in Eq. (\ref{eq:fixlambda}), we have for all $k\geq 0$ that
\begin{align}
    &\mathbb{E}\left[\exp(\lambda_{k+1} M(x_{k+1}-x^*))\mid \mathcal{F}_k\right]\nonumber\\
    \leq\;& \exp\left(\lambda_k M(x_k-x^*)\frac{\lambda_{k+1}}{\lambda_k}\left( 1- \Bar{D}_1\alpha_k + \Bar{D}_2\alpha_k^2+ \Bar{D}_3\lambda_{k+1} \alpha_k^2 \right)\right)\exp(\Bar{D}_4\lambda_{k+1} \alpha_k^2)\nonumber\\
    =\;&\exp\left(\lambda_kN_{1,k} M(x_k-x^*)\right)\exp(N_{2,k}),\label{eq:MGF_before}
\end{align}
where we denote $N_{1,k}=(\lambda_{k+1}/\lambda_k)(1- \Bar{D}_1\alpha_k + \Bar{D}_2\alpha_k^2+ \Bar{D}_3\lambda_{k+1} \alpha_k^2)$ and 
$N_{2,k}=\Bar{D}_4\lambda_{k+1} \alpha_k^2$ for simplicity of notation.
Using the explicit expression of $\lambda_{k+1}$, we have
\begin{align}\label{eq:N2k}
    N_{2,k}=\Bar{D}_4\lambda_{k+1}\alpha_k^2=\Bar{D}_4\theta\frac{\alpha_k^2}{\alpha_{k+1}}\leq 2\Bar{D}_4\theta \alpha_k=\frac{\Bar{D}_1\Bar{D}_4}{4\Bar{D}_3}\alpha_k.
\end{align}
When next bound $N_{1,k}$ in the following:
\begin{align}
    N_{1,k}=\;&\frac{\lambda_{k+1}}{\lambda_k}(1- \Bar{D}_1\alpha_k + \Bar{D}_2\alpha_k^2+ \Bar{D}_3\lambda_{k+1} \alpha_k^2)\nonumber\\
    =\;&\frac{\alpha_k}{\alpha_{k+1}}(1- \Bar{D}_1\alpha_k + \Bar{D}_2\alpha_k^2+ \Bar{D}_3\theta \alpha_k^2/\alpha_{k+1})\nonumber\\
    \leq \;&\frac{\alpha_k}{\alpha_{k+1}}\left( 1- \Bar{D}_1\alpha_k + \Bar{D}_2\alpha_k^2+ \Bar{D}_1 \alpha_k/4 \right)\label{eq:N1k-1}\\
    \leq \;&\frac{\alpha_k}{\alpha_{k+1}}\left( 1- \Bar{D}_1\alpha_k + \Bar{D}_1\alpha_k/4+ \Bar{D}_1 \alpha_k/4 \right)\label{eq:N1k-2}\\
    = \;&\frac{\alpha_k}{\alpha_{k+1}}\left( 1- \Bar{D}_1\alpha_k/2 \right)\label{eq:N1k}.
\end{align}
where Eq. (\ref{eq:N1k-1}) follows from $\alpha_k/\alpha_{k+1}\leq 2$ and $\theta=\Bar{D}_1/(8\Bar{D}_3)$ and Eq. (\ref{eq:N1k-2}) follows from $\alpha_k\leq \alpha_0\leq \Bar{D}_1/(4\Bar{D}_2)$ (cf. Condition \ref{con:stepsize}).
 Using the upper bounds we obtained for $N_{1,k}$ (cf. Eq. (\ref{eq:N1k})) and $N_{2,k}$ (cf. Eq. (\ref{eq:N2k})) in Eq. (\ref{eq:MGF_before}) , we have for all $k\geq 0$ that
\begin{align}\label{eq:prop:additive_recursion}
    &\mathbb{E}\left[\exp(\lambda_{k+1} M(x_{k+1}\!-\!x^*))\mid \mathcal{F}_k\right]\nonumber\\
    \leq\;& \exp\left(\frac{\alpha_k}{\alpha_{k+1}}\left( 1\!-\! \frac{\Bar{D}_1\alpha_k}{2} \right)\lambda_k M(x_k\!-\!x^*)\!+\!\frac{\Bar{D}_1\Bar{D}_4}{4\Bar{D}_3}\alpha_k\right).
\end{align}
Our next step is to first take total expectation and then logarithm on both sides of the previous inequality and then apply Jensen's inequality. To do so, we need to verify 
that $(\alpha_k/\alpha_{k+1})\left( 1- \Bar{D}_1\alpha_k/2 \right)$ belongs to the interval $(0,1)$.

The fact that $(\alpha_k/\alpha_{k+1})\left( 1- \Bar{D}_1\alpha_k/2 \right)>0$ follows from our choice of $\alpha_0\leq 1/\Bar{D}_1$ (cf. Condition \ref{con:stepsize}). To show $(\alpha_k/\alpha_{k+1})\left( 1- \Bar{D}_1\alpha_k/2 \right)<1$, observe that
\begin{align}
    \frac{\alpha_k}{\alpha_{k+1}}\left( 1- \Bar{D}_1\alpha_k/2 \right)
    =\;&\left(\frac{k+h+1}{k+h}\right)^z\left(1-\frac{\Bar{D}_1\alpha}{2(k+h)^z}\right)\nonumber\\
    \leq \;&\left(\frac{k+h+1}{k+h}\right)^z\exp\left(-\frac{\Bar{D}_1\alpha}{2(k+h)^z}\right)\nonumber\\
    = \;&\left[\left(1+\frac{1}{k+h}\right)^{k+h}\right]^{z/(k+h)}\exp\left(-\frac{\Bar{D}_1\alpha}{2(k+h)^z}\right)\nonumber\\
    \leq \;&\exp\left(\frac{z}{k+h}-\frac{\Bar{D}_1\alpha}{2(k+h)^z}\right).\nonumber
\end{align}
When $z=1$, since $\alpha>2/\Bar{D}_1$ (cf. Condition \ref{con:stepsize}), we have 
\begin{align*}
    \frac{\alpha_k}{\alpha_{k+1}}\left( 1- \Bar{D}_1\alpha_k/2 \right)\leq \exp\left(\frac{z}{k+h}-\frac{\Bar{D}_1\alpha}{2(k+h)^z}\right)\leq 1.
\end{align*}
When $z\in (0,1)$, since $h\geq (4z/(\Bar{D}_1\alpha))^{1/(1-z)}$ (cf. Condition \ref{con:stepsize}), we also have 
\begin{align*}
    \frac{\alpha_k}{\alpha_{k+1}}\left( 1- \Bar{D}_1\alpha_k/2 \right)\leq \;&\exp\left(\frac{z}{k+h}-\frac{\Bar{D}_1\alpha}{2(k+h)^z}\right)\\
    = \;&\exp\left(\frac{2z-(k+h)^{1-z}\Bar{D}_1\alpha}{2(k+h)}\right)\\
    \leq  \;&\exp\left(\frac{2z-h^{1-z}\Bar{D}_1\alpha}{2(k+h)}\right)\\
    \leq \;&1.
\end{align*}
Now that we have verified $(\alpha_k/\alpha_{k+1})\left( 1- \Bar{D}_1\alpha_k/2 \right)\in (0,1)$, after first taking total expectation and then logarithm on both sides of Eq. (\ref{eq:prop:additive_recursion}) and then apply Jensen's inequality, we have for all $k\geq 0$ that
\begin{align*}
    Z_{k+1}
    \leq \frac{\alpha_k}{\alpha_{k+1}}(1-\Bar{D}_1\alpha_k/2) Z_k+\frac{\Bar{D}_1\Bar{D}_4}{4\Bar{D}_3}\alpha_k.
\end{align*}
This proves Proposition \ref{prop:additive_recursion}. 
\end{proof}

Repeatedly using Eq. (\ref{prop:additive_recursion-1}) of  Proposition \ref{prop:additive_recursion} yields the following lemma. The proof of Lemma \ref{le:bound_MGF_additive} involves only standard algebra manipulation and is deferred to Appendix \ref{pf:le:additive_solving_recursion}.

\begin{lemma}\label{le:bound_MGF_additive}
    It holds for all $k\geq 0$ that
\begin{align}\label{prop:additive_recursion-2}
    Z_k\leq \begin{dcases}
        Z_0\left(\frac{h}{k+h}\right)^{\Bar{D}_1\alpha/2-1}+\frac{e\Bar{D}_1\Bar{D}_4\alpha}{\Bar{D}_3(\Bar{D}_1\alpha/2-1)},&z=1,\\
        Z_0\left(\frac{k+h}{h}\right)^z\exp\left(-\frac{\Bar{D}_1\alpha}{2(1-z)}((k+h)^{1-z}-h^{1-z})\right)+\frac{\Bar{D}_4}{\Bar{D}_3},&z\in (0,1).
    \end{dcases}
\end{align}
\end{lemma}

\subsection{The Maximal Concentration Inequality}

To establish the maximal concentration inequalities using Proposition \ref{prop:additive_recursion} and Lemma \ref{le:bound_MGF_additive}, we have $2$ approaches. \textit{Approach 1} is based on the construction of a supermartingale and the use of Ville's maximal inequality (which we also used in the proof of Theorem \ref{thm:multi}), and \textit{Approach 2} is based on using Markov inequality together with a telescoping technique. In the case of $z<1$, \textit{Approach 2} gives a better bound. In the case of $z=1$, \textit{Approach 2} gives a better decay rate for the term that involves the initial condition but a worse initial radius on the decaying cone. Theorem \ref{thm:additive} is stated based on  \textit{Approach 2}, the proof of which is presented in the following. The details of \textit{Approach 1} are deferred to Appendix \ref{ap:pf:S+V}. 

For any $\epsilon>0$ and $k\geq 0$, we have
\begin{align*}
    \mathbb{P}\left(\lambda_kM(x_k-x^*)> \epsilon\right)
   = \;&\mathbb{P}\left(\exp\left(\lambda_k M(x_k-x^*)\right)> e^\epsilon\right)\\
   \leq \;&\mathbb{E}\left[\exp\left(\lambda_k M(x_k-x^*)-\epsilon\right)\right]\\
   =\;&\exp\left(Z_k-\epsilon\right).
\end{align*}
Let $\delta=\exp(Z_k-\epsilon)$. Then the previous inequality reads: with probability at least $1-\delta$, we have $M(x_k-x^*)\leq (Z_k+\log(1/\delta))/\lambda_k$. To proceed, given $K>0$, define $\delta_k=K \delta/(k(k+1))$. By union bound, we have for any $\delta>0$ that
\begin{align*}
    \mathbb{P}\left(M(x_k-x^*)\leq \frac{Z_k+\log(1/\delta_k)}{\lambda_k},\;\forall\;k\geq K\right)\leq\;& 1-\sum_{k=K}^\infty\delta_k\\
    =\;& 1-\sum_{k=K}^\infty\frac{K \delta}{k(k+1)}\\
    =\;&1-K\delta\sum_{k=K}^\infty\left(\frac{1}{k}-\frac{1}{k+1}\right)\\
    =\;&1-\delta.
\end{align*}
By first using Lemma \ref{prop:Moreau} to translate $M(\cdot)$ back into $\|\cdot\|_c^2$ and then using the upper bound of $Z_k$ derived in Lemma \ref{le:bound_MGF_additive}, we have the following results.
\begin{enumerate}[(1)]
    \item When $z=1$ and $\alpha>2/\Bar{D}_1$, for any $K\geq 0$, we have with probability at least $1-\delta$ that the following inequality holds for all $k\geq K$:
    \begin{align*}
        \|x_k-x^*\|_c^2\leq\;&\frac{16 \bar{D}_3u_{cM}^2\alpha}{\Bar{D}_1(k+h)}\log(1/\delta)+\frac{u_{cM}^2}{\ell_{cM}^2}\|x_0-x^*\|_c^2\left(\frac{h}{k+h}\right)^{\Bar{D}_1\alpha/2}\\
        &+\frac{16 eu_{cM}^2\Bar{D}_4\alpha^2}{(\Bar{D}_1\alpha/2-1)}\frac{1}{k+h}+\frac{32u_{cM}^2\Bar{D}_3 \alpha }{\Bar{D}_1}\frac{\log((k+1)/K^{1/2})}{k+h}.
    \end{align*}
    \item When $z\in (0,1)$, $\alpha>0$, and $h\geq (\frac{4z}{\Bar{D}_1\alpha})^{1/(1-z)}$, for any $\delta>0$ and $K\geq 0$, the following inequality holds for all $k\geq K$:
    \begin{align*}
        \|x_k-x^*\|_c^2\leq\;&\frac{16 \bar{D}_3u_{cM}^2\alpha}{\Bar{D}_1(k+h)^z}\log(1/\delta)+\frac{16 eu_{cM}^2\Bar{D}_4\alpha}{\bar{D}_1}\frac{1}{(k+h)^z}\\
        &+\frac{u_{cM}^2}{\ell_{cM}^2}\|x_0-x^*\|_c^2\exp\left(-\frac{\Bar{D}_1\alpha}{2(1-z)}((k+h)^{1-z}-h^{1-z})\right)\\
        &+\frac{32u_{cM}^2\Bar{D}_3 \alpha }{\Bar{D}_1}\frac{\log((k+1)/K^{1/2})}{(k+h)^z}.
    \end{align*}
\end{enumerate}

Theorem \ref{thm:additive} follows by using our notation $\bar{c}_1=16 \bar{D}_3u_{cM}^2\alpha/\Bar{D}_1$, $\bar{c}_2=u_{cM}^2/\ell_{cM}^2$, $\bar{c}_3=16 eu_{cM}^2\Bar{D}_4\alpha^2/(\Bar{D}_1\alpha/2-1)$, $\bar{c}_4=32u_{cM}^2\Bar{D}_3 \alpha /\Bar{D}_1$, and $\bar{c}_5=16 eu_{cM}^2\Bar{D}_4\alpha/\bar{D}_1$
to simplify the above expressions.

\section{Applications}\label{sec:applications}
In this section, we discuss the applicability of our theoretical results in the context of linear SA and RL.

\subsection{Linear Stochastic Approximation}\label{subsec:linear_sa}

Consider the problem of solving the linear system of equations $\bar{A}x=\bar{b}$, where $\bar{A}\in\mathbb{R}^{d\times d}$ and $\bar{b}\in\mathbb{R}^d$. This type of problem arises in many realistic applications \cite{benveniste2012adaptive},  typical examples of which involve (1) solving least-square problems, (2) the temporal-difference (TD) learning method for solving the policy evaluation problem in RL (which will be discussed in the next subsection), and (3) linear dynamical systems in control theory, etc.

Suppose that we do not have complete information about the parameters $\bar{A}$ and $\bar{b}$ of the problem but can obtain noisy estimates of them. Then, in this case, the problem is usually solved with SA, which is presented below:
\begin{align}\label{algo:linear_sa}
    x_{k+1}=x_k+\alpha_k(A(Y_k)x_k-b(Y_k)),
\end{align}
where $\{Y_k\}$ (taking values in $\mathcal{Y}$) is a sequence of i.i.d. random variables, and $A:\mathcal{Y}\mapsto \mathbb{R}^{d\times  d}$ and $b:\mathcal{Y}\mapsto \mathbb{R}^d$ are deterministic functions. We impose the following standard assumptions to study $\{x_k\}$ generated by Eq. (\ref{algo:linear_sa}).

\begin{assumption}\label{as:linear_sa_iid}
    It holds that $\mathbb{E}[A(Y_k)]=\bar{A}$ and $\mathbb{E}[b(Y_k)]=\bar{b}$ for all $k\geq 0$. 
\end{assumption}

It is enough to assume that $\{Y_k\}$ has conditionally unbiased perturbations as in Section \ref{subsec:multiplicative}. Here, we make the i.i.d. assumption for ease of exposition.

\begin{assumption}\label{as:linear_sa_boundedness}
 It holds that $\sup_{y\in\mathcal{Y}}\|A(y)\|_2<\infty$ and $\sup_{y\in\mathcal{Y}}\|b(y)\|_2< \infty$.
\end{assumption}

Denote $A_{\max}=\sup_{y\in\mathcal{Y}}\|A(y)\|_2$ and $b_{\max}=\sup_{y\in\mathcal{Y}}\|b(y)\|_2$.
Assumption \ref{as:linear_sa_boundedness} is widely used in studying the asymptotic convergence \cite{bertsekas1996neuro} and finite-sample convergence \cite{srikant2019finite} for linear SA. Note that it is automatically satisfied when $\mathcal{Y}$ is a finite set.

\begin{assumption}\label{as:Hurwitz}
    The matrix $\Bar{A}$ is Hurwitz, i.e., all eigenvalues of $\Bar{A}$ have strictly negative real parts.
\end{assumption}

Assumption \ref{as:Hurwitz} is usually imposed to ensure the stability of the linear SA algorithm presented in Eq. (\ref{algo:linear_sa}) \cite{srikant2019finite}. In fact, consider the ordinary differential equation (ODE) associated with the SA algorithm \cite{borkar2009stochastic}:
\begin{align}\label{ode:linear_sa}
    \dot{x}(t)=\Bar{A}x(t)-\Bar{b}.
\end{align}
When $\Bar{A}$ is Hurwitz, the Lyapunov equation $\Bar{A}^\top P+P\Bar{A}+I_d=0$
has a unique positive definite solution \cite{khalil2002nonlinear}, denoted by $\Bar{P}$. It then follows from the Lyapunov theorem \cite{khalil2002nonlinear} that the unique equilibrium point $x^*=\Bar{A}^{-1}\Bar{b}$ of ODE (\ref{ode:linear_sa}) is exponentially stable \cite{haddad2011nonlinear}, which in turn guarantees the asymptotic convergence of the SA algorithm by the ODE method \cite{borkar2009stochastic}.

Next, we show that the linear SA presented in Eq.\eqref{algo:linear_sa} can be equivalently modeled as a contractive SA in the form of Eq. (\ref{eq:stochastic-approximation}) with multiplicative noise. As a result, Theorem \ref{thm:multi} allows us to establish maximal concentration bounds (with Weibull tails) for linear SA. Let $\beta= \lambda^{-1}_{\max}(\Bar{A}^\top \Bar{P}\Bar{A})/2$. In this work, $\lambda_{\max}(\cdot)$ (respectively, $\lambda_{\min}(\cdot)$) returns the largest (respectively, smallest) eigenvalue of a symmetric matrix. Let $F_\beta:\mathcal{Y}\times \mathbb{R}^d\mapsto\mathbb{R}^d$ be an operator defined as
\begin{align*}
    F_\beta(x,y)=\beta A(y)x-\beta b(y)+x,\quad \forall\;x\in\mathbb{R}^d,y\in\mathcal{Y}.
\end{align*}
Then, Eq. (\ref{algo:linear_sa}) can be equivalently written as
\begin{align*}
    x_{k+1}=x_k+\frac{\alpha_k}{\beta}(F_\beta(x_k,Y_k)-x_k),
\end{align*}
which is in the same form of Algorithm (\ref{eq:stochastic-approximation}) because we can absorb the constant $\beta$ into the stepsize. We next show that Assumptions \ref{as:contraction} -- \ref{as:multi} are satisfied in the context of linear SA. Let $\|\cdot\|_{\Bar{P}}$ be a norm defined as $\|x\|_{\Bar{P}}=(x^\top \Bar{P} x)^{1/2}$ for all $x\in\mathbb{R}^d$, where we recall that $\bar{P}$ is the unique positive definite solution of the Lyapunov equation $\Bar{A}^\top P+P\Bar{A}+I_d=0$. The proof of the following lemma can be found in Appendix \ref{pf:prop:linear_sa}.

\begin{lemma}\label{prop:linear_sa}
    Suppose that Assumptions \ref{as:linear_sa_boundedness} and \ref{as:Hurwitz} are satisfied, and $\beta= \lambda^{-1}_{\max}(\Bar{A}^\top \Bar{P}\Bar{A})/2$. Then, we have the following results.
    \begin{enumerate}[(1)]
        \item There exists $\Bar{\gamma}\in (0,1)$ such that the operator $\Bar{F}_\beta(\cdot)=\mathbb{E}[F_\beta(\cdot,Y_0)]$ is a $\Bar{\gamma}$ -- contraction mapping with respect to $\|\cdot\|_{\Bar{P}}$.
        \item It holds for all $k\geq 0$ that $\mathbb{E}[\Bar{F}_\beta(x_k,Y_k)\mid \mathcal{F}_k]=\Bar{F}_\beta(x_k)$, where $\mathcal{F}_k$ is the $\sigma$-algebra generated by $\{x_0,Y_0,Y_1,\cdots,Y_{k-1}\}$.
        \item There exists $\hat{\sigma}>0$ such that $\|F_\beta(x_k,Y_k)-\Bar{F}_\beta(x_k)\|_{\Bar{P}}\leq \hat{\sigma}(\|x\|_{\Bar{P}}+1)$ for all $k\geq 0$.
    \end{enumerate}
\end{lemma}

Lemma \ref{prop:linear_sa} enables us to apply Theorem \ref{thm:multi} to establish the maximal concentration bound of the linear SA algorithm presented in Eq. (\ref{algo:linear_sa}). The result is presented in the following.

\begin{theorem}\label{thm:linear_sa}
Suppose that Assumptions \ref{as:linear_sa_boundedness} and \ref{as:Hurwitz} are satisfied, and $\alpha_k=\alpha\beta/(k+h)$ with appropriately chosen $\alpha$ and $h$. Then, the same bound in Theorem \ref{thm:multi} holds here. As a result, there exists an integer $m_\ell>0$ such that for any $\epsilon>0$ and $\delta\in (0,1)$, to achieve $\|x_k-x^*\|_{\Bar{P}}\leq \epsilon$ with probability at least $1-\delta$, the iteration complexity is $\tilde{\mathcal{O}}((1+\log^{m_\ell}(1/\delta))\epsilon^{-2})$.
\end{theorem}

Due to the wide applications of linear SA, there are many existing results analyzing its concentration behavior \cite{durmus2021stability,durmus2021tight,durmusAveraged}. However, concentration bounds (with tails decaying faster than polynomials) were established only in the case where the matrix $A(Y_k)$ is a.s. Hurwitz, otherwise only polynomial tail bounds were obtained \cite{durmus2021tight}. Theorem \ref{thm:linear_sa} states that the iterates generated by a linear SA (where the random matrices $\{A(Y_k)\}$ are Hurwitz in expectation but not a.s. Hurwitz) enjoy an $\tilde{\mathcal{O}}(1/k)$ rate of convergence with a Weibull tail, which offers a significant improvement over the results in the literature.

\subsection{Reinforcement Learning} In this section, we discuss the applicability of our results in the context of RL.
In recent years, RL has been deployed to solve many practical problems involving sequential decision-making. An RL problem is usually modeled as a Markov decision process (MDP) \cite{sutton2018reinforcement}. However, the environmental model, including the transition dynamics and the reward function, is unknown to the agent. Therefore, an RL agent has to learn to make decisions by actively interacting with the environment to collect information. 

Mathematically, consider an infinite horizon discounted MDP defined by a finite state space $\mathcal{S}$, a finite action space $\mathcal{A}$, a set of transition probability matrices $\{P_a\in\mathbb{R}^{|\mathcal{S}|\times|\mathcal{S}|}\mid a\in\mathcal{A}\}$, a reward function $\mathcal{R}:\mathcal{S}\times\mathcal{A}\mapsto[0,1]$\footnote{Since we work with a finite MDP, assuming the reward function falls into the interval $[0,1]$ is without loss of generality}, and a discount factor $\gamma\in (0,1)$. Note that in RL, the transition probabilities and the reward function are unknown to the agent. Given a stationary policy $\pi:\mathcal{S}\mapsto\Delta^{|\mathcal{A}|}$, where $\Delta^{|\mathcal{A}|}$ stands for the $|\mathcal{A}|$ -- dimensional probability simplex, its value function $V^\pi:\mathcal{S}\mapsto\mathbb{R}$ and $Q$-function $Q^\pi:\mathcal{S}\times\mathcal{A}\mapsto\mathbb{R}$ are defined as $V^\pi(s)=\mathbb{E}_\pi\left[\sum_{k=0}^\infty\gamma^k\mathcal{R}(S_k,A_k)\;\middle|\;S_0=s\right]$ for all $s$ and $Q^\pi(s,a)=\mathbb{E}_\pi\left[\sum_{k=0}^\infty\gamma^k\mathcal{R}(S_k,A_k)\;\middle|\; S_0=s,A_0=a\right]$ for all $(s,a)$,
where we use the notation $\mathbb{E}_\pi[\,\cdot\,]$ to mean that the actions are selected based on the policy $\pi$. 
Since we work with a finite MDP, the value function can be alternatively viewed as a vector living in the $|\mathcal{S}|$-dimensional Euclidean space, where $|\mathcal{S}|$ stands for the cardinality of the state space $\mathcal{S}$, similarly for the $Q$-functions.

In RL, the prediction problem (also known as the policy evaluation problem) refers to the problem of estimating the value function $V^\pi$ (or the $Q$-function $Q^\pi$) for a given policy $\pi$, and the control problem refers to finding an optimal policy $\pi^*$ so that its value function $V^*$ (or equivalently, its $Q$-function $Q^\pi$) is uniformly maximized, i.e., $V^*(s)\geq V^\pi(s)$ for all $s\in\mathcal{S}$ and policy $\pi$ (or equivalently, $Q^*(s,a)\geq Q^\pi(s,a)$ for all $(s,a)$ and policy $\pi$). From now on, we will study the prediction problem and the control problem in terms of the $Q$-functions. The same results for the value function can be established following a similar approach.

\subsubsection{TD-Learning for Prediction}
We first consider the prediction problem, which is usually solved with TD-learning \cite{sutton1988learning} and its variants. To present the algorithm, we first state an important result in MDP theory -- the Bellman equation, which states that $Q^\pi\in\mathbb{R}^{|\mathcal{S}||\mathcal{A}|}$ is the unique solution to the fixed-point equation $Q=\mathcal{H}^\pi(Q)$, where $\mathcal{H}^\pi:\mathbb{R}^{|\mathcal{S}||\mathcal{A}|}\mapsto\mathbb{R}^{|\mathcal{S}||\mathcal{A}|}$ is the Bellman operator defined as
\begin{align}\label{def:Bellman_operator_prediction}
    [\mathcal{H}^\pi(Q)](s,a)=\mathbb{E}_\pi\left[\mathcal{R}(S_0,A_0)+\gamma Q(S_1,A_1)\mid S_0=s,A_0=a\right]
\end{align}
for all $(s,a)\in\mathcal{S}\times \mathcal{A}$ and $Q\in\mathbb{R}^{|\mathcal{S}||\mathcal{A}|}$.
The fact that $Q^\pi=\mathcal{H}^\pi(Q^\pi)$ can be derived using the definition of the $Q$-function and the Markov property, and the uniqueness of the solution follows from $\mathcal{H}^\pi(\cdot)$ being a contraction mapping \cite{bertsekas1996neuro}. Moreover, it can be easily verified that $Q^\pi$ is the unique solution to the multi-step Bellman equation 
\begin{align}\label{def:n-Bellman_operator_prediction}
Q=\underbrace{\mathcal{H}^\pi\cdot\mathcal{H}^\pi\cdots \mathcal{H}^\pi}_{n\text{ times}}(Q):=\mathcal{H}^\pi_n(Q)
\end{align}
for any $n\geq 1$, which follows from $\mathcal{H}_n^\pi(\cdot)$ being a contraction mapping as long as $\mathcal{H}^\pi(\cdot)$ is a contraction mapping.

In view of the Bellman equation, a natural approach to solve the prediction problem is to perform the fixed-point iteration to solve $Q=\mathcal{H}^\pi(Q)$, the geometric convergence of which is guaranteed by the Banach fixed-point theorem \cite{banach1922operations}. However, carrying out such a fixed-point iteration would require complete knowledge of the model parameters to compute the conditional expectation in Eq. (\ref{def:Bellman_operator_prediction}). To overcome this challenge, TD-learning was proposed as a data-driven SA algorithm to solve the (multi-step) Bellman equation.

Let $\pi_b$ be the policy that the agent uses to interact with the environment to collect samples, commonly referred to as the behavior policy. Note that $\pi_b$ does not necessarily coincide with the target policy $\pi$, whose $Q$-function we aim to estimate. Suppose that $\pi_b(a \mid s) > 0$ for all $(s, a)$ and the induced Markov chain on states has a unique stationary distribution, denoted by $\kappa_b\in\Delta(\mathcal{S})$, which satisfies $\kappa_b(s)>0$ for all $s\in\mathcal{S}$. We impose the following assumption regarding our sample trajectory.
\begin{assumption}\label{as:TD}
The sample trajectory $\{(S_k^0,A_k^0,S_k^1,A_k^1,\cdots,S_k^n,A_k^n)\}_{k\geq 0}$ (where sample is an $n$-tuple) is an i.i.d. sequence such that $S_k^0\sim\kappa_b(\cdot)$, $A_k^i\sim \pi_b(\cdot\mid S_k^i)$, and $S_k^{i+1}\sim P_{A_k^i}(S_k^i,\cdot)$ for all $i\in \{0,1,\cdots,n-1\}$ and $k\geq 0$. 
\end{assumption}

The i.i.d. sampling has been widely employed in the existing literature for analytical tractability \cite{bhandari2018finite,doan2019finite,zeng2024fast,dalal2018finite}. Practically, suppose that the Markov chain $\{S_k\}_{k\geq 0}$ induced by $\pi_b$ is irreducible and aperiodic \cite{zou2019finite,khodadadian2021finite,chenziyi2022sample,chenzy2021sample,xu2021sample,wu2020finite,qiu2019finite} (which implies uniform ergodicity). The i.i.d. sampling can be approximately achieved by first letting the Markov chain $\{S_k\}$ evolve for a short time period (so that the distribution of $\{S_k\}$ is close to $\kappa_b$ due to the geometric mixing) and then collect one sample every once a while (so that the samples are nearly independent). Compared with the more natural Markovian sampling (where the agent uses each sample from a single trajectory of state-action pairs generated by using the behavior policy $\pi_b$ to interact with the environment), the i.i.d. sampling assumption does have limitations. An immediate future direction of this work is to investigate whether maximal concentration bounds with tails decaying faster than polynomials can be achieved for SA with biased updates, which would enable us to study RL with Markovian sampling.

Next, we will present a generic learning-based algorithm for solving the policy evaluation problem, which covers many existing variants of TD-learning as its special cases. Before that, we need to first introduce RL with function approximation, which is used to overcome the curse of dimensionality. 

The key idea of function approximation is to approximate the desired high-dimensional target ($Q^\pi$ in the case of policy evaluation) from a pre-specified function class. In this work, for analytical tractability, we consider linear function approximation. Let $\{\phi_i\}_{1\leq i\leq d}$ be a set of linearly independent basis vectors, where for each $i\in \{1,2,\cdots,d\}$, $\phi_i$ is an $|\mathcal{S}||\mathcal{A}|$-dimensional vector. Let $\phi(s,a)=(\phi_1(s,a),\phi_2(s,a),\cdots,\phi_d(s,a))\in\mathbb{R}^d$, which is usually called the feature associated with the state-action pair $(s,a)$. We assume without loss of generality that the basis vectors are normalized so that $\|\phi(s,a)\|_2\leq 1$ for all $(s,a)$. With the basis vectors introduced, in linear function approximation, the goal is to find a linear combination of the basis vectors $\sum_{i=1}^d\phi_iw_i$ (where $w\in\mathbb{R}^d$) to best approximate $Q^\pi$. As a side note, if $\{\phi_i\}_{1\leq i\leq d}$ are chosen as the canonical basis, then it reduces to the tabular setting.

Now, we are ready to present the generic algorithm. With the i.i.d. sample trajectory $\{(S_k^0,A_k^0,S_k^1,A_k^1,\cdots,S_k^n,A_k^n)\}_{k\geq 0}$ at hand, the agent iteratively updates the weight $w_k$ according to the following formula:
\begin{align}\label{algo:TD}
	w_{k+1}=\;&
	w_k+\alpha_k\phi(S_k,A_k)\sum_{i=k}^{k+n-1}\gamma^{i-k}\prod_{j=k+1}^ic(S_k^j,A_k^j)\nonumber\\
	&\times \left(\mathcal{R}(S_k^i,A_k^i)+\gamma \rho(S_k^j,A_k^j)\phi(S_k^{i+1},A_k^{i+1})^\top w_k-\phi(S_k^i,A_k^i)^\top w_k\right),
\end{align}
where $c,\rho:\mathcal{S}\times \mathcal{A}\mapsto\mathbb{R}$ are called generalized importance sampling factors. To further illustrate the algorithm and provide intuition, consider the following special cases.

\textbf{Case 1.} Suppose that we use the canonical basis vectors and $\pi_b=\pi$ (i.e., the behavior policy and the target policy coincides). Then, by choosing $c(s,a)=\rho(s,a)=1$ for all $(s,a)$, Eq. \eqref{algo:TD} reduces to the update equation for the standard on-policy $n$-step TD-learning, which is an SA algorithm for solving the $n$-step Bellman equation (\ref{def:n-Bellman_operator_prediction}) \cite{sutton2018reinforcement}.

\textbf{Case 2:} Suppose that we use the canonical basis vectors and $\pi_b\neq \pi$, i.e., we are in the off-policy setting. Then, by choosing $c(s,a)=\rho(s,a)=\pi(a\mid s)/\pi_b(a\mid s)$ for all $(s,a)$, Eq. \eqref{algo:TD} reduces to the update equation for the standard off-policy $n$-step TD-learning. Although choosing $c(s,a)=\rho(s,a)=\pi(a\mid s)/\pi_b(a\mid s)$ leads to an unbiased estimator for $Q^\pi$, the algorithm is well-known to suffer from a large variance (due to importance sampling) \cite{glynn1989importance}. Various generalized importance sampling factors are proposed to address this issue, leading to algorithms such as $Q^\pi(\lambda)$ \citep{harutyunyan2016q}, Tree-Backup$(\lambda)$ \citep{precup2000eligibility}, Retrace$(\lambda)$ \citep{munos2016safe}, and $Q$-trace \citep{chen2021finite}, all of which are in the form of Eq. (\ref{algo:TD}).

\textbf{Case 3:} Suppose that we are indeed in the function approximation setting where the span of $\{\phi_i\}_{1\leq i\leq d}$ forms a $d$-dimensional linear subspace of $\mathbb{R}^{|\mathcal{S}||\mathcal{A}|}$. Then, Eq. (\ref{algo:TD}) covers the popular $n$-step TD-learning with linear function approximation in the on-policy and off-policy setting as its special cases \cite{tsitsiklis1997analysis,chen2022finite,chen2022sample}.

To establish the concentration bound of the generic TD-learning algorithm presented in Eq. (\ref{algo:TD}), we show that Eq. (\ref{algo:TD}) can be equivalently formulated as a linear SA. For any $y=(s^0,a^0,s^1,a^1,\cdots,s^n,a^n)\in \mathcal{Y}:=\mathcal{S}^n\times \mathcal{A}^n$, let $A:\mathcal{Y}\mapsto \mathbb{R}^{d\times d}$ be defined as 
\begin{align*}
    A(y)=\phi(s^0,a^0)\sum_{i=0}^{n-1}\gamma^{i}\prod_{j=1}^ic(s^j,a^j)\left(\gamma \rho(s^j,a^j)\phi(s^{i+1},a^{i+1})-\phi(s^i,a^i) \right)^\top,
\end{align*}
and let $b:\mathcal{Y}\mapsto \mathbb{R}^d$ be defined as
\begin{align*}
    b(y)=-\phi(s^0,a^0)\sum_{i=0}^{n-1}\gamma^{i}\prod_{j=1}^ic(s^j,a^j)\mathcal{R}(s^i,a^i).
\end{align*}
Then, the update equation (\ref{algo:TD}) can be equivalently written as
\begin{align*}
    w_{k+1}=w_k+\alpha_k(A(Y_k)w_k-b(Y_k)),
\end{align*}
where $\{Y_k=(S_k^0,A_k^0,\cdots,S_k^n,A_k^n)\}_{k\geq 0}$ is a sequence of i.i.d. random variables. Next, we show that all assumptions needed to apply Theorem \ref{thm:linear_sa} are satisfied in the context of TD-learning. Since Assumptions \ref{as:linear_sa_iid} and \ref{as:linear_sa_boundedness} are automatically satisfied due to the finiteness of the set $\mathcal{Y}$ and the i.i.d. sampling, we only need to verify Assumption \ref{as:Hurwitz}. The proof of the following lemma is presented in Appendix \ref{pf:le:TD}.

\begin{lemma}\label{le:TD}
    The following results hold.
    \begin{enumerate}[(1)]
        \item In the on-policy setting (i.e., $\pi_b=\pi$), the matrix $\bar{A}$ is always Hurwitz.
    \item In the off-policy setting (i.e., $\pi_b\neq \pi$), with appropriately chosen generalized importance sampling ratios  $c(\cdot,\cdot),\rho(\cdot,\cdot)$ and the parameter $n$\footnote{The explicit requirement is presented Appendix \ref{pf:le:TD}.}, the matrix $\bar{A}$ is Hurwitz.
    \end{enumerate}
\end{lemma}

Now, applying Theorem \ref{thm:linear_sa} to the generic TD-learning algorithm presented in Eq. (\ref{algo:TD}), we obtain a maximal concentration bound that has an $\tilde{\mathcal{O}}(1/k)$ rate of convergence and a Weibull tail. 

Due to the popularity of RL, there are many results in the literature studying the mean-square bounds and concentration bounds of on-policy and off-policy TD-learning under either i.i.d. sampling or Markovian sampling \cite{bhandari2018finite,srikant2019finite,dalal2018finite,chandak2023concentration,chandak2022concentration,chen2022sample,chen2021GB}. The closest work to ours are \cite{dalal2018finite,chandak2023concentration,chandak2022concentration}. Specifically, in \cite{dalal2018finite} they obtain maximal concentration bounds for TD$(0)$ with linear function approximation, which has linear multiplicative noise. However, their bounds are only valid for large enough iterates (with the threshold depending on the probability $\delta$), and their convergence rate can be worse than $\tilde{\mathcal{O}}(1/k)$. The authors in \cite{chandak2023concentration,chandak2022concentration} provide maximal concentration bounds for on-policy TD-learning with linear function approximation and Markovian sampling. However, they require $\|F(x_k,Y_k)\|_2\leq c_1\|x\|_2+c_2$ with $c_1\in (0,1)$ and their bound starts to hold only when $K$ is large enough (with the threshold being random). In the off-policy setting, to the best of our knowledge, there are no results on high-probability bounds in the literature.

\subsubsection{$Q$-Learning for Control}\label{subsubsec:Q-learning}

Moving to the control problem, the goal is to find an optimal policy $\pi^*$ such that its associated $Q$-function $Q^*$ is such that $Q^*(s,a)\geq Q^\pi(s,a)$ for all $(s,a)$ and $\pi$. One of the most popular algorithms for solving the control problem is $Q$-learning, which is our focus here.

Let $\mathcal{H}:\mathbb{R}^{|\mathcal{S}||\mathcal{A}|}\mapsto\mathbb{R}^{|\mathcal{S}||\mathcal{A}|}$ be the Bellman optimality operator defined as 
\begin{align*}
    [\mathcal{H}(Q)](s,a)=\mathcal{R}(s,a)+\gamma\mathbb{E}\left[\max_{a'\in\mathcal{A}}Q(S_{k+1},a')\;\middle|\; S_k=s,A_k=a\right]
\end{align*}
for all $Q\in\mathbb{R}^{|\mathcal{S}||\mathcal{A}|}$ and $(s,a)$ \cite{bertsekas1996neuro}. It is well known that $Q^*$ is the unique solution to the fixed-point equation $Q^*=\mathcal{H}(Q^*)$. In addition, once $Q^*$ is obtained, an optimal policy can be obtained by choosing actions greedily based on $Q^*$, i.e., any policy $\pi$ satisfying $\{a\mid \pi(a\mid s)>0\} \subseteq \arg\max_{a'\in\mathcal{A}}Q^*(s,a')$ is an optimal policy. See \cite{bertsekas1996neuro,puterman1995markov} for more details.

The celebrated $Q$-learning algorithm can be viewed as a data-driven SA algorithm designed to solve $Q=\mathcal{H}(Q)$. Let $\pi_b$ be the behavior policy used to collect samples. Suppose that the Markov chain $\{S_k\}$ induced by $\pi_b$ has a unique stationary distribution $\kappa_b\in\Delta(\mathcal{S})$ satisfying $\kappa_b(s)>0$ for all $s\in\mathcal{S}$. Similarly to TD-learning, we impose the following i.i.d. assumption on the sample trajectory.

\begin{assumption}\label{as:Q-learning}
    The sample trajectory $\{(S_k,A_k,S_k')\}_{k\geq 0}$ (where each sample is an $3$-tuple) is an i.i.d. sequence such that $S_k\sim \kappa_b(\cdot)$, $A_k\sim \pi_b(\cdot|S_k)$, and $S_k'\sim P_{A_k}(S_k,\cdot)$ for all $k\geq 0$. 
\end{assumption}

As illustrated in the previous section, the i.i.d. sampling assumption has been commonly imposed in the literature \cite{na2024finite,lee2024final,wainwright2019variance}. Moreover, when the Markov chain $\{S_k\}$ induced by $\pi_b$ is irreducible and aperiodic, hence uniformly ergodic, there is a procedure to approximately achieve i.i.d. sampling. That being said, learning from a single trajectory of Markovian samples is a more practical and natural setup \cite{even2003learning,li2021q,li2024q}, which requires studying SA with biased updates and is a future direction of this work.

With the sample trajectory at hand, the $Q$-learning algorithm iteratively updates an estimate $Q_k$ of $Q^*$ according to the following formula:
\begin{align*}
    Q_{k+1}(S_k,A_k)=Q_k(S_k,A_k)+\alpha_k(\mathcal{R}(S_k,A_k)+\gamma \max_{a'\in\mathcal{A}}Q_k(S_k',a')-Q_k(S_k,A_k))
\end{align*}
for all $k\geq 0$, where $Q_0$ is initialized arbitrarily but satisfies $\|Q_0\|_\infty\leq 1/(1-\gamma)$. 

We next remodel $Q$-learning in the form of the SA algorithm presented in Eq. (\ref{eq:stochastic-approximation}). Let $F:\mathbb{R}^{|\mathcal{S}||\mathcal{A}|}\times \mathcal{S}\times \mathcal{A}\times \mathcal{S}\mapsto \mathbb{R}^{|\mathcal{S}||\mathcal{A}|} $ be an operator defined as
\begin{align*}
    [F(Q,s_0,a_0,s_1)](s,a)
    =\mathds{1}_{\{(s_0,a_0)=(s,a)\}}\big(\mathcal{R}(s_0,a_0)+\gamma \max_{a'\in\mathcal{A}}Q(s_1,a')-Q(s_0,a_0)\big)+Q(s,a)
\end{align*}
for all $(s,a)$ and $(Q,s_0,a_0,s_1)$. Then the update equation of $Q$-learning can be equivalently written as
\begin{align*}
    Q_{k+1}=Q_k+\alpha_k (F(Q_k,S_k,A_k,S_k')-Q_k)
\end{align*}
for all $k\geq 0$,
which is in the same form of the SA algorithm in Eq. (\ref{eq:stochastic-approximation}) with $x_k$ being $Q_k$ and $Y_k$ being the triple $(S_k,A_k,S_k')$.

Next, we will use our results on SA with sub-Gaussian additive noise to establish the maximal concentration bounds of $Q$-learning. We start by verifying in the following lemma that Assumptions \ref{as:contraction}, \ref{as:unbiased}, and \ref{ass:sub-Gaussian} are satisfied in the context of $Q$-learning.
Let $D_b$ be an $|\mathcal{S}||\mathcal{A}|$ by $ |\mathcal{S}||\mathcal{A}|$ diagonal matrix with diagonal components $\{\kappa_b(s)\pi_b(a|s)\}_{(s,a)\in\mathcal{S}\times \mathcal{A}}$. Denote the minimum diagonal entry of  $D_b$ by $D_{b,\min}$. Let $\mathcal{F}_k$ be the $\sigma$-algebra generated by $\{S_i,A_i,S_i'\}_{0\leq i\leq k-1}$. Note that $Q_k$ is measurable with respect to $\mathcal{F}_k$. The proof of the following lemma is presented in Appendix \ref{pf:prop:Q-learning}.

\begin{lemma}\label{prop:Q-learning}
    The operator $\bar{F}(\cdot):=\mathbb{E}[F(\cdot,S_k,A_k,S_k')]$ is explicitly given as $\bar{F}(Q)=D_b\mathcal{H}(Q)+(I_{|\mathcal{S}||\mathcal{A}|}-D_b)Q$ for all $Q\in\mathbb{R}^{|\mathcal{S}||\mathcal{A}|}$.
    In addition, we have the following results.
    \begin{enumerate}[(1)]
        \item $\bar{F}(\cdot)$ is a $\hat{\gamma}_c$-contraction mapping with respect to $\|\cdot\|_\infty$, where $\hat{\gamma}_c=1-D_{b,\min}(1-\gamma)$.
        \item $\bar{F}(Q_k)=\mathbb{E}[F(Q_k,S_k,A_k,S_k')\mid \mathcal{F}_k]$ for all $k\geq 0$.
        \item Assumption \ref{ass:sub-Gaussian} holds with $\bar{\sigma}=4/(1-\gamma)$ and $c_d=1$.
    \end{enumerate}
\end{lemma}

Lemma \ref{prop:Q-learning} enables us to apply Theorem \ref{thm:additive} to get maximal concentration bound of $Q$-learning. The result is presented in the following theorem, the proof of which can be found in Appendix \ref{pf:thm:Q-learning}.

\begin{theorem}\label{thm:Q-learning}
Suppose that Assumption \ref{as:Q-learning} is satisfied and $\alpha_k=\alpha/(k+h)$, where $\alpha>2/(1-\hat{\gamma}_c)$ and $h$ is appropriately chosen. Then, for any $K\geq 0$ and $\delta\in (0,1)$, with probability at least $1-\delta$, we have for all $k\geq K$ that
\begin{align*}
    \|Q_k-Q^*\|_\infty^2\leq\;&c_q\left[\frac{\log(1/\delta)}{k+h}+\left(\frac{h}{k+h}\right)^{(1-\hat{\gamma}_c)\alpha/2}+\frac{1+\log((k+1)/K^{1/2})}{k+h}\right],
\end{align*}
    where $c_q=\frac{\log(|\mathcal{S}||\mathcal{A}|)}{D_{b,\min}^3(1-\gamma)^5}$.
\end{theorem}

In view of Theorem \ref{thm:Q-learning}, the sample complexity to achieve $\|Q_k-Q^*\|_\infty\leq \epsilon$ is $\tilde{\mathcal{O}}(D_{b,\min}^{-3}(1-\gamma)^{-5}\epsilon^{-2})$. Due to performing asynchronous update, the convergence rate naturally depends on the minimum component $D_{b,\min}$ of the stationary distribution on the Markov chain $\{(S_k,A_k)\}$. Such a quantity is at best inverse proportional to the size of the state-action space. The fact that the sample complexity depends on $1/(1-\gamma)$ is intuitive because as $\gamma$ increases, the agent should look further into the future when making decisions, which makes the problem more challenging. Similar dependence has been observed in the existing study of $Q$-learning \cite{chen2023lyapunov,li2024q,lee2024final,even2003learning,beck2012error,beck2013improved,wainwright2019stochastic,wainwright2019variance} and other algorithms such as policy gradient \citep{mei2020global,li2021softmax}.

\section{Conclusion}\label{sec:conclusion}

In this paper, we establish maximal concentration bounds for general contractive SA with additive and multiplicative noise. Specifically, we show that the sample paths remain in a cone (with a decaying radius) with high probability. Moreover, we showcase how these general bounds can be applied to linear SA and various RL algorithms. Methodologically, to overcome the challenge of having unbounded iterates, we develop a novel bootstrapping argument, where we start with a potentially loose bound and iteratively improve it to obtain a tighter one. The key steps involve bounding the log-MGF of a modified version of the generalized Moreau envelope of the convergence error and carefully constructing supermartingales to obtain maximal bounds.

\textit{Future Work.} Our main results require the noise sequence to be conditionally unbiased. However, in practical applications, many times the sample trajectory can have biased perturbations. For example, in RL, suppose that the agent interacts with the environment to collect a single trajectory of samples. Then, the sample trajectory forms a Markov chain, which has biased perturbation. Extending our result to the case where the operator can have biased perturbations is an immediate future direction of this work. Another direction is to extend our result to the more challenging but also practically relevant setting of SA with multiple timescales, i.e., the SA algorithm maintains a set of iterates and updates each one of them using potentially order-wise different stepsizes. On the application side, we would like to see if our results can be applied to other algorithms beyond linear SA and RL.

\section*{Acknowledgement}
We would like to thank Prof. R. Srikant from the University of Illinois at Urbana-Champaign for
the insightful comments about using the telescoping technique to establish maximal concentration
bounds.

\bibliographystyle{apalike}
\bibliography{references}

\newpage
\begin{center}
    {\LARGE\bfseries Appendices}
\end{center}

\appendix

\section{Proof of Technical Results in Support of Theorem \ref{thm:multi}}

\subsection{Proof of Lemma \ref{le:help1}}\label{pf:le:help1}
Since the proof for the case where $\beta_1=0$ is trivial, we will only focus on the case where $\beta_1\neq 0$. It can be easily shown by induction that $w_k> 0$ for all $k\geq 0$. Using the numerical inequality $1+x\leq e^x$ for all $x\in\mathbb{R}$, we have for all $k\geq 0$ that
\begin{align*}
    w_{k+1}\leq e^{\beta_1\alpha_k}w_k+\beta_2\alpha_k.
\end{align*}
Multiplying both sides of the previous inequality by $e^{-\beta_1\sum_{i=0}^k\alpha_i}$ , we obtain
\begin{align*}
    e^{-\beta_1\sum_{i=0}^k\alpha_i}w_{k+1}\leq\;& e^{-\beta_1\sum_{i=0}^{k-1}\alpha_i}w_k+\beta_2\alpha_k e^{-\beta_1\sum_{i=0}^k\alpha_i}\\
    =\;& e^{-\beta_1\sum_{i=0}^{k-1}\alpha_i} w_k+\frac{\beta_2}{\beta_1}(1+\beta_1\alpha_k-1) e^{-\beta_1\sum_{i=0}^k\alpha_i}\\
    \leq\;& e^{-\beta_1\sum_{i=0}^{k-1}\alpha_i}w_k+\frac{\beta_2}{\beta_1}(e^{\beta_1 \alpha_k}-1) e^{-\beta_1\sum_{i=0}^k\alpha_i}\\
    \leq\;& e^{-\beta_1\sum_{i=0}^{k-1}\alpha_i}w_k+\frac{\beta_2}{\beta_1}(e^{-\beta_1\sum_{i=0}^{k-1}\alpha_i}-e^{-\beta_1\sum_{i=0}^k\alpha_i}).
\end{align*}
By telescoping, we have
\begin{align*}
    e^{-\beta_1\sum_{i=0}^{k-1}\alpha_i}w_k\leq w_0+\frac{\beta_2}{\beta_1}(1-e^{-\beta_1\sum_{i=0}^{k-1}\alpha_i}),
\end{align*}
which implies
\begin{align*}
    w_k\leq e^{\beta_1\sum_{i=0}^{k-1}\alpha_i}w_0+\frac{\beta_2}{\beta_1}(e^{\beta_1\sum_{i=0}^{k-1}\alpha_i}-1).
\end{align*}
This proves the case where $\beta_1>0$. When $\beta_1<0$, note that in this case we have $e^{\beta_1\sum_{i=0}^{k-1}\alpha_i}\in (0,1)$. Therefore, we have from the previous inequality that
\begin{align*}
    w_k\leq \;& e^{\beta_1\sum_{i=0}^{k-1}\alpha_i}w_0+\frac{\beta_2}{\beta_1}(e^{\beta_1\sum_{i=0}^{k-1}\alpha_i}-1)\\
= \;&e^{\beta_1\sum_{i=0}^{k-1}\alpha_i}\left(w_0+\frac{\beta_2}{\beta_1}\right)-\frac{\beta_2}{\beta_1}\\
    \leq \;&\begin{dcases}
        w_0,&w_0+\frac{\beta_2}{\beta_1}\geq 0,\\
        -\frac{\beta_1}{\beta_1},&w_0+\frac{\beta_2}{\beta_1}< 0,
    \end{dcases}\\
    \leq \;&w_0-\frac{\beta_2}{\beta_1}.
\end{align*}
The proof is now complete.

\subsection{Proof of Proposition \ref{prop:Bound-log-MGF}}\label{pf:le:multi_solving_recursion}
Repeatedly using Eq. (\ref{prop:Bound-log-MGF-eq1}), we have for all $k\geq 0$ that
\begin{align}
	Z_k\leq\;& \underbrace{\exp\left(-\frac{\alpha D_0/2-1}{\alpha}\sum_{i=0}^{k-1}\alpha_i\right)}_{T_5}Z_0\nonumber\\
	&+2D_2(1+\|x^*\|_c)^2\underbrace{\sum_{i=0}^{k-1}\alpha_i^2\lambda_i\exp\left(-\frac{\alpha D_0/2-1}{\alpha}\sum_{j=i+1}^{k-1}\alpha_j\right)}_{T_6}.\label{eq:multi_T5T6}
\end{align}
It remains to bound the terms $T_5$ and $T_6$. We first consider the term $T_5$. Note that for any non-increasing function $f:\mathbb{R}\mapsto\mathbb{R}$ and $n_1\leq n_2$, we have
\begin{align*}
    \int_{n_1}^{n_2+1}f(x)dx\leq \sum_{i=n_1}^{n_2}f(i)\leq \int_{n_1-1}^{n_2}f(x)dx.
\end{align*}
Therefore, we have
\begin{align*}
	\sum_{i=0}^{k-1}\alpha_i=\sum_{i=0}^{k-1}\frac{\alpha}{i+h}
	\geq \int_{0}^{k}\frac{\alpha}{x+h}dx
	=\alpha\log\left(\frac{k+h}{h}\right).
\end{align*}
It follows that
\begin{align*}
	T_5\leq \exp\left(-(\alpha D_0/2-1)\log\left(\frac{k+h}{h}\right)\right)=\left(\frac{h}{k+h}\right)^{\alpha D_0/2-1}.
\end{align*}
As for the term $T_6$, we have
\begin{align*}
	T_6=\;&\sum_{i=0}^{k-1}\alpha_i^2\lambda_i\exp\left(-\frac{\alpha D_0/2-1}{\alpha}\sum_{j=i+1}^{k-1}\alpha_j\right)\\
	\leq \;&\sum_{i=0}^{k-1}\alpha_i^2\lambda_i\exp\left(-\frac{\alpha D_0/2-1}{\alpha}\alpha\log\left(\frac{k+h}{i+1+h}\right)\right)\\
	=\;&\sum_{i=0}^{k-1}\alpha_i^2\lambda_i\left(\frac{i+1+h}{k+h}\right)^{\alpha D_0/2-1}\\
	=\;&\left(\frac{1}{k+h}\right)^{\alpha D_0/2-1}\sum_{i=0}^{k-1}\alpha_i^2\lambda_i(i+1+h)^{\alpha D_0/2-1}.
\end{align*}
To proceed, since $\lambda_k=\theta\alpha_k^{-1}T_k(\delta)^{-1}$ and $\|x_0-x^*\|_c^2\leq T_0(\delta)\leq T_k(\delta)$, we have
\begin{align*}
	\alpha_k^2\lambda_k=\frac{\theta\alpha_k}{T_k(\delta)}\leq \frac{\theta \alpha_k}{T_0(\delta)}\leq \frac{\theta \alpha_k}{\|x_0-x^*\|_c^2}.
\end{align*}
It follows that
\begin{align*}
	T_6
	\leq \;&\left(\frac{1}{k+h}\right)^{\alpha D_0/2-1}\frac{\theta}{\|x_0-x^*\|_c^2}\sum_{i=0}^{k-1}\frac{\alpha}{i+h}(i+1+h)^{\alpha D_0/2-1}\\
	= \;&\left(\frac{1}{k+h}\right)^{\alpha D_0/2-1}\frac{\theta}{\|x_0-x^*\|_c^2}\sum_{i=0}^{k-1}\frac{\alpha(i+1+h)}{i+h}(i+1+h)^{\alpha D_0/2-2}\\
	\leq  \;&2\alpha\left(\frac{1}{k+h}\right)^{\alpha D_0/2-1}\frac{\theta}{\|x_0-x^*\|_c^2}\sum_{i=0}^{k-1}(i+1+h)^{\alpha D_0/2-2}.
\end{align*}
Similarly, using integration to bound the summation, we have
\begin{align*}
	\sum_{i=0}^{k-1}\frac{1}{(i+1+h)^{2-\alpha D_0/2}}\leq\;& \begin{dcases}
		\frac{1}{\alpha D_0/2-1}(k+h)^{\alpha D_0/2-1},&\alpha \in (2/D_0,4/D_0),\\
		k,&\alpha =4/D_0,\\
		\frac{e}{\alpha D_0/2-1}(k+h)^{\alpha D_0/2-1},&\alpha>4/D_0.
	\end{dcases}
\end{align*}
It follows that
\begin{align*}
	T_6\leq \frac{2\alpha e \theta}{(\alpha D_0/2-1)\|x_0-x^*\|_c^2}.
\end{align*}
Using the upper bounds we obtained for the terms $T_5$ and $T_6$ in Eq. (\ref{eq:multi_T5T6}) , we have
\begin{align*}
	Z_k\leq  Z_0\left(\frac{h}{k+h}\right)^{\alpha D_0/2-1}+\frac{4\alpha e \theta D_2}{\alpha D_0/2-1}\frac{(1+\|x^*\|_c)^2}{\|x_0-x^*\|_c^2}.
\end{align*}
This proves Eq. (\ref{prop:Bound-log-MGF-eq2}) of Proposition \ref{prop:Bound-log-MGF}.

\subsection{Proof of Proposition \ref{prop:ville}}\label{pf:prop:ville}
For any $\epsilon>0$ and $K\geq 0$, we have
\begin{align}
	&\mathbb{P}\left(\sup_{k\geq K}\left\{\lambda_k\mathds{1}_{E_k(\delta)} M(x_k-x^*)-D_3\sum_{i=0}^{k-1}\alpha_i\right\}> \epsilon\right)\nonumber\\
	= \;&\mathbb{P}\left(\sup_{k\geq K}\left\{\exp\left(\lambda_k\mathds{1}_{E_k(\delta)} M(x_k-x^*)-D_3\sum_{i=0}^{k-1}\alpha_i\right)\right\}> e^\epsilon\right)\nonumber\\
	\leq \;&\mathbb{E}\left[\exp\left(\lambda_K\mathds{1}_{E_K(\delta)} M(x_K-x^*)-D_3\sum_{i=0}^{K-1}\alpha_i-\epsilon\right)\right]\label{eq:Ville-1}\\
	\leq \;&\exp\left(Z_0\left(\frac{h}{K+h}\right)^{\alpha D_0/2-1}+\frac{4\alpha e D_2 \theta}{\alpha D_0/2-1}\frac{(1+\|x^*\|_c)^2}{\|x_0-x^*\|_c^2}-D_3\sum_{i=0}^{K-1}\alpha_i-\epsilon\right),\label{eq:Ville-2}
\end{align}
where Eq. (\ref{eq:Ville-1}) follows from Ville’s maximal inequality and Eq. (\ref{eq:Ville-2}) follows from Proposition \ref{prop:Bound-log-MGF}. 
Let 
\begin{align*}
	\delta'=\exp\left(Z_0\left(\frac{h}{K+h}\right)^{\alpha D_0/2-1}+\frac{4\alpha e D_2 \theta}{\alpha D_0/2-1}\frac{(1+\|x^*\|_c)^2}{\|x_0-x^*\|_c^2}-D_3\sum_{i=0}^{K-1}\alpha_i-\epsilon\right),
\end{align*}
which implies
\begin{align*}
	\epsilon=\log(1/\delta')+Z_0\left(\frac{h}{K+h}\right)^{\alpha D_0/2-1}+\frac{4\alpha e D_2 \theta}{\alpha D_0/2-1}\frac{(1+\|x^*\|_c)^2}{\|x_0-x^*\|_c^2}-D_3\sum_{i=0}^{K-1}\alpha_i.
\end{align*}
Therefore, with probability at least $1-\delta'$, we have
\begin{align}
    &\sup_{k\geq K}\left\{\lambda_k\mathds{1}_{E_k(\delta)} M(x_k-x^*)-D_3\sum_{i=0}^{k-1}\alpha_i\right\}\nonumber\\
    \leq \;& \log(1/\delta')+Z_0\left(\frac{h}{K+h}\right)^{\alpha D_0/2-1}+\frac{4\alpha e D_2 \theta}{\alpha D_0/2-1}\frac{(1+\|x^*\|_c)^2}{\|x_0-x^*\|_c^2}-D_3\sum_{i=0}^{K-1}\alpha_i.\label{eq:sup1}
\end{align}
Our last step is to bound $Z_0$. Observe that
\begin{align*}
	Z_0=\lambda_0M(x_k-x^*)
	=\frac{\theta\|x_k-x^*\|_M^2}{2\alpha_0 T_0(\delta)}
	\leq \frac{\theta \|x_0-x^*\|_M^2}{2\alpha_0 \|x_0-x^*\|_c^2}
	\leq \frac{\theta}{2\alpha_0\ell_{cM}^2}
	\leq \frac{D_0}{16\alpha_0D_1\ell_{cM}^2},
\end{align*}
where we used $M(x)=\frac{1}{2}\|x\|_M^2$ (cf. Lemma \ref{prop:Moreau}), $\|x_0-x^*\|_c^2\leq T_0(\delta)$, and $\theta=D_0\|x_0-x^*\|_c^2/[8D_1((1+\|x^*\|_c)^2+\|x_0-x^*\|_c^2)]$. Using the bound we obtained for $Z_0$ in Eq. (\ref{eq:sup1}) yields the result.

we have with probability at least $1-\delta'$:
\begin{align*}
	\sup_{k\geq K}\left\{\lambda_k\mathds{1}_{E_k(\delta)} M(x_k-x^*)\right\}
	\leq \;&\log(1/\delta')+\frac{D_0}{16\alpha_0D_1\ell_{cM}^2}\left(\frac{h}{K+h}\right)^{\alpha D_0/2-1}\\
	&+\frac{4\alpha e D_2 \theta}{\alpha D_0/2-1}\frac{(1+\|x^*\|_c)^2}{\|x_0-x^*\|_c^2}+D_3\sum_{i=K}^{k-1}\alpha_i.
\end{align*}
The last step is to evaluate $\sum_{i=K}^{k-1}\alpha_i$ and connect $M(x_k-x^*)$ with $\|x_k-x^*\|_c^2$. Since
\begin{align*}
	\sum_{i=K}^{k-1}\alpha_i=\sum_{i=K}^{k-1}\frac{\alpha}{i+h}\leq \int_{K-1}^{k-1}\frac{\alpha}{x+h}dx=\alpha \log\left(\frac{k-1+h}{K-1+h}\right)
\end{align*}
and
\begin{align*}
	M(x_k-x^*)=\frac{1}{2}\|x_k-x^*\|_M^2\geq  \frac{1}{2u_{cM}^2}\|x_k-x^*\|_c^2\tag{Lemma \ref{prop:Moreau} (3)},
\end{align*}
we have with probability at least $1-\delta'$ that 
\begin{align*}
	\sup_{k\geq K}\left\{\lambda_k\mathds{1}_{E_k(\delta)} \|x_k-x^*\|_c^2\right\}
	\leq \;&2u_{cM}^2\log(1/\delta')+\frac{2u_{cM}^2D_0}{16\alpha_0D_1\ell_{cM}^2}\left(\frac{h}{K+h}\right)^{\alpha D_0/2-1}\\
	&+\frac{8u_{cM}^2\alpha e D_2 \theta}{\alpha D_0/2-1}\frac{(1+\|x^*\|_c)^2}{\|x_0-x^*\|_c^2}+2\alpha D_3 u_{cM}^2\log\left(\frac{k-1+h}{K-1+h}\right).
\end{align*}

\subsection{Proof of Claim \ref{claim:finish-bootstrapping}}\label{pf:le:first_bootstrap}
Let
\begin{align*}
    D_4=\;&\frac{D_0((1+\|x^*\|_c)^2}{8D_1((1+\|x^*\|_c)^2+\|x_0-x^*\|_c^2)},\;c_1=\frac{(32D_1(1+D_4)u_{cM}^{2})^m(D^2+\sigma^2 D_4)}{D_0^m D^2},\\
    c_2=\;&\frac{D_0}{16\alpha_0D_1\ell_{cM}^2},
    c_3=\frac{8\alpha e D_2D_4}{\alpha D_0-2},\;c_4=\alpha D_3,\;\text{ and }
    c_1'=\frac{32u_{cM}^2D_1 (1+\sigma D_4\alpha^2 )(D_4+1)}{D_0}.
\end{align*}
We start with Proposition \ref{prop:worst_case_bound}, which states that 
\begin{align*}
	\mathbb{P}(\|x_k-x^*\|_c^2\leq B_k(D)^2,\forall\;k\geq 0)=1,
\end{align*}
where
\begin{align*}
	B_k(D)=\begin{dcases}
		\left(\frac{k-1+h}{h-1}\right)^{\alpha D}\left(\|x_0-x^*\|_c+\frac{\sigma(1+\|x^*\|_c)}{D}\right),&D> 0,\\
		\|x_0-x^*\|_c+\sigma(1+\|x^*\|_c)\alpha \log\left(\frac{k-1+h}{h-1}\right),&D=0.
	\end{dcases} 
\end{align*}

\begin{enumerate}[(1)]
    \item We first consider the case where $D>0$. Recall that we have assumed (without loss of generality) that $2\alpha D$ is an integer and defined $m=2\alpha D+1$. Let $\delta_1,\delta_2,\cdots,\delta_m>0$ be such that $\sum_{i=1}^m\delta_i\leq 1$. Repeatedly using Proposition \ref{prop:bootstrapping} for $m$ times , we have
with probability at least $1-\sum_{i=1}^m\delta_i$ that 
\begin{align*}
	\|x_k-x^*\|_c^2
	\leq \;&\frac{\alpha_k^m B_k^2(D)}{\theta^m} \prod_{i=1}^{m-1}\bigg[2u_{cM}^2\log(1/\delta_i)+\frac{2u_{cM}^2D_0}{16\alpha_0D_1\ell_{cM}^2}\nonumber\\
	&+\frac{16u_{cM}^2\alpha e D_2 \theta}{\alpha D_0-2}\frac{(1+\|x^*\|_c)^2}{\|x_0-x^*\|_c^2}+2\alpha D_3 u_{cM}^2\log\left(\frac{k-1+h}{h-1}\right)\bigg]\\
	&\times \bigg[2u_{cM}^2\log(1/\delta_m)+\frac{2u_{cM}^2D_0}{16\alpha_0D_1\ell_{cM}^2}\left(\frac{h}{K+h}\right)^{\alpha D_0/2-1}\nonumber\\
	&+\frac{16u_{cM}^2\alpha e D_2 \theta}{\alpha D_0-2}\frac{(1+\|x^*\|_c)^2}{\|x_0-x^*\|_c^2}+2\alpha D_3 u_{cM}^2\log\left(\frac{k-1+h}{K-1+h}\right)\bigg]\\
	= \;&\frac{2^mu_{cM}^{2m}\alpha_k^m B_k^2(D)}{\theta^m} \prod_{i=1}^{m-1}\bigg[\log(1/\delta_i)+c_2+c_3+c_4 \log\left(\frac{k-1+h}{h-1}\right)\bigg]\\
	&\times \bigg[\log(1/\delta_m)+c_2\left(\frac{h}{K+h}\right)^{\alpha D_0/2-1}+c_3+c_4 \log\left(\frac{k-1+h}{K-1+h}\right)\bigg].
\end{align*}
By choosing $\delta_i=\delta/m$ for all $i\in \{1,2,\cdots,m\}$, the previous inequality reads
\begin{align*}
	\|x_k-x^*\|_c^2
	\leq \;&\frac{2^m u_{cM}^{2m}\alpha_k^m B_k^2(D)}{\theta^m} \bigg[\log(m/\delta)+c_2+c_3+c_4 \log\left(\frac{k-1+h}{h-1}\right)\bigg]^{m-1}\\
	&\times \bigg[\log(m/\delta)+c_2\left(\frac{h}{K+h}\right)^{\alpha D_0/2-1}+c_3+c_4 \log\left(\frac{k-1+h}{K-1+h}\right)\bigg].
\end{align*}
To proceed, observe that
\begin{align*}
    \frac{2^{m} u_{cM}^{2m}\alpha_k^m B_k^2(D)}{\theta^m}
    =\;&\frac{2^{4m} u_{cM}^{2m} D_1^m(1+D_4)^m\alpha_k \alpha^{m-1}}{D_0^m (k+h)^{m-1}}\left(\frac{k-1+h}{h-1}\right)^{m-1}\\
    &\times \left(1+\sigma D_4^{1/2}/D\right)^2\|x_0-x^*\|_c^2\\
    \leq \;&\frac{32^m u_{cM}^{2m} D_1^m(1+D_4)^m \left(1+\sigma^2 D_4/D^2\right)}{D_0^m }\alpha_k\|x_0-x^*\|_c^2\\
    =\;&c_1\alpha_k\|x_0-x^*\|_c^2.
\end{align*}
Therefore, for any $\delta>0$ and $K\geq 0$, we have with probability at least $1-\delta$ that 
\begin{align*}
    \|x_k-x^*\|_c^2
	\leq \;&c_1\alpha_k\|x_0-x^*\|_c^2 \bigg[\log(m/\delta)+c_2+c_3+c_4 \log\left(\frac{k-1+h}{h-1}\right)\bigg]^{m-1}\\
	&\times \bigg[\log(m/\delta)+c_2\left(\frac{h}{K+h}\right)^{\alpha D_0/2-1}+c_3+c_4 \log\left(\frac{k-1+h}{K-1+h}\right)\bigg]
\end{align*}
for all $k\geq K$. The proof of Theorem \ref{thm:multi} (1) is complete.
    \item When $D=0$, using Proposition \ref{prop:bootstrapping} one time , we have with probability at least $1-\delta$ that
\begin{align*}
	\|x_k-x^*\|_c^2\leq \;&\frac{\alpha_kB_k(D)^2}{\theta}\bigg[2u_{cM}^2\log(1/\delta)+\frac{2u_{cM}^2D_0}{16\alpha_0D_1\ell_{cM}^2}\left(\frac{h}{K+h}\right)^{\alpha D_0/2-1}\nonumber\\
	&+\frac{16u_{cM}^2\alpha e D_2D_4}{\alpha D_0-2}+2\alpha D_3 u_{cM}^2\log\left(\frac{k-1+h}{K-1+h}\right)\bigg]\\
	\leq \;&\frac{16D_1 (1+\sigma D_4\alpha^2 )(D_4+1)}{D_0}\|x_0-x^*\|_c^2\alpha_k\log^2\left(\frac{k-1+h}{h-1}\right)\\
	&\times \bigg[2u_{cM}^2\log(1/\delta)+\frac{2u_{cM}^2D_0}{16\alpha_0D_1\ell_{cM}^2}\left(\frac{h}{K+h}\right)^{\alpha D_0/2-1}\nonumber\\
	&+\frac{16u_{cM}^2\alpha e D_2D_4}{\alpha D_0-2}+2\alpha D_3 u_{cM}^2\log\left(\frac{k-1+h}{K-1+h}\right)\bigg]
\end{align*}
for all $k\geq K$. Therefore, using the definition of $\{c_i\}_{2\leq i\leq 4}$ and $c_1'$, the previous inequality reads
\begin{align*}
	\|x_k-x^*\|_c^2
	\leq \;&c_1'\|x_0-x^*\|_c^2\alpha_k\log^2\left(\frac{k-1+h}{h-1}\right)\\
	&\times \left[\log(1/\delta)+c_2\left(\frac{h}{K+h}\right)^{\alpha D_0/2-1}+c_3+c_4 \log\left(\frac{k-1+h}{K-1+h}\right)\right].
\end{align*}
The proof of Theorem \ref{thm:multi} (2) is complete.
\item When $D<0$, using Proposition \ref{prop:bootstrapping} one time , we have with probability at least $1-\delta$ that
\begin{align*}
	\|x_k-x^*\|_c^2\leq \;&\frac{\alpha_kB_k(D)^2}{\theta}\bigg[2u_{cM}^2\log(1/\delta)+\frac{2u_{cM}^2D_0}{16\alpha_0D_1\ell_{cM}^2}\left(\frac{h}{K+h}\right)^{\alpha D_0/2-1}\nonumber\\
	&+\frac{16u_{cM}^2\alpha e D_2D_4}{\alpha D_0-2}+2\alpha D_3 u_{cM}^2\log\left(\frac{k-1+h}{K-1+h}\right)\bigg]\\
	\leq \;&\frac{8\alpha_k D_1((1+\|x^*\|_c)^2+\|x_0-x^*\|_c^2)}{D_0\|x_0-x^*\|_c^2}\left(\|x_0-x^*\|_c-\frac{\sigma(1+\|x^*\|_c)}{D}\right)^2\\
	&\times \bigg[2u_{cM}^2\log(1/\delta)+\frac{2u_{cM}^2D_0}{16\alpha_0D_1\ell_{cM}^2}\left(\frac{h}{K+h}\right)^{\alpha D_0/2-1}\nonumber\\
	&+\frac{16u_{cM}^2\alpha e D_2D_4}{\alpha D_0-2}+2\alpha D_3 u_{cM}^2\log\left(\frac{k-1+h}{K-1+h}\right)\bigg]\\
 \leq \;& \frac{16 (D^2+\sigma^2)D_1}{D_0D^2}\frac{(\|x_0-x^*\|_c^2+(1+\|x^*\|_c)^2)^2}{\|x_0-x^*\|_c^4}\|x_0-x^*\|_c^2\alpha_k \\
 &\times \bigg[2u_{cM}^2\log(1/\delta)+\frac{2u_{cM}^2D_0}{16\alpha_0D_1\ell_{cM}^2}\left(\frac{h}{K+h}\right)^{\alpha D_0/2-1}\nonumber\\
	&+\frac{16u_{cM}^2\alpha e D_2D_4}{\alpha D_0-2}+2\alpha D_3 u_{cM}^2\log\left(\frac{k-1+h}{K-1+h}\right)\bigg]\\
 \leq \;&c_1''\|x_0-x^*\|_c^2\alpha_k\left[\log(1/\delta)+c_2\left(\frac{h}{K+h}\right)^{\alpha D_0/2-1}\right.\\
 &\left.+c_3+c_4 \log\left(\frac{k-1+h}{K-1+h}\right)\right],
\end{align*}
where 
\begin{align*}
    c_1''=\frac{16 (D^2+\sigma^2)D_1}{D_0D^2}\frac{(\|x_0-x^*\|_c^2+(1+\|x^*\|_c)^2)^2}{\|x_0-x^*\|_c^4}.
\end{align*}
The proof of Theorem \ref{thm:multi} (3) is complete.
\end{enumerate}

\subsection{Removing the Product of Logarithmic Factors}\label{ap:removing_log}

\begin{theoremp}{\ref*{thm:multi}$'$}\label{thm:improved}
Under the same assumptions for Theorem \ref{thm:multi} (1), for any $\delta\in (0,1)$ and $K\geq 0$, with probability at least $1-\delta$, we have for all $k\geq K$ that
\begin{align*}
    \|x_k-x^*\|_c^2\leq \;& \frac{c_1''\alpha \|x_0-x^*\|_c^2}{k+h} \left[\left(\log\left(\frac{m+1}{\delta}\right)\right)^m+1\right]\left[\log\left(\frac{m+1}{\delta}\right)\right.\\
    &\left.+c_2\left(\frac{h}{K+h}\right)^{\alpha D_0/2-1}+c_3+c_4\log\left(\frac{k-1+h}{K-1+h}\right)\right].
\end{align*}
\end{theoremp}
\begin{proof}[Proof of Theorem \ref{thm:improved}]
Recall from Proposition \ref{prop:blueprint} that we require the initial bound $\{T_k(\delta)\}$ to be non-decreasing to initialize the bootstrapping argument. However, the right-hand side of of the inequality in Theorem \ref{thm:multi} (2) will eventually be decreasing as the polynomial term will dominate the logarithmic terms when $k$ is large enough. To overcome this difficulty, define
\begin{align*}
    \tilde{T}_k(\delta)=\;&c_1\|x_0-x^*\|_c^2\sup_{0\leq k'\leq k}\left\{\alpha_{k'} \left[\log\left(\frac{m}{\delta}\right)+c_2+c_3+c_4\log\left(\frac{k'-1+h}{h-1}\right)\right]^m\right\}.
\end{align*}
for any $\delta>0$ and $k\geq 0$. Note that $T_k(\delta)$ is (by definition) a non-decreasing sequence, and Theorem \ref{thm:multi} (2) implies that
\begin{align}\label{eq:need_for_last_bootstrap}
    \mathbb{P}\left(\|x_k-x^*\|_c^2\leq  \tilde{T}_k(\delta),\;\forall\;k\geq 0\right)\geq 1-\delta.
\end{align}
Now that $\{\tilde{T}_k(\delta)\}$ is a non-decreasing sequence, performing an additional bootstrapping step to remove the product of the logarithmic terms proves Theorem \ref{thm:improved}. Next, we carry out the details.

We start with Eq. (\ref{eq:need_for_last_bootstrap}) and perform one step of bootstrapping using Proposition \ref{prop:bootstrapping}. For any $\delta,\delta'\in (0,1)$ and $K\geq 0$, with probability at least $1-\delta-\delta'$, we have for all $k\geq K$ that
\begin{align*}
	\|x_k-x^*\|_c^2
	\leq \;&\frac{\alpha_k \tilde{T}_k(\delta)}{\theta} \bigg[2u_{cM}^2\log(1/\delta')+\frac{2u_{cM}^2D_0}{16\alpha_0D_1\ell_{cM}^2}\left(\frac{h}{K+h}\right)^{\alpha D_0/2-1}\nonumber\\
	&+\frac{16u_{cM}^2\alpha e D_2 \theta}{\alpha D_0-2}\frac{(1+\|x^*\|_c)^2}{\|x_0-x^*\|_c^2}+2\alpha D_3 u_{cM}^2\log\left(\frac{k-1+h}{K-1+h}\right)\bigg]\\
	= \;&\frac{16c_1D_1(D_4+1)\alpha_k u_{cM}^2 }{D_0}\|x_0-x^*\|_c^2\\
	&\times \sup_{0\leq k'\leq k}\left\{\alpha_{k'} \left[\log\left(\frac{m}{\delta}\right)+c_2+c_3+c_4\log\left(\frac{k'-1+h}{h-1}\right)\right]^m\right\}\\
	&\times \left[\log(1/\delta')+c_2\left(\frac{h}{K+h}\right)^{\alpha D_0/2-1}+c_3+c_4\log\left(\frac{k-1+h}{K-1+h}\right)\right].
\end{align*}
Using the numerical inequality $(a+b)^{n+1}\leq 2^n(a^{n+1}+b^{n+1})$ for all $n\geq 0$ and $a,b>0$ , we have
\begin{align*}
    &\sup_{0\leq k'\leq k}\left\{\alpha_{k'} \left[\log\left(\frac{m}{\delta}\right)+c_2+c_3+c_4\log\left(\frac{k'-1+h}{h-1}\right)\right]^m\right\}\\
    \leq \;&2^{m-1}\sup_{0\leq k'\leq k}\left\{\alpha_{k'} \left[\left(\log\left(\frac{m}{\delta}\right)\right)^m+\left(c_2+c_3+c_4\log\left(\frac{k'-1+h}{h-1}\right)\right)^m\right]\right\}\\
    \leq \;&2^{m-1}\left(\log\left(\frac{m}{\delta}\right)\right)^m+2^{m-1}\sup_{k'\geq 0}\left\{\alpha_{k'} \left(c_2+c_3+c_4\log\left(\frac{k'-1+h}{h-1}\right)\right)^m\right\}\\
    =\;&2^{m-1}(1+c_5)\left[\left(\log\left(\frac{m}{\delta}\right)\right)^m+1\right],
\end{align*}
where 
\begin{align*}
    c_5:=\sup_{k'\geq 0}\left\{\alpha_{k'} \left(c_2+c_3+c_4\log\left(\frac{k'-1+h}{h-1}\right)\right)^m\right\}.
\end{align*}
As a result, for any $\delta,\delta'>0$ and $K\geq 0$, the following inequality holds with probability at least $1-\delta-\delta'$:
\begin{align*}
	\|x_k-x^*\|_c^2
	\leq  \;&\frac{16c_1D_1(D_4+1)\alpha_k u_{cM}^2 }{D_0}\|x_0-x^*\|_c^2\\
	&\times 2^{m-1}(1+c_5)\left[\left(\log\left(\frac{m}{\delta}\right)\right)^m+1\right]\\
	&\times \left[\log(1/\delta')+c_2\left(\frac{h}{K+h}\right)^{\alpha D_0/2-1}+c_3+c_4\log\left(\frac{k-1+h}{K-1+h}\right)\right]\\
	\leq  \;&\frac{8^{2m+1}D_1^{m+1}(1+D_4)^{m+1}u_{cM}^{2m+2}(1+c_5)}{D_0^{m+1}}\left(1+\frac{\sigma^2 D_4}{D^2}\right)\\
	&\times \alpha_k \|x_0-x^*\|_c^2\left[\left(\log\left(\frac{m}{\delta}\right)\right)^m+1\right]\\
	&\times \left[\log(1/\delta')+c_2\left(\frac{h}{K+h}\right)^{\alpha D_0/2-1}+c_3+c_4\log\left(\frac{k-1+h}{K-1+h}\right)\right]\\
	\leq  \;&c_1''\alpha_k \|x_0-x^*\|_c^2\left[\left(\log\left(\frac{m}{\delta}\right)\right)^m+1\right]\\
	&\times \left[\log(1/\delta')+c_2\left(\frac{h}{K+h}\right)^{\alpha D_0/2-1}+c_3+c_4\log\left(\frac{k-1+h}{K-1+h}\right)\right],
\end{align*}
where
\begin{align*}
    c_1''=\frac{8^{2m+1}D_1^{m+1}(1+D_4)^{m+1}u_{cM}^{2m+2}(1+c_5)}{D_0^{m+1}}\left(1+\frac{\sigma^2 D_4}{D^2}\right).
\end{align*}
The result follows by reassigning $\delta\leftarrow \delta m/(m+1)$ and $\delta'\leftarrow\delta/(m+1)$.
\end{proof}

Note that Theorem \ref{thm:improved} implies that there exists $C_1'>0$ such that
\begin{align*}
    \mathbb{P}(\sqrt{k+h}\;\| x_k - x^* \|_c> \epsilon ) < (m+1)\exp\left(-C_1'\epsilon^{2/(m+1)}\right),\quad \forall\;k\geq 0,\epsilon>0.
\end{align*}
Compared with Corollary \ref{corollary:multi} (2) of Theorem \ref{thm:multi}, we see that the rate is improved by a logarithmic factor but the tail is heavier.

\section{Proof of   Technical Results in Support of Theorem \ref{thm:impossibility}}

\subsection{Proof of Lemma \ref{le:example4}}\label{pf:le:example4}
For any $k\geq 0$, we have
\begin{align*}
	\mathbb{E}\left[\exp\left(\lambda (k+h)^{c_2} x_k^{c_1}\right)\right]=\;&\int_0^\infty\mathbb{P}\left(\exp\left(\lambda (k+h)^{c_2} x_k^{c_1}\right)>x\right)dx\\
	\leq  \;&e+\int_e^\infty\mathbb{P}\left(\exp\left(\lambda (k+h)^{c_2} x_k^{c_1}\right)>x\right)dx\\
	=\;&e+\int_e^\infty\mathbb{P}\left((k+h)^{c_2}x_k^{c_1}>\log(x)/\lambda\right)dx\\
	\leq \;&e+\int_e^\infty C_1 \exp\left(-\frac{C_2\log(x)}{ \lambda }\right)dx\\
	=\;&e+\int_e^\infty C_1x^{-\frac{C_2}{\lambda }}dx\\
	=\;&e+\frac{C_1\lambda }{\lambda -C_2}\left.x^{1-\frac{C_2}{\lambda}}\right|_e^\infty\\
    =\;&e+\frac{C_1\lambda }{C_2-\lambda}e^{1-\frac{C_2}{\lambda}}
\end{align*}
where the last line follows from $\lambda <C_2$.

\subsection{Proof of Lemma \ref{le:numerical}}\label{pf:le:numerical}
The derivative of $\ell(\cdot)$ is given by $\ell'(x)=e^x-c$, which is strictly positive when $x>\ln(c)$ and strictly negative when $x<\ln(c)$. Therefore, $\ell(\cdot)$ is a strictly decreasing function on $[0,\ln(c))$, and a strictly increasing function on $(\ln(c),\infty)$. In addition, since $\ell(0)=0$, $\ell(\ln(c))<0$, and $\lim_{x\rightarrow\infty}\ell(x)=\infty$, there equation $\ell(x)=0$ has a unique solution (denoted by $x_c$) on $(0,\infty)$. As a result, $\ell(x)\leq 0$ on $[0,x_c]$ and $\ell(x)\geq  0$ on $[x_c,\infty)$.

\section{Proof of Technical Results in Support of Theorem \ref{thm:additive}}
\subsection{Proof of Lemma  \ref{le:bound_MGF_additive}}\label{pf:le:additive_solving_recursion}

Repeatedly using Eq. (\ref{prop:additive_recursion-1}), we have for all $k\geq 0$ that
\begin{align}\label{eqeq:12}
    Z_k
    \leq \prod_{j=0}^{k-1} \frac{\alpha_j}{\alpha_{j+1}}\left(1-\frac{\Bar{D}_1\alpha_j}{2}\right)Z_0+\frac{\Bar{D}_1\Bar{D}_4}{4\Bar{D}_3}\sum_{i=0}^{k-1}\alpha_i\prod_{j=i+1}^{k-1} \frac{\alpha_j}{\alpha_{j+1}}\left(1-\frac{\Bar{D}_1\alpha_j}{2}\right).
\end{align}
We next bound the two terms on the right-hand side of the previous inequality. For the first term, we have
\begin{align*}
    \prod_{j=0}^{k-1} \frac{\alpha_j}{\alpha_{j+1}}\left(1-\frac{\Bar{D}_1\alpha_j}{2}\right)
    =\;& \frac{\alpha_0}{\alpha_k}\prod_{j=0}^{k-1} \left(1-\frac{\Bar{D}_1\alpha_j}{2}\right)\\
    =\;& \left(\frac{k+h}{h}\right)^z\prod_{j=0}^{k-1} \left(1-\frac{\Bar{D}_1\alpha}{2(k+h)^z}\right)\\
    \leq\;& \left(\frac{k+h}{h}\right)^z\exp\left(-\sum_{j=0}^{k-1}\frac{\Bar{D}_1\alpha}{2(k+h)^z}\right)\\
    \leq \;&\left(\frac{k+h}{h}\right)^z\exp\left(-\frac{\Bar{D}_1\alpha}{2} \int_0^k\frac{1}{(x+h)^z}dx\right)\\
    =\;&\begin{dcases}
    \left(\frac{h}{k+h}\right)^{\Bar{D}_1\alpha/2-1},&z=1,\\
    \left(\frac{k+h}{h}\right)^z\exp\left(-\frac{\Bar{D}_1\alpha}{2(1-z)}((k+h)^{1-z}-h^{1-z})\right),&z\in (0,1).
    \end{dcases}
\end{align*}
For the second term on the right-hand side of Eq. (\ref{eqeq:12}), 
we have
\begin{align}
    \frac{\Bar{D}_1\Bar{D}_4}{4\Bar{D}_3}\sum_{i=0}^{k-1}\alpha_i\prod_{j=i+1}^{k-1} \frac{\alpha_j}{\alpha_{j+1}}\left(1-\frac{\Bar{D}_1\alpha_j}{2}\right)
    =\;& \frac{\Bar{D}_1\Bar{D}_4}{4\Bar{D}_3} \sum_{i=0}^{k-1}\alpha_i\frac{\alpha_{i+1}}{\alpha_k}\prod_{j=i+1}^{k-1} \left(1-\frac{\Bar{D}_1\alpha_j}{2}\right)\nonumber\\
    \leq \;& \frac{\Bar{D}_1\Bar{D}_4}{4\Bar{D}_3 \alpha_k} \sum_{i=0}^{k-1}\alpha_i^2\prod_{j=i+1}^{k-1} \left(1-\frac{\Bar{D}_1\alpha_j}{2}\right),\label{eq:sum_product}
\end{align}
where the last line follows from $\{\alpha_k\}$ being a decreasing sequence. The term on the right-hand side of Eq. (\ref{eq:sum_product}) appears frequently in existing literature studying iterative algorithms, and has been analyzed in detail. For example, it was shown in \cite[Appendix A.3.7.]{chen2021finite} that
\begin{align*}
    \sum_{i=0}^{k-1}\alpha_i^2\prod_{j=i+1}^{k-1} \left(1-c' \alpha_j\right)\leq \begin{dcases}
    \frac{4e\alpha}{c'\alpha-1}\alpha_k,&z=1,\alpha>1/c',\\
    \frac{2\alpha_k}{c'},&z\in (0,1),\alpha>0, h\geq \left(\frac{4z}{\Bar{D}_1\alpha}\right)^{1/(1-z)}
    \end{dcases}
\end{align*}
for all $k\geq 0$, where $c'\in (0,1/\alpha_0)$ is any constant.
Therefore, we have
\begin{align*}
    \frac{\Bar{D}_1\Bar{D}_4}{4\Bar{D}_3 \alpha_k} \sum_{i=0}^{k-1}\alpha_i^2\prod_{j=i+1}^{k-1} \left(1-\frac{\Bar{D}_1\alpha_j}{2}\right)
    \leq \;&
    \begin{dcases}
    \frac{e\Bar{D}_1\Bar{D}_4\alpha}{\Bar{D}_3(\Bar{D}_1\alpha/2-1)},&z=1,\alpha>2/\Bar{D}_1,\\
    \frac{\Bar{D}_4}{\Bar{D}_3},&z\in (0,1),\alpha>0,
    \end{dcases}.
\end{align*}
Using the upper bounds we obtained for the two terms on the right-hand side of Eq. (\ref{eqeq:12}) , we have
\begin{align*}
    Z_k\leq 
    \begin{dcases}
    Z_0\left(\frac{h}{k+h}\right)^{\Bar{D}_1\alpha/2-1}+\frac{e\Bar{D}_1\Bar{D}_4\alpha}{\Bar{D}_3(\Bar{D}_1\alpha/2-1)},&z=1,\alpha>2/\Bar{D}_1,\\
    Z_0\left(\frac{k+h}{h}\right)^z\!\exp\left(-\frac{\Bar{D}_1\alpha ((k+h)^{1-z}\!-\!h^{1-z})}{2(1-z)}\right)\!+\!\frac{\Bar{D}_4}{\Bar{D}_3},&z\in (0,1), \alpha>0.
    \end{dcases}
\end{align*}
The proof is complete.

\subsubsection{The Approach Based on Supermartingale + Ville's Maximal Inequality}\label{ap:pf:S+V}
Let $\overline{M}(k)=\exp(\lambda_k M(x_k-x^*)-\Bar{D}_5\sum_{i=0}^{k-1}\alpha_i) $
for all $k\geq 0$, where $\Bar{D}_5=\Bar{D}_1\Bar{D}_4/(4\Bar{D}_3)$. Next, we show that $\{\overline{M}(k)\}$ is a supermartingale with respect to the filtration $\mathcal{F}_k$. It is clear that $\overline{M}(k)$ is measurable with respect to $\mathcal{F}_k$, and is finite in expectation (cf. Proposition \ref{prop:additive_recursion}). In addition, for any $k\geq 0$, we have by Eq. (\ref{eq:prop:additive_recursion}) that
\begin{align*}
    \mathbb{E}[\overline{M}(k+1)\mid\mathcal{F}_k]=\;&\mathbb{E}\left[\exp\left(\lambda_{k+1} M(x_{k+1}-x^*)-\Bar{D}_5\sum_{i=0}^k\alpha_i\right)\;\middle|\;\mathcal{F}_k\right]\\
    \leq \;&\exp\left(\lambda_k M(x_k-x^*)+\frac{\Bar{D}_1\Bar{D}_4}{4\Bar{D}_3}\alpha_k-\Bar{D}_5\sum_{i=0}^k\alpha_i\right)\\
    = \;&\exp\left(\lambda_k M(x_k-x^*)-\Bar{D}_5\sum_{i=0}^{k-1}\alpha_i\right)\\
    =\;& \overline{M}(k).
\end{align*}
Therefore, $\{\overline{M}(k)\}$ is a supermartingale.

Now we are ready to use Ville's maximal inequality. For any $K\geq 0$ and $\epsilon>0$, we have
\begin{align*}
    &\mathbb{P}\left(\sup_{k\geq K}\left\{\lambda_kM(x_k-x^*)-\Bar{D}_5\sum_{i=0}^{k-1}\alpha_i\right\}> \epsilon\right)\\
   = \;&\mathbb{P}\left(\sup_{k\geq K}\left\{\exp\left(\lambda_k M(x_k-x^*)-\Bar{D}_5\sum_{i=0}^{k-1}\alpha_i\right)\right\}> e^\epsilon\right)\\
   \leq \;&\mathbb{E}\left[\exp\left(\lambda_K M(x_K-x^*)-\Bar{D}_5\sum_{i=0}^{K-1}\alpha_i-\epsilon\right)\right]\\
   =\;&\exp\left(Z_K-\Bar{D}_5\sum_{i=0}^{K-1}\alpha_i-\epsilon\right).
\end{align*}
Let $\delta=\exp(Z_k-\Bar{D}_5\sum_{i=0}^{K-1}\alpha_i-\epsilon)$, which implies $\epsilon=\log(1/\delta)+Z_K-\Bar{D}_5\sum_{i=0}^{K-1}\alpha_i$.
Then, the previous inequality reads
\begin{align*}
   M(x_k-x^*)\leq \frac{1}{\lambda_k}\left(\log(1/\delta)+Z_K+\Bar{D}_5\sum_{i=K}^{k-1}\alpha_i\right),\;\forall\;k\geq K.
\end{align*}
with probability at least $1-\delta$.
Since
\begin{align*}
    \frac{1}{2\ell_{cM}^2}\|x_k-x^*\|_c^2\geq M(x_k-x^*)=\frac{1}{2}\|x_k-x^*\|_M^2\geq \frac{1}{2u_{cM}^2}\|x_k-x^*\|_c^2
\end{align*}
and
\begin{align*}
    \sum_{i=K}^{k-1}\alpha_i
    \leq\;& \int_{K-1}^{k-1}\frac{\alpha}{(x+h)^z}dx\\
    =\;&\begin{dcases}
    \alpha\log\left(\frac{k-1+h}{K-1+h}\right),&z=1,\\
    \frac{\alpha}{1-z}\left((k-1+h)^{1-z}-(K-1+h)^{1-z}\right),&z\in (0,1),
    \end{dcases}
\end{align*}
by using the explicit upper bound of $Z_K$ established in Proposition \ref{prop:additive_recursion}, we have the following results.
\begin{enumerate}[(1)]
    \item When $z=1$ and $\alpha>2/\Bar{D}_1$, for any $K\geq 0$, we have with probability at least $1-\delta$ that
\begin{align*}
    \|x_k-x^*\|_c^2\leq\;& \frac{16\Bar{D}_3u_{cM}^2\alpha\log(1/\delta)}{\Bar{D}_1(k+h)}+\frac{u_{cM}^2}{\ell_{cM}^2}\|x_0-x^*\|_c^2 \frac{h^{\Bar{D}_1\alpha/2}}{(k+h)(K+h)^{\Bar{D}_1\alpha/2-1}}\\
    &+\frac{16e\Bar{D}_4u_{cM}^2\alpha^2}{(\Bar{D}_1\alpha/2-1)(k+h)}+\frac{16\Bar{D}_3\Bar{D}_5u_{cM}^2\alpha^2}{\Bar{D}_1(k+h)}\log\left(\frac{k-1+h}{K-1+h}\right).
\end{align*}
\item When $z\in (0,1)$, for any $K\geq 0$, we have with probability at least $1-\delta$ that
\begin{align*}
    \|x_k-x^*\|_c^2
    \leq\;& \frac{16\Bar{D}_3u_{cM}^2\alpha\log(1/\delta)}{\Bar{D}_1(k+h)^z}+\frac{16\Bar{D}_4u_{cM}^2\alpha}{\Bar{D}_1(k+h)^z}\\
    &+\frac{u_{cM}^2}{\ell_{cM}^2}\|x_0-x^*\|_c^2\left(\frac{K+h}{k+h}\right)^z\exp\left(-\frac{\Bar{D}_1\alpha}{2(1-z)}((K+h)^{1-z}-h^{1-z})\right)\\
    &+\frac{16\Bar{D}_3\Bar{D}_5u_{cM}^2\alpha^2}{\Bar{D}_1(1-z)}\frac{(k-1+h)^{1-z}-(K-1+h)^{1-z}}{(k+h)^z}.
\end{align*}
\end{enumerate}

\section{Proof of Technical Results in Section \ref{sec:applications}}\label{pf:prop:linear_sa}

\subsection{Proof of Lemma \ref{prop:linear_sa}}
    
\begin{enumerate}[(1)]
\item It is easy to see that $\bar{F}_\beta(x)=\beta (\bar{A}x-\bar{b})+x$. Therefore, for any $x_1,x_2\in\mathbb{R}^d$, we have
    \begin{align*}
        \|\Bar{F}_\beta(x_1)-\Bar{F}_\beta(x_2)\|_{\Bar{P}}=\|(\beta \bar{A}+I)(x_1-x_2)\|_{\bar{P}}\leq \|\beta \Bar{A}+I\|_{\Bar{P}}\|x_1-x_2\|_{\Bar{P}}.
    \end{align*}
    It remains to bound the induced matrix norm $\|\beta \Bar{A}+I\|_{\Bar{P}}$.
    Observe that
    \begin{align}
        \|\beta \Bar{A}+I_d\|_{\Bar{P}}^2=\;&\max_{x:\|x\|_{\Bar{P}}=1}x^\top (\beta\Bar{A}+I_d)^\top \Bar{P} (\beta\Bar{A}+I_d)x\nonumber\\
        =\;&\max_{x:\|x\|_{\Bar{P}}=1}x^\top (\beta^2\Bar{A}^\top \Bar{P}\Bar{A}+\beta \Bar{A}^\top \Bar{P}+\beta\Bar{P}\Bar{A}+\Bar{P})x\nonumber\\
        =\;&1+\max_{x:\|x\|_{\Bar{P}}=1}x^\top (\beta^2\Bar{A}^\top \Bar{P}\Bar{A}-\beta I_d)x\label{eq1:lemma-linearSA}\\
        \leq \;&1+\max_{x:\|x\|_{\Bar{P}}=1}\|x\|_2^2 (\beta^2\lambda_{\max}(\Bar{A}^\top \Bar{P}\Bar{A} )-\beta)\nonumber\\
        =\;&1-\frac{\max_{x:\|x\|_{\Bar{P}}=1}\|x\|_2^2}{4\lambda_{\max}(\Bar{A}^\top \Bar{P}\Bar{A} )}\label{eq2:lemma-linearSA}\\
        \leq \;&1-\frac{1}{4\lambda_{\max}(\bar{P})\lambda_{\max}(\Bar{A}^\top \Bar{P}\Bar{A} )},\label{eq3:lemma-linearSA}
    \end{align}
    where Eq. (\ref{eq1:lemma-linearSA}) follows from the Lyapunov equation $\bar{A}^\top \bar{P}+\bar{P}\bar{A}+I_d=0$ and Eq. (\ref{eq2:lemma-linearSA}) follows from choosing $\beta= \lambda^{-1}_{\max}(\Bar{A}^\top \Bar{P}\Bar{A})/2$. It follows that $\Bar{F}_\beta(\cdot)$ is a contraction mapping with respect to $\|\cdot\|_{\Bar{P}}$, with contraction factor
    \begin{align*}
        \Bar{\gamma}=\left(1-\frac{1}{4\lambda_{\max}(\bar{P})\lambda_{\max}(\Bar{A}^\top \Bar{P}\Bar{A} )}\right)^{1/2}.
    \end{align*}
    \item Since $\{Y_k\}$ is an i.i.d. sequence, we have
    \begin{align*}
        \mathbb{E}[F_\beta(x_k,Y_k)\mid \mathcal{F}_k]=\beta \Bar{A}x_k-\beta \Bar{b}+x_k=\Bar{F}_\beta(x_k).
    \end{align*}
    \item Under Assumption \ref{as:linear_sa_boundedness}, we have for all $k\geq 0$ that
    \begin{align*}
        \|F_\beta(x_k,Y_k)-\Bar{F}_\beta(x_k)\|_2=\;&\beta\| (A(Y_k)-\Bar{A})x_k- (b(Y_k)-\Bar{b})\|_2\\
        \leq \;&\beta (\|A(Y_k)\|_2+\|\Bar{A}\|_2)\|x_k\|_2+\beta(\|b(Y_k)\|_2+\|\Bar{b}\|_2)\\
        \leq \;&2\beta A_{\max }\|x_k\|_2+2\beta b_{\max}.
    \end{align*}
    Therefore, we have for all $k\geq 0$ that
    \begin{align*}
        \|F_\beta(x_k,Y_k)-\Bar{F}_\beta(x_k)\|_{\Bar{P}}\leq\;& \lambda_{\max}(\Bar{P})\|F_\beta(x_k,Y_k)-\Bar{F}_\beta(x_k)\|_2\\
        \leq \;& 2\beta\lambda_{\max}(\Bar{P})(A_{\max}\|x_k\|_2+b_{\max})\\
        \leq \;& 2\beta\lambda_{\max}(\Bar{P})(A_{\max}\|x_k\|_{\Bar{P}}/\lambda_{\min}(\Bar{P})+b_{\max})\\
        \leq \;&2\beta \lambda_{\max}(\Bar{P})(A_{\max}/\lambda_{\min}(\Bar{P})+b_{\max})(\|x_k\|_{\Bar{P}}+1).
    \end{align*}
    The result follows by letting $\hat{\sigma}=2\beta \lambda_{\max}(\Bar{P})(A_{\max}/\lambda_{\min}(\Bar{P})+b_{\max})$.
\end{enumerate}

\subsection{Proof of Lemma \ref{le:TD}}\label{pf:le:TD}
In the on-policy setting, the Hurwitzness of $\bar{A}$ has been shown in \cite{tsitsiklis1997analysis} for the more challenging case of TD$(\lambda)$. The same conclusion holds for $n$-step TD-learning. 

In the off-policy setting, let 
\begin{align*}
    D_{\rho,\max}=\;&\max_{s\in\mathcal{S}}\sum_{a\in\mathcal{A}}\pi_b(a\mid s)\rho(s,a),& D_{\rho,\min}=\;&\min_{s\in\mathcal{S}}\sum_{a\in\mathcal{A}}\pi_b(a\mid s)\rho(s,a),\\
    D_{c,\max}=\;&\max_{s\in\mathcal{S}}\sum_{a\in\mathcal{A}}\pi_b(a\mid s)c(s,a),& D_{c,\min}=\;&\min_{s\in\mathcal{S}}\sum_{a\in\mathcal{A}}\pi_b(a\mid s)c(s,a).
\end{align*}
It has been shown in \cite[Proposition 4.1]{chen2022sample} that as long as $0<c(s,a)\leq \rho(s,a)$ for all $(s,a)$, $ D_{\rho,\max}<1/\gamma$, $\frac{\gamma (D_{\rho,\max}-D_{c,\min})}{(1-\gamma D_{c,\min})\sqrt{\min_{s,a}\kappa_b(s)\pi_b(a\mid s)}}<1$, and the parameter $n$ is chosen such that $1-\frac{(1-\gamma D_{\rho,\max})(1-(\gamma D_{c,\min})^n)}{1-\gamma D_{c,\min}}\leq \min_{s,a}\sqrt{\kappa_b(s)\pi_b(a\mid s)}$, the matrix $\bar{A}$ is Hurwitz.

\subsection{Proof of Lemma \ref{prop:Q-learning}}\label{pf:prop:Q-learning}

We first compute the explicit expression of $\bar{F}(\cdot)=\mathbb{E}[F(\cdot,S,A,S')]$, where $S\sim \kappa_b(\cdot)$, $A\sim \pi_b(\cdot|S)$, and $S'\sim P_A(S,\cdot)$. Using the explicit expression of $F(\cdot)$ , we have
\begin{align*}
    &[\bar{F}(Q)](s,a)\\
    =\;&\mathbb{E}\left[\mathds{1}_{\{(S,A)=(s,a)\}}(\mathcal{R}(S,A)+\gamma \max_{a'\in\mathcal{A}}Q(S',a')-Q(S,A))+Q(s,a)\right]\\
    =\;&\kappa_b(s)\pi_b(a|s)\left(\mathcal{R}(s,a)+\gamma \sum_{s'\in\mathcal{S}}P_a(s,s')\max_{a'\in\mathcal{A}}Q(s',a')\right)\\
    &+(1-\kappa_b(s)\pi_b(a|s))Q(s,a)\\
    =\;&(1-D_b((s,a),(s,a)))[\mathcal{H}(Q)](s,a)+(1-D_b((s,a),(s,a)))Q(s,a),
\end{align*}
where $\mathcal{H}(\cdot)$ is the Bellman optimality operator of the $Q$-function.
It follows that
\begin{align*}
    \bar{F}(Q)=D_b\mathcal{H}(Q)+(I-D_b)Q.
\end{align*}
\begin{enumerate}[(1)]
    \item Due to the $\ell_\infty$-norm contraction property of $\mathcal{H}(\cdot)$. The contraction property of $\bar{F}(\cdot)$ was established in existing literature \cite{chen2021finite}. Specifically, it was shown in \cite[Proposition 3.1]{chen2021finite} that $\bar{F}(\cdot)$ is a contraction operator with respect to $\ell_\infty$-norm, with contraction factor
    \begin{align*}
        \hat{\gamma}_c=1-D_{b,\min}(1-\gamma).
    \end{align*}
    \item The unbiasedness follows from our definition of $\bar{F}(\cdot)$ and the fact that $\{(S_k,A_k,S_k')\}$ is an i.i.d. sequence.
    \item To begin with, it was shown in the literature using an induction argument \cite{gosavi2006boundedness} that the iterates of $Q$-learning admit a deterministic uniform bound: $\|Q_k\|_\infty\leq 1/(1-\gamma)$. Now for any $(s,a)$ and $k\geq 0$, we have
    \begin{align*}
        F(Q_k,S_k,A_k,S_k')-\mathbb{E}[F(Q_k,S_k,A_k,S_k')\mid \mathcal{F}_k]
        = F(Q_k,S_k,A_k,S_k')-\bar{F}(Q_k).
    \end{align*}
    If $(S_k,A_k)\neq (s,a)$, we have
    \begin{align*}
        &|[F(Q_k,S_k,A_k,S_k')](s,a)-[\bar{F}(Q_k)](s,a)|\\
        =\;&|D_b((s,a),(s,a))(Q_k(s,a)-[\mathcal{H}(Q_k)](s,a))|\\
        \leq \;&\|Q_k-\mathcal{H}(Q_k)\|_\infty\\
        \leq \;&\|Q_k-Q^*\|_\infty+\|\mathcal{H}(Q^*)-\mathcal{H}(Q_k)\|_\infty\\
        \leq\;&2\|Q_k-Q^*\|_\infty\tag{the contraction property of $\mathcal{H}(\cdot)$}\\
        \leq \;&\frac{2}{1-\gamma}.
    \end{align*}
    If $(S_k,A_k)= (s,a)$, we have
    \begin{align*}
        &|[F(Q_k,S_k,A_k,S_k')](s,a)-[\bar{F}(Q_k)](s,a)|\\
        =\;&
        |\mathcal{R}(s,a)+\gamma\max_{a'\in\mathcal{A}}Q_k(S_k',a')-D_b((s,a),(s,a))[\mathcal{H}(Q_k)](s,a)\\
        &+(1-D_b((s,a),(s,a)))Q_k(s,a)|\\
        \leq  \;&\frac{4}{1-\gamma}.
    \end{align*}
    Combine these two cases , we have
    \begin{align*}
        \|F(Q_k,S_k,A_k,S_k')-\mathbb{E}[F(Q_k,S_k,A_k,S_k')\mid \mathcal{F}_k]\|_\infty\leq \frac{4}{1-\gamma}.
    \end{align*}
    As a result, for any random vector $v\in\mathbb{R}^d$ that is measurable with respect to $\mathcal{F}_k$, we have
    \begin{align*}
        \mathbb{E}[\langle F(Q_k,S_k,A_k,S_k')-\mathbb{E}[F(Q_k,S_k,A_k,S_k')\mid \mathcal{F}_k],v\rangle\mid \mathcal{F}_k] =0\quad \text{w.p. $1$}
    \end{align*}
    and
    \begin{align*}
        \langle F(Q_k,S_k,A_k,S_k')-\mathbb{E}[F(Q_k,S_k,A_k,S_k')\mid \mathcal{F}_k],v\rangle\leq \frac{4\|v\|_1}{1-\gamma}\quad \text{w.p. $1$}
    \end{align*}
    It then follows from the conditional Hoeffding's lemma that
    \begin{align*}
        \mathbb{E}\left[\exp\left(\lambda \langle F(x_k,Y_k)-\mathbb{E}\left[F(x_k,Y_k) | \mathcal{F}_k \right], v \rangle \right) \middle| \mathcal{F}_k \right]\leq \exp\left(\frac{8\lambda^2\|v\|_1^2}{(1-\gamma)^2}\right)
    \end{align*}
    for all $\lambda>0$.
    Moreover, we have
    \begin{align*}
        \mathbb{E}\left[\exp\left(\lambda\| F(x_k,Y_k)-\mathbb{E}\left[F(x_k,Y_k) | \mathcal{F}_k \right]\|_\infty^2\right) \middle| \mathcal{F}_k \right]\leq\;& \exp\left(\frac{16\lambda}{(1-\gamma)^2}\right)\\
        \leq\;& \left(1-\frac{32\lambda}{(1-\gamma)^2}\right)^{-1/2},
    \end{align*}
    where the last line follows from $(1-2x)^{1/2}\leq e^{-x}$ for all $x\geq 0$.
    Therefore, Assumption \ref{ass:sub-Gaussian} is satisfied with $\bar{\sigma}=4/(1-\gamma)$ and $c_d=1$.
\end{enumerate}

\subsection{Proof of Theorem \ref{thm:Q-learning}}\label{pf:thm:Q-learning}

We only need to find the dependence of the constants $\{\bar{c}_i\}_{1\leq i\leq 4}$ on the size of the state-action space and the effective horizon. Following \cite{chen2020finite} that initially proposed using the generalized Moreau envelope to study SA under arbitrary norm contraction, when
$\|\cdot\|_c=\|\cdot\|_\infty$, we choose $\|\cdot\|_s=\|\cdot\|_p$ with $p=2\log(|\mathcal{S}||\mathcal{A}|)$. It follows that $\ell_{cs}=1/\sqrt{e}$, $u_{cs}=1$, $L=p-1\leq 2\log(d)$, $\mu=((1+\hat{\gamma}_c)/2\hat{\gamma}_c)^2-1$, and $u_{cM,*}=\sqrt{e}$. 

For simplicity of notation, we use $a\lesssim b$ to mean that there exists a numerical constant $\tilde{C}$ such that $a\leq \tilde{C} b$. By definition of $\{\bar{D}_i\}_{1\leq i\leq 5}$, we have
\begin{align*}
    \bar{D}_0
    \lesssim\;&\frac{\mu(1-\gamma)^2}{\log(|\mathcal{S}||\mathcal{A}|)},\;
    \bar{D}_1\geq 1-\hat{\gamma}_c,\;\bar{D}_2
    \lesssim \frac{\log(|\mathcal{S}||\mathcal{A}|)}{1-\hat{\gamma}_c},\;\bar{D}_3\lesssim \frac{1}{(1-\gamma)^2},\\
    \bar{D}_4\lesssim \;&\frac{\log(|\mathcal{S}|
    |\mathcal{A}|)}{(1-\gamma)^2\mu},
\end{align*}
which in turn implies
\begin{align*}
    \bar{c}_1\lesssim\;& \frac{ (1+\mu) }{(1-\gamma)^2},\;\bar{c}_2\lesssim 1,\;\bar{c}_3\lesssim \frac{\log(|\mathcal{S}||\mathcal{A}|) }{(1-\gamma)^2(1-\hat{\gamma}_c)^3},\\
    \bar{c}_4\lesssim\;&\frac{\log(|\mathcal{S}||\mathcal{A}|) }{(1-\gamma)^2(1-\hat{\gamma}_c)^3},\;\bar{c}_5\lesssim\frac{\log(|\mathcal{S}||\mathcal{A}|)}{(1-\gamma)^2(1-\hat{\gamma}_c)^3}.
\end{align*}
The result follows by applying Theorem \ref{thm:additive} (1) to $Q$-learning.

\end{document}